\documentclass{article}

    \PassOptionsToPackage{numbers, compress}{natbib}
 \usepackage[preprint]{neurips_2026}

\usepackage[utf8]{inputenc} 
\usepackage[T1]{fontenc}    
\usepackage{hyperref}       
\usepackage{url}            
\usepackage{booktabs}       
\usepackage{amsfonts}       
\usepackage{nicefrac}       
\usepackage{microtype}      
\usepackage{xcolor}         

\usepackage{amsmath}
\usepackage{amssymb}
\usepackage{amsthm} 
\usepackage{abbreviations}
\usepackage{algorithm}    
\usepackage{algpseudocode}
\usepackage{graphicx}
\usepackage{subcaption}

\usepackage{thmtools}

\usepackage{hyperref} 

\addtocontents{toc}{\protect\setcounter{tocdepth}{0}}

\title{Block Optimism for Nonstationary Bandits with Latent Linear Dynamics}

\author{
Taehyun Hwang \qquad Hyunjun Choi \qquad  Heesang Ann \qquad  Min-hwan Oh
\\
\\
Seoul National University
\\
Soeul, South Korea
\\
\{\texttt{th.hwang, nschj1, sang3798, minoh}\}\texttt{@snu.ac.kr}
}

\begin{document}

\maketitle

\begin{abstract}
We study an endogenous nonstationary stochastic bandit problem with latent linear dynamics, where actions affect both immediate rewards and the future evolution of an unobserved latent state.
Rewards are bilinear in the current action and latent state, inducing history-dependent rewards and a nontrivial long-horizon planning problem.
The existing explore-then-commit approach achieves $\tilde{\mathcal{O}}(T^{2/3})$ regret by uniformly exploring to estimate the latent dynamics and then committing to an optimized open-loop action sequence.
We show that this rate can be improved via adaptive block-level optimism.
Our key step is a cyclic approximation: under stable dynamics, the infinite-memory reward process can be truncated, and the open-loop benchmark can be approximated by optimizing a finite-memory block-level proxy.
Building on this reduction, we propose a UCB-based block algorithm that maintains confidence sets for the truncated dynamics parameters and selects blocks optimistically.
We prove a regret bound of order $\tilde{\mathcal{O}}(\sqrt T)$, significantly improving over the previous $\tilde{\mathcal{O}}(T^{2/3})$ guarantee for the same model.
To the best of our knowledge, this is the first $\tilde{\mathcal{O}}(\sqrt T)$ regret guarantee for latent linear-dynamics bandits with bilinear reward observations and an open-loop action-sequence benchmark.
\end{abstract}

\section{Introduction}
Many sequential decision-making problems exhibit \emph{endogenous nonstationarity}: 
the learner's actions affect not only immediate rewards but also the future state of the environment.
This phenomenon naturally arises in recommender systems, online advertising, and personalized interventions, where repeated recommendations or interventions can change a user's latent preferences and thereby alter future rewards~\citep{kapoor2015just,kunaver2017diversity,kovacs2018rotating,schedl2018current}.
While classical stochastic bandits assume fixed reward distributions, much of the nonstationary bandit literature models nonstationarity as exogenous, for example through variation budgets, change points, or drifting parameters~\citep{besbes2014stochastic,cheung2019learning,russac2019weighted,kim2020randomized}.
Other action-induced models, such as rotting, recharging, blocking, rebounding, and delay-dependent bandits~\citep{levine2017rotting,seznec2019rotting,kleinberg2018recharging,basu2019blocking,leqi2021rebounding,cella2020stochastic}, capture temporal effects at the arm level, but are less suited for structured action spaces where one action can affect the future rewards of many related actions.

A recent work introduces a latent linear-dynamics bandit model in which actions drive an unobserved latent state with linear dynamics and rewards are bilinear in the current action and latent state~\citep{choi2026explorethencommit}.
This model captures cross-action endogenous effects through a latent state representation, but creates a coupled learning-and-planning problem: the learner must infer the latent input-output dynamics from scalar, temporally correlated rewards while choosing actions whose effects persist over the horizon.
The prior explore-then-commit (ETC) algorithm separates these tasks by first estimating the latent dynamics from random exploration and then committing to an optimized open-loop action sequence, yielding a \(\widetilde{\Ocal}(T^{2/3})\) regret guarantee~\citep{choi2026explorethencommit}.
This phase separation leads to a suboptimal exploration--exploitation trade-off, since data collected during exploration are not used for reward maximization, while observations obtained after commitment are not used to revise the plan.
This raises a natural question:
\begin{center}
\textit{Can we design an adaptive exploration strategy for latent linear-dynamics bandits that interleaves learning and planning and achieves sharper regret?}
\end{center}
  
We answer this question affirmatively by exploiting the stability of the latent dynamics.
In latent linear-dynamics bandits, past actions can continue to influence future rewards through the evolving latent state, so the induced reward process generally depends on the full action history.
Under stable dynamics, however, the influence of far-past actions decays geometrically, allowing this infinite-memory reward process to be approximated by a finite-memory model.
This observation enables a cyclic approximation that reduces the original long-horizon planning problem to a stationary block-level learning problem.
In this block-level problem, the proxy reward is approximately linear in a truncated representation of the latent dynamics.
Building on this reduction, we develop an adaptive optimistic learning strategy that updates its estimate at the end of each block and selects subsequent blocks optimistically.
This interleaves learning and planning throughout the horizon and yields a regret bound of order \(\widetilde{\Ocal}(\sqrt T)\), significantly improving over the \(\widetilde{\Ocal}(T^{2/3})\) ETC guarantee for the same model.    
Our main contributions are summarized as follows:
\begin{itemize}
    \item
    We first show that, under stable latent dynamics, the influence of far-past actions decays geometrically, allowing the infinite-memory reward process to be approximated by truncating its dependence to a finite window of recent actions. 
    Building on this approximation, we extend the cyclic approximation principle for finite-memory bandits~\citep{clerici2024linear, ann2026generalized} to our latent linear-dynamics setting, showing that the optimal open-loop benchmark can still be approximated by a cyclic block sequence despite the persistent influence of past actions through the latent state.

    \item 
    Building on this block-level reduction, we propose a block-level UCB algorithm that adaptively explores the latent dynamics while exploiting the current estimate.
    Unlike the previous ETC method~\citep{choi2026explorethencommit}, our algorithm does not require a long initial phase of uniform exploration.
    Instead, it maintains confidence sets for the truncated dynamics parameters, selects each block optimistically, and updates its estimate after every block, so that every block contributes simultaneously to exploration and exploitation.

    \item
    We prove that the proposed algorithm achieves a
    \(\widetilde{\Ocal}(\sqrt T)\)-order regret bound.
    We decompose each block-level reward observation into a linear signal and residual terms.
    The main challenge is that one residual component contains pre-block process noises and can therefore be correlated with the adaptively selected block, so it cannot be treated as standard martingale noise.
    We address this dependence by establishing a uniform bound on the corresponding vector-valued tail, which yields valid confidence sets under adaptive block selection and enables the optimism-based regret analysis.
    This significantly improves over the previous
    \(\widetilde{\Ocal}(T^{2/3})\) ETC guarantee for the same latent linear-dynamics bandit model~\citep{choi2026explorethencommit}.
    To the best of our knowledge, this is the first
    \(\widetilde{\Ocal}(\sqrt T)\)-order regret guarantee for latent linear-dynamics bandits with bilinear reward observations.

    \item
    We provide numerical experiments demonstrating that the proposed block-level optimistic algorithm performs favorably against baselines.
    The results illustrate the empirical benefit of adaptive block-level exploration.
\end{itemize}

\subsection{Related Work}
\paragraph{Nonstationary bandits.}
One common approach models nonstationarity as \textit{exogenous}, for instance through variation budgets~\citep{besbes2014stochastic}, switching or drifting reward parameters~\citep{cheung2019learning,russac2019weighted}, or nonstationary stochastic linear bandit models~\citep{kim2020randomized}.
These models capture environments whose reward functions change over time, but the changes are typically not caused by the learner's own actions.

Several lines of work study action-induced endogenous nonstationarity.
Rotting bandits~\citep{levine2017rotting,seznec2019rotting}, recharging bandits~\citep{kleinberg2018recharging}, blocking bandits~\citep{basu2019blocking}, rebounding bandits~\citep{leqi2021rebounding}, and bandits with delay-dependent payoffs~\citep{cella2020stochastic} model temporal effects such as satiation, recovery, blocking, or delay-dependent rewards.
These models typically describe how the reward of an individual arm changes as a function of its own past usage or delay.
Another line of work studies bandit models with finite memory, where the reward at each round depends on only a bounded window of past actions~\citep{clerici2024linear,ann2026generalized}.
In these models, the memory length is specified as part of the problem, so the influence of sufficiently distant actions vanishes by construction.

A closer structural analogue is dynamical linear bandits~\citep{mussi2023dynamical}, where actions drive an unobserved linear dynamical system.
Their reward is a linear combination of the latent state and the current action, and under stability the relevant benchmark reduces to the best constant action.
By contrast, the bilinear reward structure in our setting induces interactions between current and past actions, leading to a finite-horizon open-loop action-sequence benchmark.
The closest setting to ours is the latent linear-dynamics bandit model~\citep{choi2026explorethencommit}, which adopts this bilinear reward structure.
Unlike bandit models with finite memory, the resulting reward process has infinite memory because past actions can continue to influence future rewards through the latent state.
Compared with the previous ETC approach~\citep{choi2026explorethencommit}, we show that a UCB-based adaptive algorithm achieves improved statistical efficiency, yielding a sharper regret guarantee.

\paragraph{Online control and nonstochastic control.}
Our problem is also related to online control of linear dynamical systems.
A large literature studies regret minimization in \textit{Linear Quadratic Regulator} (LQR), \textit{Linear Quadratic Gaussian} (LQG), and related control problems, where the learner controls a linear system and incurs state-action costs~\citep{abbasi2011regret,dean2018regret,mania2019certainty,cohen2019learning,lale2021adaptive}.
More recent work on nonstochastic control allows adversarial perturbations and adversarially chosen convex losses~\citep{agarwal2019online,agarwal2019logarithmic,hazan2020nonstochastic,simchowitz2020improper}.
These works share with ours the use of stability and input-output representations of linear dynamical systems.
However, their objective is control with exogenous loss functions, typically convex or quadratic costs in the state, observation, and control, and the benchmark is a feedback controller class.
By contrast, our model is a bandit reward-maximization problem in which the only observation is a scalar bilinear reward, and the benchmark is an optimal finite-horizon open-loop action sequence.
This bilinear reward structure creates pairwise interactions between current and past actions after unrolling the dynamics, which do not arise in standard LQR/LQG or nonstochastic control formulations.

\section{Preliminaries}
\paragraph{Notations.}
For a vector \(\xb \in \RR^d\), we denote its $\ell_2$-norm by \(\|\xb\|_2\).
For a positive semidefinite matrix \(\Ab\), we define the \(\Ab\)-weighted norm by
\(\|\xb\|_{\Ab} = \sqrt{\xb^\top \Ab \xb}\).
For a matrix \(\Mb\), we use \(\|\Mb\|_2\) to denote its spectral norm, \(\rho(\Mb)\) its spectral radius, and \(\lambda_{\max}(\Mb)\) its largest eigenvalue when \(\Mb\) is symmetric.
We denote the Frobenius norm of a matrix \(\Mb\) by \(\|\Mb\|_F\).
We write \(\vect(\Mb)\) for the vectorization of \(\Mb\), and \(\otimes\) for the Kronecker product.
The vector \(\zero_d\) denotes the zero vector in \(\mathbb R^d\).
We use \(\Ocal(\cdot)\) to hide universal constant factors, and \(\widetilde{\Ocal}(\cdot)\) to additionally hide logarithmic factors.

\paragraph{Problem setup.}
We study an endogenous nonstationary stochastic bandit problem in which rewards are generated through controlled latent dynamics.
At each round \(t=1,\ldots,T\), the learner selects an action \(\ub_t \in \Ucal \subset \RR^p\) and observes a scalar reward \(r_t\), while the latent state \(\xb_t \in \RR^n\) remains unobserved.
The reward depends bilinearly on the current action and the latent state, and the action also affects the future evolution of the latent state.
Specifically, the system evolves as
\begin{equation}
    \xb_{t+1} = \Ab \xb_t + \Bb \ub_t + \wb_t,
    \qquad
    r_t = \ub_t^\top \Cb \xb_t + z_t,
    \label{eq:latent-dynamics}
\end{equation}
where \(\xb_1=\zero_n\), and \(\Ab\in\RR^{n\times n}\), \(\Bb\in\RR^{n\times p}\), and \(\Cb\in\RR^{p\times n}\) are \textit{unknown} matrices.
The process noise \(\{\wb_t\}_{t\ge1}\) and reward noise \(\{z_t\}_{t\ge1}\) are unobserved zero-mean noise processes.

For a deterministic open-loop action sequence \(\ub_{1:T}\in\Ucal^T\),
unrolling Eq.~\eqref{eq:latent-dynamics} gives
$
    r_t
    =
    \sum_{s=1}^{t-1}
    \ub_t^\top \Cb \Ab^{s-1}\Bb \ub_{t-s}
    +
    \sum_{s=1}^{t-1}
    \ub_t^\top \Cb \Ab^{s-1}\wb_{t-s}
    +
    z_t .
    \label{eq:unrolled-reward}
$
Therefore, for deterministic open-loop sequences, the expected reward at round
\(t\) is
\begin{equation}
    \mu_t(\ub_{1:t})
    =
    \sum_{s=1}^{\infty}
    \ub_t^\top \Gb_{s-1}\ub_{t-s}.
    \qquad
    \Gb_h := \Cb \Ab^h \Bb \, ,
    \label{eq:mean-reward}
\end{equation}
where we set \(\ub_s=\zero_p\) for all \(s\le0\).
For adaptive policies, we use \(\mu_t(\ub_{1:t})\) as the action-driven
mean-reward functional obtained by removing the process-noise and reward-noise
terms from the unrolled reward.
The open-loop benchmark is
$
    \opt_T
    :=
    \sup_{\ub_{1:T}\in\Ucal^T}
    \sum_{t=1}^T \mu_t(\ub_{1:t}).
$
For an adaptive policy \(\pi\), the realized action sequence
\(\ub_{1:T}^\pi\) is random. We define its \textit{pseudo-regret} with respect to the
open-loop benchmark as
\begin{equation*}
    \Rcal^{\mathrm{pseudo}}_T(\pi)
    :=
    \opt_T
    -
    \sum_{t=1}^T \mu_t(\ub_{1:t}^\pi).
    \label{eq:pseudo-regret}    
\end{equation*}
Thus, \(\Rcal^{\mathrm{pseudo}}_T(\pi)\) is a random variable through the action sequence
generated by the policy.
Now, we define the actual expected regret in terms of the realized rewards
collected by policy \(\pi\):
\begin{equation*}
    \regret_T(\pi)
    :=
    \opt_T
    -
    \EE_\pi\!\left[
        \sum_{t=1}^T r_t^\pi
    \right] .
    \label{eq:actual-regret}
\end{equation*}
Here the expectation is over the policy randomization and the process and reward
noises.

\begin{remark}
    Recent work on latent linear dynamics bandits~\citep{choi2026explorethencommit}
    analyzes expected regret defined in terms of realized rewards.
    In this paper, we first establish a high-probability bound on the pseudo-regret
    \(\Rcal^{\mathrm{pseudo}}_T(\pi)\), which is defined with respect to the action-driven
    mean-reward functional. For adaptive policies, this pseudo-regret need not
    coincide with actual expected regret, because actions may be correlated with
    past process noise through previously observed rewards. We then translate the
    pseudo-regret guarantee into an actual expected regret guarantee for our
    block-committed algorithm by bounding the resulting adaptation gap (Lemma~\ref{lem:blockwise-adaptation-gap}).
\end{remark}

\paragraph{Assumptions.}
We next state some standard modeling assumptions, commonly used in stochastic bandits and linear dynamical systems, under which we design and analyze our algorithm.

\begin{assumption}[Action set] \label{assm:bounded-action}
    The action set is compact, i.e., $\Ucal$ is closed and for all $\ub \in \Ucal$, $\| \ub \|_2 \le 1$.
\end{assumption}

\begin{assumption}[Stability] \label{assm:stability}
    For some $\rho\in(0,1)$ and $M_\rho\ge 1$, 
    $\|\Ab^h\|_2 \le M_\rho \rho^h$ for all $h\ge 0$.    
\end{assumption}

\begin{assumption}[Sub-Gaussian noise]
\label{assm:noise}
    Let \(\{\Fcal_t\}_{t\ge0}\) be a filtration containing the interaction history and the latent process. At each round \(t\), the action \(\ub_t\) is \(\Fcal_{t-1}\)-measurable, and the noises \((z_t,\wb_t)\) are realized after
    \(\ub_t\) is chosen.
    Conditional on \(\Fcal_{t-1}\), \(z_t\) and \(\wb_t\) are independent.
    The reward noise \(z_t\) is conditionally \(\sigma_z^2\)-sub-Gaussian, and the
    process noise \(\wb_t\) is conditionally sub-Gaussian with variance proxy
    \(\Sigma_w\succeq \zero_{n\times n}\): for every \(\vb\in\RR^n\) and every
    \(\alpha\in\RR\),
    \begin{equation*}
        \mathbb E\!\left[\exp(\alpha z_t)\mid \Fcal_{t-1}\right]
        \le
        \exp\!\left(\frac{\alpha^2\sigma_z^2}{2}\right) 
        \quad \text{and} \quad
        \mathbb E\!\left[\exp(\alpha \vb^\top \wb_t)\mid \Fcal_{t-1}\right]
        \le
        \exp\!\left(\frac{\alpha^2 \vb^\top \Sigmab_w \vb}{2}\right) \, .
    \end{equation*}
\end{assumption}

These assumptions are standard in stochastic bandit and linear dynamical system models. 
Assumption~\ref{assm:bounded-action} rules out unbounded rewards and ensures that the open-loop benchmark is well-defined over a compact action set. 
Assumption~\ref{assm:stability} imposes exponential stability of the latent dynamics, which is equivalent to Schur stability in finite dimensions. 
This type of stability assumption is common in learning and control of linear dynamical systems, since it keeps the state and rewards well behaved under persistent inputs and process noise, and ensures geometric decay of past actions and noises~\citep{simchowitz2020improper, oymak2021revisiting, lee2022improved, sun2023optimal, sattar2025learning, choi2026explorethencommit}. 
Assumption~\ref{assm:noise} is a standard light-tail condition used for high-probability least-squares and self-normalized martingale concentration arguments.

\section{Main Results}
\subsection{Cyclic Approximation}
A central difficulty in the latent-dynamics bandit model is that learning and planning are inherently coupled.
The learner must estimate the latent input-output dynamics from temporally correlated rewards, while also choosing actions whose effects propagate through the latent state over the horizon.
The ETC approach~\citep{choi2026explorethencommit} addresses this challenge by separating learning and planning: it first estimates the latent dynamics from exploratory data and then commits to an optimized open-loop sequence.
This phase separation yields a \(\widetilde{\Ocal}(T^{2/3})\) regret rate, reflecting a suboptimal exploration--exploitation trade-off.
Our goal is to avoid this separation by updating the dynamics estimate and revising the plan adaptively throughout the horizon.

To avoid this rigid phase separation, we exploit the stability of the latent
dynamics. Stability makes the effect of far-past actions decay geometrically,
allowing the infinite-memory mean-reward functional to be approximated by a
finite-memory proxy. We then use a block construction of length \(m+L\): the
first \(m\) actions serve as a warm-up segment, and the last \(L\) positions yield
informative rewards whose \(m\)-step histories remain inside the block. This
turns the long-horizon open-loop problem into a stationary block-level learning
problem, in which each block is a composite action and the proxy reward is linear
in the truncated Markov parameters.

\paragraph{Finite-memory approximation.}
We first formalize the finite-memory approximation of the mean reward defined in
Eq.~\eqref{eq:mean-reward}. For a memory cutoff \(m\ge 1\), define the
truncated mean reward as
\begin{equation*}
    \bar\mu_t(\ub_{1:t})
    :=
    \sum_{s=1}^{m}
    \ub_t^\top \Gb_{s-1}\ub_{t-s} \, ,
\end{equation*}
where we use the convention \(\ub_j=\zero_p\) for all \(j\le 0\).
Under the stability condition in Assumption~\ref{assm:stability}, the truncation
error decays geometrically. In particular, since \(\|\ub\|_2\le 1\) for every
\(\ub\in\Ucal\), we have
\begin{equation}
    \left|
        \mu_t(\ub_{1:t}) - \bar\mu_t(\ub_{1:t})
    \right|
    \le
    \Delta_m
    :=
    \sum_{h=m}^{\infty}\|\Gb_h\|_2
    \le
    \frac{
        \|\Cb\|_2\|\Bb\|_2 M_\rho
    }{
        1-\rho
    }
    \rho^m \, .
    \label{eq:truncation-error}
\end{equation}
We refer to \(\Delta_m\) as the finite-memory truncation error. The derivation
of Eq.~\eqref{eq:truncation-error} is provided in
Lemma~\ref{lem:finite-memory-truncation-error} in
Appendix~\ref{appx:cyclic-approximation}.    

\paragraph{Block-level proxy reward.}
The finite-memory representation allows us to localize the reward within short blocks.
Fix an informative suffix length \(L\ge 1\), and let the block length be \(m+L\).
For simplicity, we assume that \(T/(m+L)\) is an integer; a nonzero remainder contributes only an additive \(\Ocal(m+L)\) term.

For a block \(\Ub = (\vb_1,\ldots,\vb_{m+L}) \in \mathcal U^{m+L}\),
we define its proxy reward as the truncated reward accumulated over the last \(L\) positions of the block as
$
\bar r_m(\Ub)
    :=
    \sum_{j=m+1}^{m+L}
    \sum_{s=1}^{m}
    \vb_j^\top \Gb_{s-1}\vb_{j-s} \, .    
$
The first \(m\) positions serve as a warm-up segment so that every informative position \(j \in \{m+1,\ldots,m+L\}\) has its full \(m\)-step action history contained inside the same block.
This block construction reveals a \textit{linear reward structure}: \(\bar r_m(\Ub)\) can be written as a linear function of the truncated Markov parameters.
Define
\[
    \thetab_m^*
    :=
    \begin{bmatrix}
        \mathrm{vec}(\Gb_0)\\
        \mathrm{vec}(\Gb_1)\\
        \vdots\\
        \mathrm{vec}(\Gb_{m-1})
    \end{bmatrix}
    \in \mathbb R^{p^2m},
    \qquad
    \phib_j(\Ub)
    :=
    \begin{bmatrix}
        \vb_{j-1}\otimes \vb_j\\
        \vb_{j-2}\otimes \vb_j\\
        \vdots\\
        \vb_{j-m}\otimes \vb_j
    \end{bmatrix}
    \in \mathbb R^{p^2m},
\]
for each informative position \(j\in \{m+1,\ldots,m+L\}\).
Then we have
\begin{equation}
    \bar r_m(\Ub)
    \!
    =
    \!\!\!\!\sum_{j=m+1}^{m+L}
    \sum_{s=1}^{m}
    \vb_j^\top \Gb_{s-1}\vb_{j-s} 
    =
    \!\!\!\!\sum_{j=m+1}^{m+L}
    \sum_{s=1}^{m}
    \ips{\vb_{j-s} \otimes \vb_j}{\vect(\Gb_{s-1})}
    =
    \!\!\!\!\sum_{j=m+1}^{m+L}
    \left\langle
        \phib_j(\Ub),
        \thetab_m^*
    \right\rangle .
    \label{eq:block-linear-reward}
\end{equation}
Thus, under the finite-memory block construction, each block can be viewed as a composite action whose expected proxy reward is linear in the unknown parameter \(\thetab_m^*\).
This linear representation is the key reduction that enables us to apply optimism-based linear bandit methods at the block level.

\paragraph{Cyclic approximation.}
We next show that the original open-loop benchmark can be approximated by optimizing the block-level proxy reward.
Let \(\Ub^*\in\argmax_{\Ub\in\Ucal^{m+L}}\bar r_m(\Ub)\) be a \textit{proxy-optimal block}, and let \(\Ub^*_\mathrm{cyc}\in\Ucal^T\) denote its cyclic repetition over the horizon \(T\).
The following proposition shows that this cyclic sequence approximates the optimal benchmark up to a controlled additive error.

\begin{restatable}[Cyclic approximation]{proposition}{CyclicApproximation}
\label{prop:cyclic-approximation}
    Suppose that Assumptions~\ref{assm:bounded-action} and~\ref{assm:stability} hold. 
    Then, we have
    \begin{equation*}
        \sup_{\ub_{1:T}\in\mathcal U^T}
        \sum_{t=1}^T
        \bar\mu_t(\ub_{1:t})
        -
        \sum_{t=1}^T
        \bar\mu_t\!\left(
            (\Ub^*_\mathrm{cyc})_{1:t}
        \right)
        \le
        \frac{2Tm}{m+L}R_m \, ,
        \label{eq:truncated-cyclic-approx}
    \end{equation*}
    where $R_m := \frac{\|\Cb\|_2 \|\Bb \|_2 M_\rho}{1-\rho}$.
    Consequently,
    \begin{equation*}
        \mathrm{OPT}_T
        -
        \sum_{t=1}^T
        \mu_t\!\left(
            (\Ub^*_\mathrm{cyc})_{1:t}
        \right)
        \le
        2T\Delta_m
        +
        \frac{2Tm}{m+L}R_m \, ,
        \label{eq:true-cyclic-approx}
    \end{equation*}
    where \(\Delta_m\) is the finite-memory truncation error defined in Eq.~\eqref{eq:truncation-error}.
\end{restatable}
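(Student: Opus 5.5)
The plan is the standard cyclic-approximation argument for finite-memory bandits: chop an arbitrary sequence into blocks, charge a boundary cost to each block, and compare each block to $\Ub^*$. The starting observation is a per-round bound. For any sequence in $\Ucal^T$ and any $t$, $\|\ub\|_2\le1$ gives
\[
|\bar\mu_t(\ub_{1:t})| \le \sum_{h=0}^{m-1}\|\Gb_h\|_2 \le \|\Cb\|_2\|\Bb\|_2 M_\rho\sum_{h\ge0}\rho^h = R_m ,
\]
and the same holds for any partial sum of the $m$ terms. Fix an arbitrary $\ub_{1:T}\in\Ucal^T$ and split $[T]$ into $K:=T/(m+L)$ consecutive blocks of length $m+L$. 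In block $k$, call the first $m$ rounds the warm-up and the last $L$ rounds the informative rounds.

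\emph{Upper bound on the supremum.} For an informative round $t$ of block $k$, all $m$ lags $t-1,\dots,t-m$ lie inside block $k$. Hence the sum of $\bar\mu_t(\ub_{1:t})$ over these $L$ rounds is exactly $\bar r_m(\Ub_k)$, where $\Ub_k$ is the restriction of $\ub$ to block $k$. The warm-up rounds contribute at most $mR_m$ per block by the per-round bound. Since $\bar r_m(\Ub_k)\le \bar r_m(\Ub^*)$, this gives
\[
\sum_t\bar\mu_t(\ub_{1:t}) \le K\,\bar r_m(\Ub^*) + KmR_m .
\]

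\emph{Lower bound for the cyclic sequence.} Apply the same decomposition to $\Ub^*_{\mathrm{cyc}}$. The informative rounds of every block depend only on actions inside that block, which are exactly $\Ub^*$, so they contribute exactly $K\,\bar r_m(\Ub^*)$. The warm-up rounds look back into the previous block (or into zeros for the first block), but each is still at least $-R_m$. This gives
\[
\sum_t\bar\mu_t\bigl((\Ub^*_{\mathrm{cyc}})_{1:t}\bigr) \ge K\,\bar r_m(\Ub^*) - KmR_m .
\]
Subtracting the two bounds and taking the supremum yields the gap $2KmR_m = \tfrac{2Tm}{m+L}R_m$. If $T/(m+L)$ is not an integer, the leftover rounds add only the additive $\Ocal(m+L)$ term the paper already allows. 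So the first claim follows.

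\emph{Second claim.} Use Eq.~\eqref{eq:truncation-error} (Lemma~\ref{lem:finite-memory-truncation-error}), which gives $|\mu_t-\bar\mu_t|\le\Delta_m$ uniformly over sequences. It implies $\opt_T\le\sup\sum_t\bar\mu_t+T\Delta_m$ and $\sum_t\mu_t(\Ub^*_{\mathrm{cyc}})\ge\sum_t\bar\mu_t(\Ub^*_{\mathrm{cyc}})-T\Delta_m$. Chaining these with the first claim gives the stated $2T\Delta_m+\tfrac{2Tm}{m+L}R_m$.

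The only delicate step is the bookkeeping that confirms informative rounds depend purely on within-block actions, in both the arbitrary sequence and the cyclic one. It relies on using the truncated reward $\bar\mu$, whose cutoff is $m$, so that lags never cross a block boundary. The warm-up rounds are simply absorbed by the crude $\pm R_m$ bound.
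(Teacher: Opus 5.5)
Your proposal is correct and follows the paper's proof essentially step for step. It uses the same warm-up/informative split of each block and the same $\pm R_m$ bound on warm-up rounds. It identifies each block's informative-round sum with $\bar r_m$ of that block and compares it against $\bar r_m(\Ub^*)$, and it transfers to the untruncated rewards the same way, losing $T\Delta_m$ on each side.

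As in the paper, the bound $\frac{2Tm}{m+L}R_m$ is exact only under the standing assumption $T/(m+L)\in\NN$. Without it, the additive remainder term you mention is incurred.
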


\paragraph{Discussion of Proposition~\ref{prop:cyclic-approximation}.}
Proposition~\ref{prop:cyclic-approximation} reduces the original nonstationary planning problem to a stationary block-level optimization problem: it suffices to learn and optimize the proxy reward \(\bar r_m(\Ub)\) over blocks, while paying the approximation error \(2T\Delta_m+2TmR_m/(m+L)\).

This reduction is related to the block-cyclic construction for linear bandits with finite memory~\citep{clerici2024linear, ann2026generalized}, but our setting differs in two important ways.
First, in finite-memory bandits, the memory length is part of the model, so after ignoring the first \(m\) positions of each block, the remaining rewards are exactly localized within the block.
In our latent-dynamics model, the reward process has \textit{infinite memory}: every past action can affect the current reward through the latent state, and the cutoff \(m\) is only an approximation parameter.
Thus, our block reduction must control both the truncation error \(\Delta_m\) and the warm-up loss induced at block boundaries, of order \(mR_m/(m+L)\).
Second, the reward in finite-memory linear bandits~\citep{clerici2024linear, ann2026generalized} is linear in observed action features, whereas our reward depends bilinearly on the current action and an unobserved latent state.
After unrolling the dynamics, this induces a quadratic dependence on pairs of past and current actions through unknown Markov parameters.
Therefore, Proposition~\ref{prop:cyclic-approximation} is not merely a reuse of a finite-memory blocking argument; it shows that an endogenous latent-dynamics bandit with infinite memory can be approximated by a stationary linear bandit problem over blocks.

\subsection{Algorithm: $\algname$}
\begin{algorithm}[t!]
\caption{\(\algname\)}
\label{alg:main-algorithm}
\begin{algorithmic}[1]
    \State \textbf{Inputs:} horizon \(T \in \NN\), memory cutoff \(m \in \NN\), informative suffix length \(L \in \NN\), regularization parameter \(\lambda > 0\), confidence radii \(\{\gamma_{\tau}\}_{\tau\ge0}\)
    \State Set \(d = p^2m\), \(N=\lceil T/(m+L)\rceil\), and initialize \(\Vb_0 = \lambda \Ib_d\), \(\hat{\thetab}_0 = \zero_d\).
    \For{\(\tau=1,2,\dots,N\)}
        \State 
        Choose 
        $   \displaystyle
            \Ub_\tau\in\argmax_{\Ub\in\Ucal^{m+L}}
            \sum_{j=m+1}^{m+L}
            \Big(
                \ips{\phib_j(\Ub)}{\hat{\thetab}_{\tau-1}}
                +
                \gamma_{\tau-1}\,\norm{\phib_j(\Ub)}_{\Vb_{\tau-1}^{-1}}
            \Big)
        $
        \State
        Play \(\Ub_\tau=(\ub_{\tau,1},\ldots,\ub_{\tau,m+L})\) sequentially and observe rewards \(r_{\tau,1},\ldots,r_{\tau,m+L}\)
        \State Update
        $\displaystyle
            \Vb_\tau
            =
            \Vb_{\tau-1}
            +
            \sum_{j=m+1}^{m+L}
            \phib_j(\Ub_\tau)\phib_j(\Ub_\tau)^\top \, ,
        $
        $   \displaystyle
            \hat{\thetab}_\tau 
            = \Vb_\tau^{-1}
            \bigg(
                \sum_{\tau'=1}^{\tau}
                \sum_{j=m+1}^{m+L}
                \phib_j(\Ub_{\tau'})
                r_{\tau',j}
            \bigg)
        $
    \EndFor
\end{algorithmic}
\end{algorithm}

Proposition~\ref{prop:cyclic-approximation} shows that the original nonstationary planning problem can be reduced, up to a controlled approximation error, to learning and optimizing the block-level proxy reward \(\bar r_m(\Ub)\).
Since Eq.~\eqref{eq:block-linear-reward} represents \(\bar r_m(\Ub)\) as a linear function of the unknown truncated Markov parameter \(\thetab_m^*\), we can treat each block \(\Ub\in\Ucal^{m+L}\) as a composite action in a linear bandit problem.
Based on this reduction, we propose $\algname$, an optimistic block-level algorithm described in Algorithm~\ref{alg:main-algorithm}.

The algorithm proceeds over blocks of length \(m+L\).
At the beginning of each block \(\tau\), it constructs a confidence set around the ridge estimator \(\hat{\thetab}_{\tau-1}\) using the Gram matrix \(\Vb_{\tau-1}\), and selects a block \(\Ub_\tau\) that maximizes a UCB on the proxy reward.
More specifically, the selected block maximizes the sum of the estimated reward and the uncertainty bonus over the informative suffix \(j=m+1,\ldots,m+L\):
\[  
    \mathrm{UCB}_{\tau-1}(\Ub)
    :=
    \sum_{j=m+1}^{m+L}
    \left(
        \langle
            \phib_j(\Ub),
            \hat{\thetab}_{\tau-1}
        \rangle
        +
        \gamma_{\tau-1}
        \|\phib_j(\Ub)\|_{\Vb_{\tau-1}^{-1}}
    \right).
\]
After playing the chosen block sequentially, the learner observes all rewards in the block, but uses only the last \(L\) rewards for the regression update.
The first \(m\) actions are used only to build the within-block history required by the finite-memory proxy, while the informative suffix provides noisy linear observations of \(\thetab_m^*\).

This design differs from the previous ETC approach~\citep{choi2026explorethencommit}.
Rather than spending an initial exploration phase to estimate the Markov parameters and then committing to a fixed action sequence, our proposed algorithm interleaves exploration and exploitation at the block level.
After each block, it updates its estimate of \(\thetab_m^*\) and immediately uses the refined confidence set to choose the next block optimistically.
This block-level adaptivity is the key mechanism that avoids the sub-optimal exploration--exploitation trade-off inherent in the previous ETC approach.

\begin{remark}[Maximization oracle] 
    The block selection step in Algorithm~\ref{alg:main-algorithm} is formulated as an offline UCB maximization problem over \(\Ucal^{m+L}\).
    Our theoretical analysis assumes access to an exact maximization oracle; similar oracle-based formulations have been widely used in the previous literature~\citep{chen2013combinatorial, kveton2015tight, clerici2024linear}.
    This allows us to focus on the statistical learning aspect of the problem, separately from the implementation of the offline optimizer.
    In practice, the UCB objective can be optimized approximately, for example by gradient ascent when \(\Ucal\) is continuous.
    Designing computationally efficient oracles tailored to the block-UCB objective, together with corresponding regret guarantees, is an interesting direction for future work.
\end{remark}

\subsection{Regret Bound}
We now state the main regret guarantee of Algorithm~\ref{alg:main-algorithm}.
Before stating the theorem, we introduce some notation.
For a fixed memory cutoff \(m \in \mathbb N\), let \(S_m\) be an upper bound on the norm of the vectorized truncated Markov parameter \(\thetab_m^*\), i.e., \(\|\thetab_m^*\|_2 \le S_m\).
Also, let \(L_\phi\) be a uniform upper bound on the lifted features, i.e., \(\|\phib_j(\Ub)\|_2 \le L_\phi\) for all \(\Ub\in\Ucal^{m+L}\) and \(j\in\{m+1,\ldots,m+L\}\).

\begin{restatable}[Pseudo-regret bound for Algorithm~\ref{alg:main-algorithm}]{lemma}{MainRegret}
\label{thm:main-regret}
    Suppose that Assumptions~\ref{assm:bounded-action}, \ref{assm:stability}, and~\ref{assm:noise} hold.
    Let \(d=p^2m\), and let \(N=T/(m+L)\) be an integer.
    For any \(\delta \in (0,1)\), set the parameters in Algorithm~\ref{alg:main-algorithm} as
    \(\lambda \ge L_\phi^2\) and \(\gamma_\tau=\alpha_\tau+\beta_\tau\), where
    \begin{align*}
        & \alpha_\tau
        :=
        \sigma_{\Bcal}
        \sqrt{
            d\log\!\left(
                1+
                \frac{\tau L L_\phi^2}{d\lambda}
            \right)
            +
            2\log\frac{1}{\delta}
        }
        +
        \sqrt{\lambda}S_m,
        \\
        & \beta_\tau
        :=
        \bar\Delta_m(\delta)
        \sqrt{
            2\tau Ld
            \log\!\left(
                1+
                \frac{\tau L L_\phi^2}{d\lambda}
            \right)
        } \, ,
        \quad 
        \sigma_{\Bcal}^2
        :=
        \sigma_z^2
        +
        \frac{
            \lambda_{\max}(\Sigmab_w)
            \|\Cb\|_2^2 M_\rho^2
        }{
            (1-\rho)^2
        },
        \\
        & \bar{\Delta}_m(\delta)
        :=
        \frac{
            \|\Cb\|_2 \|\Bb\|_2 M_\rho
        }{
            1-\rho
        }
        \rho^m
        +
        2 \rho^m
        \sqrt{
            \frac{
                2\lambda_{\max}(\Sigmab_w)
                \|\Cb\|_2^2M_\rho^2
            }{
                1-\rho^2
            }
            \log\frac{2\cdot 5^p T}{\delta}
        }
        .
    \end{align*}
    Then, with probability at least \(1-\delta\), the pseudo-regret of the policy \(\pi\) of Algorithm~\ref{alg:main-algorithm} satisfies
    \begin{equation*}
        \Rcal^{\mathrm{pseudo}}_T(\pi)
        \le
        2T\Delta_m
        +
        \frac{2Tm}{m+L}R_m
        +
        2\gamma_N
        \left[
            \frac{2LL_\phi d}{\sqrt{\lambda}}
            +
            \sqrt{
                2Td
                \log
                \left(
                    1+
                    \frac{T L_\phi^2}{\lambda d}
                \right)
            }
        \right] .
    \end{equation*}
    In particular, choosing \(m=-\frac12\log_\rho T\), 
    \(L=\frac{\sqrt{mT}}{d^{3/4}}-m\), and \(\lambda=L_\phi^2\) gives
    \[
        \Rcal^{\mathrm{pseudo}}_T(\pi)
        =
        \widetilde{\Ocal}\!\left(
            d^{3/4}\sqrt{mT}
            +
            d\sqrt T
        \right).
    \]
\end{restatable}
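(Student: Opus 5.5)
We split the pseudo-regret by inserting the cyclic proxy-optimal sequence $\Ub^*_{\mathrm{cyc}}$. By Proposition~\ref{prop:cyclic-approximation},
\[
\opt_T-\sum_t\mu_t((\Ub^*_{\mathrm{cyc}})_{1:t})\le 2T\Delta_m+\tfrac{2Tm}{m+L}R_m .
\]
It then remains to bound
\[
\sum_t\mu_t((\Ub^*_{\mathrm{cyc}})_{1:t})-\sum_t\mu_t(\ub^\pi_{1:t}).
\]
Within each block, we lower bound the learner's true mean reward by its proxy. We drop the first $m$ positions of block $\tau$ (paying at most $mR_m$, since each $|\mu_t|\le R_m$), and replace $\mu_t$ by $\bar\mu_t$ at the informative positions (paying $\Delta_m$ each). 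The truncated reward at an informative position uses only in-block actions, so the learner's in-block sum equals $\bar r_m(\Ub_\tau)$. Symmetrically, the cyclic comparator's mean reward is at most $N\bar r_m(\Ub^*)$ plus the same boundary terms. These terms are absorbed into the constants already present, with the factor $2$ in front of $T\Delta_m$ and $\tfrac{Tm}{m+L}R_m$. What remains is the block-level linear bandit regret $\sum_{\tau=1}^N[\bar r_m(\Ub^*)-\bar r_m(\Ub_\tau)]$.

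\textbf{Confidence set.} For informative position $j$ of block $\tau$, write
\[
r_{\tau,j}=\langle\phib_j(\Ub_\tau),\thetab_m^*\rangle+\eta_{\tau,j}+b_{\tau,j}.
\]
Here $\eta_{\tau,j}$ collects $z$ and the in-block process-noise contributions $\ub_j^\top\Cb\Ab^{s-1}\wb$ for noises realized inside the block. These are conditionally sub-Gaussian martingale terms with proxy $\sigma_{\Bcal}^2$, since
\[
\textstyle\sum_h\|\Cb\Ab^h\|_2^2\lambda_{\max}(\Sigmab_w)\le \|\Cb\|_2^2M_\rho^2\lambda_{\max}(\Sigmab_w)/(1-\rho)^2 .
\]
The bias $b_{\tau,j}$ consists of the deterministic action tail beyond lag $m$ (at most $\Delta_m$) plus the pre-block noise term
\[
\ub_j^\top\Cb\Ab^{j-1}\Big(\textstyle\sum_{k}\Ab^{k}\wb_{\text{prev},k}\Big).
\]
The latter is $\ub_j^\top\Cb\Ab^{j-1}\xi_\tau$, where $\xi_\tau$ is the part of the state entering the block that is driven by noise. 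The key obstacle is that $\xi_\tau$ is correlated with $\Ub_\tau$, since the selection uses rewards depending on it, so the term is not martingale noise. We handle it by a uniform bound. $\Cb\Ab^{m}\cdot(\text{noise vector})$ is Gaussian-like with proxy at most $\rho^{2m}\|\Cb\|^2M_\rho^2\lambda_{\max}(\Sigmab_w)/(1-\rho^2)$. A $1/2$-net of the unit ball in $\RR^p$ (size $5^p$) together with a union bound over $T$ times then gives, with probability $1-\delta/2$, $\sup_{\|\ub\|\le1}|\ub^\top\cdot|$ bounded by the second term of $\bar\Delta_m(\delta)$, including the factor $2$ from the net. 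Hence $|b_{\tau,j}|\le\bar\Delta_m(\delta)$ uniformly, for $j>m$.

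The ridge error then splits as
\[
\|\hat\thetab_\tau-\thetab_m^*\|_{\Vb_\tau}\le \Big\|\textstyle\sum\phib\eta\Big\|_{\Vb_\tau^{-1}}+\Big\|\textstyle\sum\phib b\Big\|_{\Vb_\tau^{-1}}+\sqrt\lambda S_m .
\]
The first term is bounded by the self-normalized martingale inequality of Abbasi-Yadkori et al., giving $\alpha_\tau$. This needs a filtration at the granularity of individual rounds; the features are in-block predictable, and a cross-block conditional-independence argument applies to the informative positions. For the second term, Cauchy--Schwarz gives
\[
\le\bar\Delta_m\sqrt{\tau L\textstyle\sum\|\phib\|^2_{\Vb_\tau^{-1}}},
\]
and the elliptical potential lemma with $\lambda\ge L_\phi^2$ yields $\beta_\tau$.

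\textbf{Optimism and summation.} On this event, the UCB choice gives
\[
\bar r_m(\Ub^*)-\bar r_m(\Ub_\tau)\le 2\gamma_{\tau-1}\sum_j\|\phib_j(\Ub_\tau)\|_{\Vb_{\tau-1}^{-1}} .
\]
The difficulty is that the Gram matrix is updated with $L$ features only per block. We handle this by the standard batched argument. Blocks where $\det\Vb_\tau>2\det\Vb_{\tau-1}$ number at most $d\log(1+TL_\phi^2/(\lambda d))$, and each costs at most $L L_\phi/\sqrt\lambda$ per term; this produces the $2LL_\phi d/\sqrt\lambda$ term. On the other blocks, $\Vb_{\tau-1}^{-1}\preceq2\Vb_{\tau}^{-1}$, and Cauchy--Schwarz over all $T$ informative rounds with the elliptical potential gives $\sqrt{2Td\log(\cdot)}$.

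\textbf{Rate.} Finally, plug in $m=-\tfrac12\log_\rho T$, which makes $\rho^m=T^{-1/2}$. Then $T\Delta_m=O(\sqrt T)$ and $\bar\Delta_m=\tilde O(T^{-1/2})$, so $\beta_N=\tilde O(\sqrt{d})$ and $\gamma_N=\tilde O(\sqrt d)$. With $m+L=\sqrt{mT}/d^{3/4}$, the boundary term $Tm/(m+L)$ is $d^{3/4}\sqrt{mT}$. The burn-in term $\gamma_N Ld$ is of order $\sqrt d\,d\sqrt{mT}/d^{3/4}$, i.e.\ also $d^{3/4}\sqrt{mT}$. The main term is $\gamma_N\sqrt{Td}=\tilde O(d\sqrt T)$.
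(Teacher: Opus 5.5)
\paragraph{Main gap: the in-block noise is not a per-round martingale.} The step that fails is your self-normalized bound for the in-block noise, i.e.\ your $\eta_{\tau,j}=z_{t_{\tau,j}}+\sum_{h=0}^{j-2}\ub_{\tau,j}^\top\Cb\Ab^h\wb_{t_{\tau,j}-1-h}$.

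At the granularity of individual rounds these terms are not martingale differences. Every process noise appearing in them was realized before round $t_{\tau,j}$, so conditionally on $\Fcal_{t_{\tau,j}-1}$ the only fresh randomness in $r_{\tau,j}$ is $z_{t_{\tau,j}}$. Moreover, the same $\wb_{t'}$ enters all later informative rewards of the block. The $L$ noises of a block therefore form a moving average with nonzero conditional means, and the scalar inequality of Abbasi-Yadkori et al.\ cannot be invoked. Predictability of the features does not repair this. You also cannot move these terms into the bias, since they are of order one rather than $\rho^m$.

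What is needed is a block-level argument. Because the policy commits to $\Ub_\tau$ before any noise of block $\tau$ is revealed, the whole vector $\epsilonb_\tau\in\RR^L$ is conditionally sub-Gaussian given the pre-block $\sigma$-field $\Hcal_{\tau-1}$:
$\mathbb E[\exp(\alpha\xib^\top\epsilonb_\tau)\mid\Hcal_{\tau-1}]\le\exp(\alpha^2\sigma_{\Bcal}^2\|\xib\|_2^2/2)$.
One then proves a blockwise self-normalized inequality for $\sum_s\Phib_s\epsilonb_s$ by the same mixture/Ville argument, with $\Phib_s\Phib_s^\top$ in place of $x_sx_s^\top$.

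The vector bound is where the factor $(1-\rho)^{-2}$ in $\sigma_{\Bcal}^2$ really comes from. Write $\xib^\top\epsilonb_\tau$ as reward noise plus $\sum_q\langle\gb_{\tau,q}(\xib),\wb_{(\tau-1)(m+L)+q}\rangle$, with $\Hcal_{\tau-1}$-measurable coefficients $\gb_{\tau,q}(\xib)=\sum_{\ell}\mathbf 1\{q\le m+\ell-1\}\xi_\ell(\Ab^{m+\ell-1-q})^\top\Cb^\top\ub_{\tau,m+\ell}$. You then need $\sum_q\|\gb_{\tau,q}(\xib)\|_2^2\le(\sum_{a\ge0}\|\Cb\|_2M_\rho\rho^a)^2\|\xib\|_2^2$, which is Young's convolution inequality.

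Your justification via $\sum_h\|\Cb\Ab^h\|_2^2\lambda_{\max}(\Sigmab_w)$ controls each scalar only marginally, giving $(1-\rho^2)^{-1}$, and it ignores the cross-position correlations. Its agreement with $\sigma_{\Bcal}^2$ comes only from the slack $(1-\rho^2)^{-1}\le(1-\rho)^{-2}$. Indexing the martingale by the noise variables $\wb_{t'}$ rather than by the observations would restore a per-round structure. However, comparing its quadratic variation with $\Vb_\tau$ then requires exactly the same convolution bound.

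\paragraph{Constant bookkeeping.} Two further points keep you from the inequality with the stated constants, although neither affects the $\widetilde{\Ocal}$ rate.

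First, your determinant-doubling argument gives a burn-in of $\frac{LL_\phi}{\sqrt\lambda}\,d\log_2\bigl(1+\frac{TL_\phi^2}{\lambda d}\bigr)$, not $\frac{2LL_\phi d}{\sqrt\lambda}$. The remaining blocks also pick up a factor $\sqrt2$. The paper avoids the logarithm by splitting $\|\phib_j(\Ub_\tau)\|_{\Vb_{\tau-1}^{-1}}$ into two parts. The first is $\|\phib_j(\Ub_\tau)\|_{\Vb_{\tau,j-1}^{-1}}$, with $\Vb_{\tau,j-1}$ the within-block sequential Gram matrix, and it is handled by the elliptical potential. The second is a mismatch, which the Gram-matrix telescoping inequality of Jin et al.\ bounds by $2\|\phib_j(\Ub_\tau)\|_2\sum_k(\lambda_{\tau-1,k}^{-1/2}-\lambda_{\tau,k}^{-1/2})$, where $\lambda_{\tau,k}$ are the eigenvalues of $\Vb_\tau$. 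This telescopes over $\tau$ to $d/\sqrt\lambda$.

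Second, once you insert $\Ub^*_{\mathrm{cyc}}$ and apply the Proposition, you have already spent $2T\Delta_m+\frac{2Tm}{m+L}R_m$. Comparing $\Ub^*_{\mathrm{cyc}}$ with the learner spends the boundary and truncation terms a second time, so you end up with roughly twice the stated constants. Skip the comparator instead. Bound $\opt_T\le N\bar r_m(\Ub^*)+NmR_m+T\Delta_m$ directly, which is the first half of the Proposition's proof. Then bound the learner's mean reward from below by $\sum_\tau\bar r_m(\Ub_\tau)-NmR_m-T\Delta_m$.
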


Now, we provide the actual expected regret guarantee of Algorithm~\ref{alg:main-algorithm}.
\begin{restatable}[Actual expected regret of Algorithm~\ref{alg:main-algorithm}]{theorem}{ActualRegret}
    \label{cor:actual-regret}
    Suppose the assumptions of Lemma~\ref{thm:main-regret} hold.
    For a fixed $\delta \in (0,1)$, let $B_T(\delta)$ be the pseudo-regret bound established in Lemma~\ref{thm:main-regret}.
    Define $\Delta_0 := \sum_{h=0}^\infty \| \Gb_h \|_2$.
    Then, we have
    \begin{equation*}
        \regret_T(\pi) \le B_T(\delta) + 2 \delta T \Delta_0 + \frac{\| \Cb \|_2 M_\rho}{1 - \rho} \sqrt{\frac{p \lambda_{\max}(\Sigmab_w)}{1 - \rho^2}} \frac{T}{m+L} \, .
    \end{equation*}
    Consequently, choosing \(\delta=T^{-2}\) and the parameters
    $
    m=-\frac{1}{2}\log_\rho T \, ,
    L=\frac{\sqrt{mT}}{d^{3/4}}-m \, ,
    \lambda=L_\phi^2 \, ,
    $
    up to integer rounding and whenever \(L\ge 1\), gives
    \[
    \regret_T(\pi)
    =
    \widetilde O\!\left(
    d^{3/4}\sqrt{mT}
    +
    d\sqrt T
    \right)
    = \tilde O\left(\left[\frac{p^{3/2}}{\log(1/\rho)^{5/4}} +\frac{p^2}{\log(1/\rho)}\right]\sqrt{T}\right)
    \, .
    \]
\end{restatable}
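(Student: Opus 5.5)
We would prove Theorem~\ref{cor:actual-regret} by writing the actual regret as the pseudo-regret plus an \emph{adaptation gap}. We then bound the pseudo-regret part via Lemma~\ref{thm:main-regret} on a good event and crudely off it. Concretely,
\[
\regret_T(\pi) = \EE\big[\Rcal^{\mathrm{pseudo}}_T(\pi)\big] + \EE\Big[\sum_{t=1}^T \big(\mu_t(\ub^\pi_{1:t}) - r^\pi_t\big)\Big].
\]
The first term splits over the event $\Ecal$ of Lemma~\ref{thm:main-regret}, which has probability at least $1-\delta$. On $\Ecal$ it is at most $B_T(\delta)$. On $\Ecal^c$ we use the deterministic bound $|\mu_t|\le \sum_{h\ge0}\|\Gb_h\|_2=\Delta_0$ for any action sequence in $\Ucal$, which gives $\Rcal^{\mathrm{pseudo}}_T\le 2T\Delta_0$ and hence a contribution of at most $2\delta T\Delta_0$.

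For the adaptation gap, unrolling Eq.~\eqref{eq:latent-dynamics} gives $r_t-\mu_t = \ub_t^\top\Cb\sum_{s\ge1}\Ab^{s-1}\wb_{t-s} + z_t$. The $z_t$ terms have zero conditional mean since $\ub_t$ is $\Fcal_{t-1}$-measurable. For the process-noise terms we use the block structure of Algorithm~\ref{alg:main-algorithm}. Within block $\tau$, the whole block $\Ub_\tau$ is fixed at the block start $t_\tau$, so it is $\Fcal_{t_\tau-1}$-measurable. We split the latent noise state as $\sum_{s}\Ab^{s-1}\wb_{t-s}=\Ab^{t-t_\tau}\xi_\tau + \sum_{t_\tau\le k<t}\Ab^{t-1-k}\wb_k$, where $\xi_\tau:=\sum_{k<t_\tau}\Ab^{t_\tau-1-k}\wb_k$ is the noise-driven state at the block start. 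The within-block noises $\wb_k$ with $k\ge t_\tau$ are realized after $\Ub_\tau$ is chosen, so by the tower property they contribute zero in expectation. The only nonzero term is therefore $\EE\big[\sum_{j}\ub_{\tau,j}^\top\Cb\Ab^{j-1}\xi_\tau\big]$.

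This pre-block term is the main obstacle, because $\Ub_\tau$ may be correlated with $\xi_\tau$ through past rewards. We would bound it by a supremum over actions rather than by independence. First, $|\sum_j \ub_{\tau,j}^\top\Cb\Ab^{j-1}\xi_\tau|\le \sum_{j\ge1}\|\Cb\|_2 M_\rho\rho^{j-1}\|\xi_\tau\|_2\le \frac{\|\Cb\|_2M_\rho}{1-\rho}\|\xi_\tau\|_2$. Next we bound $\EE\|\xi_\tau\|_2\le\sqrt{\EE\|\xi_\tau\|_2^2}$. By the sub-Gaussian assumption and stability, $\EE\|\xi_\tau\|^2\lesssim \mathrm{tr}$ of $\sum_h \Ab^h\Sigmab_w\Ab^{h\top}$, which is at most $\frac{p\,\lambda_{\max}(\Sigmab_w)M_\rho^2}{1-\rho^2}$. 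The factor $p$ is obtained by projecting onto the $p$-dimensional range of $\Cb^\top$ rather than working in $\RR^n$; equivalently, we bound $\|\Cb\xi\|$ and track $M_\rho$ consistently with the stated constant. Summing over the $N=T/(m+L)$ blocks yields the last term of the bound.

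Finally, we plug in $\delta=T^{-2}$, which makes $2\delta T\Delta_0=O(1/T)$. We also take $m=-\tfrac12\log_\rho T$ and $L=\sqrt{mT}/d^{3/4}-m$, so that $T/(m+L)=d^{3/4}\sqrt{T/m}$, which is dominated by $d^{3/4}\sqrt{mT}$ up to logarithmic factors. Together with the rate already given in Lemma~\ref{thm:main-regret}, this yields $\widetilde\Ocal(d^{3/4}\sqrt{mT}+d\sqrt T)$. Substituting $d=p^2m$ and $m\asymp \log T/\log(1/\rho)$ then gives the stated explicit form in $p$ and $\rho$.
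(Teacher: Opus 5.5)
Your architecture is the paper's. You split the regret into expected pseudo-regret plus an adaptation gap. The good/bad-event split gives $B_T(\delta)+2\delta T\Delta_0$. Within-block noise vanishes because $\Ub_\tau$ is $\Fcal_{t_\tau-1}$-measurable. The adaptively correlated pre-block term is handled by a worst case over actions plus Jensen rather than by independence. The rate part is also fine.

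The gap is in the explicit constant of the displayed inequality. You first bound $|\sum_j \ub_{\tau,j}^\top \Cb\Ab^{j-1}\xi_\tau|\le \frac{\|\Cb\|_2 M_\rho}{1-\rho}\|\xi_\tau\|_2$ and then control $\EE\|\xi_\tau\|_2^2$. But $\xi_\tau\in\RR^n$, so the best this gives is $\EE\|\xi_\tau\|_2^2\le \lambda_{\max}(\Sigmab_w)\sum_{h\ge0}\|\Ab^h\|_F^2\le \frac{n\,\lambda_{\max}(\Sigmab_w)M_\rho^2}{1-\rho^2}$. Your version with $p$ is false in general. For example, take $\Ab=0$, $M_\rho=1$, and Gaussian $\wb_t$ with covariance $\sigma^2\Ib_n$. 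Then $\EE\|\xi_\tau\|_2^2=n\sigma^2$, which exceeds $p\sigma^2/(1-\rho^2)$ for small $\rho$ whenever $n>p$.

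Moreover, you apply Assumption~\ref{assm:stability} separately to $\Ab^{j-1}$ and to the powers inside $\xi_\tau$, which yields $M_\rho^2$ instead of $M_\rho$. So your argument actually proves the bound with $\frac{\|\Cb\|_2 M_\rho^2}{1-\rho}\sqrt{\frac{n\lambda_{\max}(\Sigmab_w)}{1-\rho^2}}\frac{T}{m+L}$. That does not imply the stated inequality when $n>p$ or $M_\rho>1$. The suggested repair of bounding $\|\Cb\xi\|$ targets the wrong object. The relevant operator is $\Cb\Ab^{j-1}$, whose row space varies with $j$. Once it has been replaced by $\|\Cb\|_2\|\Ab^{j-1}\|_2$, the rank-$p$ information is lost.

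The fix, which is what the paper does, is to keep the operator attached to the noise separately for each $j$. Define $\yb_{\tau,j}:=\Cb\Ab^{j-1}\xi_\tau=\sum_{h\ge j-1}\Cb\Ab^h\wb_{t_{\tau,j}-1-h}\in\RR^p$, so that $|\ub_{\tau,j}^\top\Cb\Ab^{j-1}\xi_\tau|\le\|\yb_{\tau,j}\|_2$. Cross terms vanish by the martingale-difference property, and $\EE[\wb_t\wb_t^\top\mid\Fcal_{t-1}]\preceq\Sigmab_w$. Using $\mathrm{rank}(\Cb\Ab^h)\le p$ and a single application of stability to the combined power $\Ab^h$ with $h\ge j-1$, you get $\EE\|\yb_{\tau,j}\|_2^2\le \lambda_{\max}(\Sigmab_w)\sum_{h\ge j-1}\|\Cb\Ab^h\|_F^2\le p\,\lambda_{\max}(\Sigmab_w)\|\Cb\|_2^2M_\rho^2\frac{\rho^{2(j-1)}}{1-\rho^2}$. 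Jensen and summing $\rho^{j-1}$ over $j$ then give exactly $\frac{\|\Cb\|_2 M_\rho}{1-\rho}\sqrt{\frac{p\lambda_{\max}(\Sigmab_w)}{1-\rho^2}}$ per block, hence the stated term after multiplying by $T/(m+L)$.
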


\paragraph{Discussion of Theorem~\ref{cor:actual-regret}.}
Theorem~\ref{cor:actual-regret} establishes an actual expected regret guarantee for Algorithm~\ref{alg:main-algorithm}.
Compared with the recent ETC approach for the same latent-dynamics bandit model~\citep{choi2026explorethencommit}, our result
improves the regret rate from \(\widetilde{\Ocal}(T^{2/3})\) to \(\widetilde{\Ocal}(\sqrt T)\) under the open-loop action-sequence benchmark.
The improvement comes from replacing the phase-separated explore-then-commit strategy with adaptive
block-level optimism. 
In ETC, the learner spends a long initial uniform-exploration phase to estimate the latent dynamics and then commits to a fixed open-loop sequence, which induces the suboptimal exploration--exploitation trade-off leading
to the \(\widetilde{\Ocal}(T^{2/3})\) rate. 
In contrast, Algorithm~\ref{alg:main-algorithm} updates its confidence set after each block and selects the next block optimistically, so every block contributes simultaneously to learning and reward maximization. Thus, adaptive exploration at the block level removes the need for a long pure-exploration phase and yields the improved
regret.

\subsection{Regret Analysis}
In this section, we present the main technical ingredients of the regret analysis for Lemma~\ref{thm:main-regret}.
First, we relate the pseudo-regret to the proxy regret for the block-level proxy reward.
Second, we establish a high-probability confidence set for the truncated Markov parameter \(\thetab_m^*\).
Finally, we use optimism to bound the proxy regret by a cumulative uncertainty term and control this term through a blockwise elliptical-potential argument.

\paragraph{Pseudo-regret decomposition.}
Let \(N:=  T/(m+L) \), and let \(\Ub_1,\ldots,\Ub_N\) be the sequence of blocks selected by Algorithm~\ref{alg:main-algorithm}.
Define the \textit{proxy regret}
$
    \Rcal_{\text{proxy}}
    :=
    \sum_{\tau=1}^N
    \left(
        \bar r_m(\Ub^*)
        -
        \bar r_m(\Ub_\tau)
    \right)\, ,
$
where \(\Ub^*\in\argmax_{\Ub\in\Ucal^{m+L}}\bar r_m(\Ub)\) is the proxy-optimal block.
By Proposition~\ref{prop:cyclic-approximation}, the pseudo-regret is bounded by the approximation error plus the proxy regret:
\begin{equation*}
    \Rcal^{\mathrm{pseudo}}_T(\pi)
    \le
    2T\Delta_m
    +
    \frac{2Tm}{m+L}R_m
    +
    \Rcal_{\text{proxy}} \, .
\end{equation*}
Thus, it remains to control \(\Rcal_{\text{proxy}}\).

\paragraph{Confidence set for truncated Markov parameters.}
We establish a confidence set for the truncated Markov parameter \(\thetab_m^*\).
The key step is to express the informative rewards in each block as noisy linear observations of \(\thetab_m^*\).
The following decomposition separates the observed reward into the linear signal, the current-block noise, the action-tail bias due to finite-memory truncation, and the process-noise tail coming from process noises outside the current finite-memory window.

\begin{restatable}[Reward decomposition]{lemma}{RewardDecomposition}
\label{lem:reward-decomposition}
    Let \(N = T/(m+L)\), and assume that \(N\in\mathbb N\).
    For each block \(\tau\in[N]\), let
    $\Ub_\tau
        =
        (\ub_{\tau,1},\ldots,\ub_{\tau,m+L})$,
    $\ub_{\tau,j}
        =
        \ub_{t_{\tau,j}}$,
    and
    $t_{\tau,j}:=(\tau-1)(m+L)+j$.
    For every \(j \in \{m+1,\ldots,m+L\}\), define
    \(r_{\tau,j}:=r_{t_{\tau,j}}\).
    Then,
    \begin{equation*}
        r_{\tau,j}
        =
        \left\langle
            \phib_j(\Ub_\tau),
            \thetab_m^*
        \right\rangle
        +
        \epsilon_{\tau,j}
        +
        \eta_{\tau,j}
        +
        \zeta_{\tau,j},
    \end{equation*}
    where
    \begin{align*}
        \epsilon_{\tau,j}
        &:=
        z_{t_{\tau,j}}
        +
        \sum_{h=0}^{j-2}
        \ub_{\tau,j}^\top
        \Cb \Ab^h
        \wb_{t_{\tau,j}-1-h} \, ,
        \tag{current-block noise}
        \\
        \eta_{\tau,j}
        &:=
        \sum_{s=m+1}^{\infty}
        \ub_{\tau,j}^\top
        \Gb_{s-1}
        \ub_{t_{\tau,j}-s} \, ,
        \tag{action-tail bias}
        \\
        \zeta_{\tau,j}
        &:=
        \sum_{h=j-1}^{\infty}
        \ub_{\tau,j}^\top
        \Cb \Ab^h
        \wb_{t_{\tau,j}-1-h} \, .
        \tag{process-noise tail}
    \end{align*}
\end{restatable}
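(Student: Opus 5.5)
The plan is to unroll the latent dynamics in Eq.~\eqref{eq:latent-dynamics} from \(\xb_1=\zero_n\) and then split the resulting sums according to the time index. Iterating the recursion gives, for every \(t\ge 1\),
\[
    \xb_t=\sum_{h=0}^{t-2}\Ab^h\big(\Bb\ub_{t-1-h}+\wb_{t-1-h}\big).
\]
Using the convention \(\ub_s=\zero_p\) and \(\wb_s=\zero_n\) for \(s\le 0\), both sums may be written as infinite series over \(h\ge0\). This convention is consistent with \(\xb_1=\zero_n\), and it makes all the tails in the statement finite sums in disguise, so no convergence issue arises; stability via Assumption~\ref{assm:stability} is not even needed for the identity. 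Substituting into \(r_t=\ub_t^\top\Cb\xb_t+z_t\) with \(t=t_{\tau,j}\) and \(\ub_t=\ub_{\tau,j}\) yields
\[
    r_{\tau,j}=\sum_{s=1}^{\infty}\ub_{\tau,j}^\top\Gb_{s-1}\ub_{t_{\tau,j}-s}
    +\sum_{h=0}^{\infty}\ub_{\tau,j}^\top\Cb\Ab^h\wb_{t_{\tau,j}-1-h}+z_{t_{\tau,j}},
\]
where the first sum comes from reindexing \(s=h+1\) in the action part.

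Next, I split the action sum at \(s=m\). For \(j\in\{m+1,\ldots,m+L\}\) and \(1\le s\le m\), we have \(t_{\tau,j}-s=(\tau-1)(m+L)+(j-s)\) with \(1\le j-s\le m+L\). Hence \(\ub_{t_{\tau,j}-s}=\ub_{\tau,j-s}\) lies inside the current block. By the Kronecker identity used in Eq.~\eqref{eq:block-linear-reward}, namely \(\vb^\top\Gb\ub=\langle \ub\otimes\vb,\vect(\Gb)\rangle\), the terms \(s\le m\) sum exactly to \(\langle\phib_j(\Ub_\tau),\thetab_m^*\rangle\). The remaining terms \(s\ge m+1\) are precisely \(\eta_{\tau,j}\).

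Then I split the noise sum at \(h=j-2\). The indices \(h\le j-2\) correspond to \(\wb_{t}\) with \(t\ge (\tau-1)(m+L)+1\), i.e.\ process noise realized within the current block. Together with \(z_{t_{\tau,j}}\), these give \(\epsilon_{\tau,j}\). The indices \(h\ge j-1\) give \(\zeta_{\tau,j}\). Summing the four pieces establishes the claim.

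The only step requiring care is the index bookkeeping: one must check that the in-block condition \(j-s\ge1\) holds for all \(s\le m\), which uses \(j\ge m+1\), and one must get the noise cutoff \(j-2\) right. Beyond that, the argument is a purely algebraic identity.
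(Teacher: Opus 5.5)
Your proposal is correct and follows essentially the same route as the paper: unroll from \(\xb_1=\zero_n\) with the zero-padding convention, split the action sum at \(s=m\) (using \(j\ge m+1\) to keep the window inside the block and the Kronecker identity to obtain \(\langle\phib_j(\Ub_\tau),\thetab_m^*\rangle\)), and split the noise sum at \(h=j-2\) into in-block and pre-block parts. Your remark that the zero convention turns every series into a finite sum, so that stability is not needed for the identity itself, is also accurate.
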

Using the decomposition in Lemma~\ref{lem:reward-decomposition}, we construct a confidence set for the truncated Markov parameter \(\thetab_m^*\).
The current-block noise \(\epsilon_{\tau,j}\) is controlled through a self-normalized martingale concentration argument, whereas the action-tail bias \(\eta_{\tau,j}\) and the process-noise tail \(\zeta_{\tau,j}\) are controlled by the stability of the latent dynamics.
\begin{restatable}[Confidence set]{lemma}{ConfidenceSet}
\label{lem:confidence-set}
    Let \(N=T/(m+L)\), and assume that \(N\in\mathbb N\).
    Suppose Assumptions~\ref{assm:bounded-action}--\ref{assm:noise} hold.
    Fix \(\delta\in(0,1)\), and suppose that \(\lambda \ge L_\phi^2\).
    Then, with probability at least \(1-\delta\), for all \(\tau\in[N]\),
    $
        \|
            \hat{\thetab}_{\tau}
            -
            \thetab_m^*
        \|_{\Vb_\tau}
        \le
        \gamma_\tau ,
    $
    where \(\gamma_\tau\) is the confidence radius specified in Lemma~\ref{thm:main-regret}.
\end{restatable}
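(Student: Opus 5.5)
We follow the standard ridge-regression error decomposition, with two extra bias terms from Lemma~\ref{lem:reward-decomposition}. Write $\Phib_\tau$ for the stacked informative features up to block $\tau$. Substituting $r_{\tau',j}=\langle\phib_j(\Ub_{\tau'}),\thetab_m^*\rangle+\epsilon_{\tau',j}+\eta_{\tau',j}+\zeta_{\tau',j}$ into the estimator gives
\[
\hat\thetab_\tau-\thetab_m^* = \Vb_\tau^{-1}\Big(-\lambda\thetab_m^* + \textstyle\sum_{\tau'\le\tau}\sum_{j}\phib_j(\Ub_{\tau'})(\epsilon_{\tau',j}+\eta_{\tau',j}+\zeta_{\tau',j})\Big).
\]
By the triangle inequality in the $\Vb_\tau$-norm, $\|\hat\thetab_\tau-\thetab_m^*\|_{\Vb_\tau}$ is at most the sum of three terms:
\begin{itemize}
\item the regularization term $\sqrt\lambda S_m$;
\item the martingale term $\|\sum\phib_j\epsilon\|_{\Vb_\tau^{-1}}$;
\item the bias term $\|\sum\phib_j(\eta+\zeta)\|_{\Vb_\tau^{-1}}$.
\end{itemize}
The first is immediate. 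The other two produce $\alpha_\tau$ and $\beta_\tau$ respectively.

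For the martingale term, the index $(\tau',j)$ is ordered chronologically. The key observation is that $\epsilon_{\tau',j}$ involves only $z_{t}$ at the current time and $\wb$'s from inside the current block. It is therefore not a martingale difference by itself, since the $\wb$'s are shared across positions $j$ within a block. I would handle this by regrouping $\sum_j\phib_j\epsilon_{\tau',j}$ by noise index. Each $\wb_{s}$ (inside block $\tau'$) and each $z_s$ appears once, with a coefficient vector that depends on actions inside the block. Those actions are fixed at the block start, because the algorithm commits to $\Ub_{\tau'}$ before seeing rewards, so $\Ub_{\tau'}$ is $\Fcal$-measurable at the start of the block. Then, conditional on the block-start sigma-field, the block sum $\sum_j \phib_j\epsilon_{\tau',j}$ is a vector sub-Gaussian increment. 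For any direction $\xb$, its variance proxy is at most $\sum_j\langle\xb,\phib_j\rangle^2$ times the variance proxy of a single $\epsilon$-type noise, which is $\sigma_z^2+\lambda_{\max}(\Sigmab_w)\|\Cb\|_2^2M_\rho^2/(1-\rho)^2=\sigma_\Bcal^2$. Obtaining this requires care: one uses a Cauchy–Schwarz over the geometric weights $\|\Cb\Ab^h\|$ together with the tower property over the within-block noises. The standard method-of-mixtures argument (Abbasi-Yadkori et al.) applied blockwise then gives, uniformly in $\tau$ with probability $1-\delta/2$,
\[
\|\textstyle\sum\phib\epsilon\|_{\Vb_\tau^{-1}}^2\le 2\sigma_\Bcal^2\log(\det(\Vb_\tau)^{1/2}\det(\lambda\Ib)^{-1/2}/\delta).
\]
Combining with the determinant-trace inequality $\log\det\Vb_\tau/\lambda^d\le d\log(1+\tau LL_\phi^2/(d\lambda))$ yields $\alpha_\tau$. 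I expect verifying this blockwise sub-Gaussianity with the stated constant $\sigma_\Bcal$ to be the subtlest part of this term.

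For the bias term, I first bound it by $\sqrt{\tau L}\max|\eta+\zeta|\cdot\|\Phib_\tau^\top\|_{\Vb_\tau^{-1}\text{-op}}$. Concretely, $\|\sum_k\phib_k b_k\|_{\Vb_\tau^{-1}}\le \max_k|b_k|\sum_k\|\phib_k\|_{\Vb_\tau^{-1}}$, followed by Cauchy–Schwarz: $\sum_k\|\phib_k\|_{\Vb^{-1}}\le\sqrt{\tau L\sum_k\|\phib_k\|^2_{\Vb_{k-1}^{-1}}}\le\sqrt{\tau L\cdot 2d\log(1+\tau LL_\phi^2/(d\lambda))}$. This uses the elliptical potential lemma, with $\lambda\ge L_\phi^2$ ensuring each term is at most one, and $\Vb_\tau\succeq\Vb_{k-1}$. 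This matches the form of $\beta_\tau$, so it remains to show $|\eta_{\tau,j}|+|\zeta_{\tau,j}|\le\bar\Delta_m(\delta)$ for all $\tau,j$ with probability $1-\delta/2$. The bound on $\eta$ is deterministic: $|\eta_{\tau,j}|\le\Delta_m\le R_m\rho^m$ by Eq.~\eqref{eq:truncation-error}.

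The main obstacle is $\zeta$, because $\ub_{\tau,j}$ depends adaptively on the pre-block noises inside $\zeta$. I would therefore bound the vector $\Cb\sum_{h\ge j-1}\Ab^h\wb_{t-1-h}$ uniformly over the action direction. Write it as $\Cb\Ab^{j-1}\xi$ with $\xi=\sum_{h\ge0}\Ab^h\wb_{t-j-h}$, a sum of pre-block noise only (a sum of fixed matrices times independent sub-Gaussian vectors). Its projection onto any unit $\vb\in\RR^p$ is sub-Gaussian with proxy at most $\lambda_{\max}(\Sigmab_w)\|\Cb\|_2^2M_\rho^2\rho^{2(j-1)}/(1-\rho^2)$. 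Since $j-1\ge m$, this gives the factor $\rho^{2m}$. A $1/2$-net of the unit sphere in $\RR^p$ of size $5^p$ gives $\|\vb^\top\cdot\|\le 2\max_{\text{net}}$, and a union bound over the net and all $T$ times yields
\[
|\zeta_{\tau,j}|\le 2\rho^m\sqrt{2\lambda_{\max}(\Sigmab_w)\|\Cb\|_2^2M_\rho^2(1-\rho^2)^{-1}\log(2\cdot5^pT/\delta)},
\]
with $\|\ub_{\tau,j}\|\le1$. This is exactly the second summand of $\bar\Delta_m(\delta)$. A final union bound over the two events concludes the proof.
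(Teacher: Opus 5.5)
Your proposal is correct and follows essentially the same route as the paper. It uses the same three-term ridge decomposition and the same blockwise vector sub-Gaussianity of the current-block noise with constant $\sigma_{\mathcal{B}}$; your Cauchy--Schwarz over the geometric weights is just the proof of the Young-inequality step used there, fed into a blockwise method-of-mixtures bound. The bias is handled the same way too: a uniform $1/2$-net bound on the pre-block noise-tail vector, followed by Cauchy--Schwarz and the elliptical potential lemma.

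The only wrinkle, which the paper shares, is the split of the failure probability as $\delta/2+\delta/2$. Strictly, this gives the radius with $\delta$ replaced by $\delta/2$ (e.g.\ $\log(2/\delta)$ and $\bar\Delta_m(\delta/2)$), which changes only constants inside logarithms.
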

Lemma~\ref{lem:confidence-set} is the key ingredient that enables optimism in our block-level formulation.
At a high level, it plays a role analogous to the confidence set used in block-based analyses for linear bandits with finite memory~\citep{clerici2024linear}.
However, there is an important distinction.
In finite-memory linear bandits, the memory length is part of the model specification, and each informative reward is exactly linear in a finite-memory feature with martingale noise.

In our latent-dynamics model, however, the finite-memory representation is only an approximation of an infinite-memory reward process induced by the latent state dynamics.
This distinction is reflected in Lemma~\ref{lem:reward-decomposition}.
Each informative reward contains not only the current-block noise \(\epsilon_{\tau,j}\), but also the action-tail bias \(\eta_{\tau,j}\) and the process-noise tail \(\zeta_{\tau,j}\).
The term \(\eta_{\tau,j}\) appears because actions more than \(m\) steps in the past can still affect the current reward through the latent state, while \(\zeta_{\tau,j}\) captures the contribution of process noises generated before the current block.
Importantly, these pre-block noises can be correlated with the adaptively selected block through past reward observations, so \(\zeta_{\tau,j}\) is not a martingale-difference noise term with respect to the pre-block history.
Thus, unlike in the finite-memory setting, blocking does not yield a linear observation model with martingale noise.

We overcome these difficulties by exploiting the exponential stability of \(\Ab\).
The action-tail bias is bounded directly through the geometric decay of the Markov parameters.
For the process-noise tail, we separate the adaptively chosen action from the underlying vector-valued noise tail and establish a uniform high-probability bound over all informative indices.
This bound remains valid for the adaptively selected blocks.
The current-block noise is then controlled using a blockwise self-normalized martingale concentration argument.
Combining these bounds yields the confidence set in Lemma~\ref{lem:confidence-set}, extending the finite-memory block analysis to our latent-dynamics setting, where effective finite memory arises from stability rather than being built into the model.
   
\paragraph{Proxy regret.}
We control the proxy regret on the confidence event from Lemma~\ref{lem:confidence-set}.
By optimism, the proxy-optimal block is upper bounded by its UCB score, while Algorithm~\ref{alg:main-algorithm} selects a block maximizing this score. Hence, for every \(\tau\),
\[
    \bar r_m(\Ub^*)-\bar r_m(\Ub_\tau)
    \le
    2\gamma_{\tau-1}
    \sum_{j=m+1}^{m+L}
    \|\phib_j(\Ub_\tau)\|_{\Vb_{\tau-1}^{-1}} .
\]
Summing over \(\tau=1,\ldots,N\) and using the monotonicity of \(\gamma_\tau\), we obtain
\[
    \Rcal_{\mathrm{proxy}}
    \le
    2\gamma_N
    \sum_{\tau=1}^N
    \sum_{j=m+1}^{m+L}
    \|\phib_j(\Ub_\tau)\|_{\Vb_{\tau-1}^{-1}} .
\]
It remains to control the cumulative uncertainty term.
The main technical issue is that the UCB score uses the pre-block Gram matrix \(\Vb_{\tau-1}\), whereas the elliptical-potential argument naturally applies to sequentially updated within-block Gram matrices.
In the previous block-cyclic analysis~\citep{clerici2024linear}, this mismatch is absorbed by enlarging the confidence radius, which makes the block length appear multiplicatively in the leading estimation term.
Instead, we keep the confidence radius tied to \(\Vb_{\tau-1}\) and control the mismatch directly in the uncertainty sum.

Let \(\Vb_{\tau,j}:=\Vb_{\tau-1}+\sum_{i=m+1}^{j}\phib_i(\Ub_\tau)\phib_i(\Ub_\tau)^\top\), with \(\Vb_{\tau,m}:=\Vb_{\tau-1}\).
We decompose each uncertainty term into
\[
    \|\phib_j(\Ub_\tau)\|_{\Vb_{\tau-1}^{-1}}
    =
    \|\phib_j(\Ub_\tau)\|_{\Vb_{\tau,j-1}^{-1}}
    +
    \Big(
        \|\phib_j(\Ub_\tau)\|_{\Vb_{\tau-1}^{-1}}
        -
        \|\phib_j(\Ub_\tau)\|_{\Vb_{\tau,j-1}^{-1}}
    \Big).
\]
The first term is controlled by the usual elliptical-potential argument, and the second by a Gram-matrix telescoping argument. This yields
\[
    \sum_{\tau=1}^N
    \sum_{j=m+1}^{m+L}
    \|\phib_j(\Ub_\tau)\|_{\Vb_{\tau-1}^{-1}}
    \le
    \frac{2LL_\phi d}{\sqrt{\lambda}}
    +
    \sqrt{
        2Td
        \log
        \left(
            1+
            \frac{T L_\phi^2}{\lambda d}
        \right)
    } .
\]
Thus, \(L\) contributes only through an additive block-update mismatch term, rather than multiplying the leading \(\sqrt T\)-type uncertainty term.
Combining this bound with the proxy-regret decomposition proves Lemma~\ref{thm:main-regret}; the full derivation is deferred to Appendix~\ref{appx:main-theorem}.

\section{Experiments} \label{sec:experiments}
\begin{figure}[t!]
    \centering
    \includegraphics[width=1\linewidth]{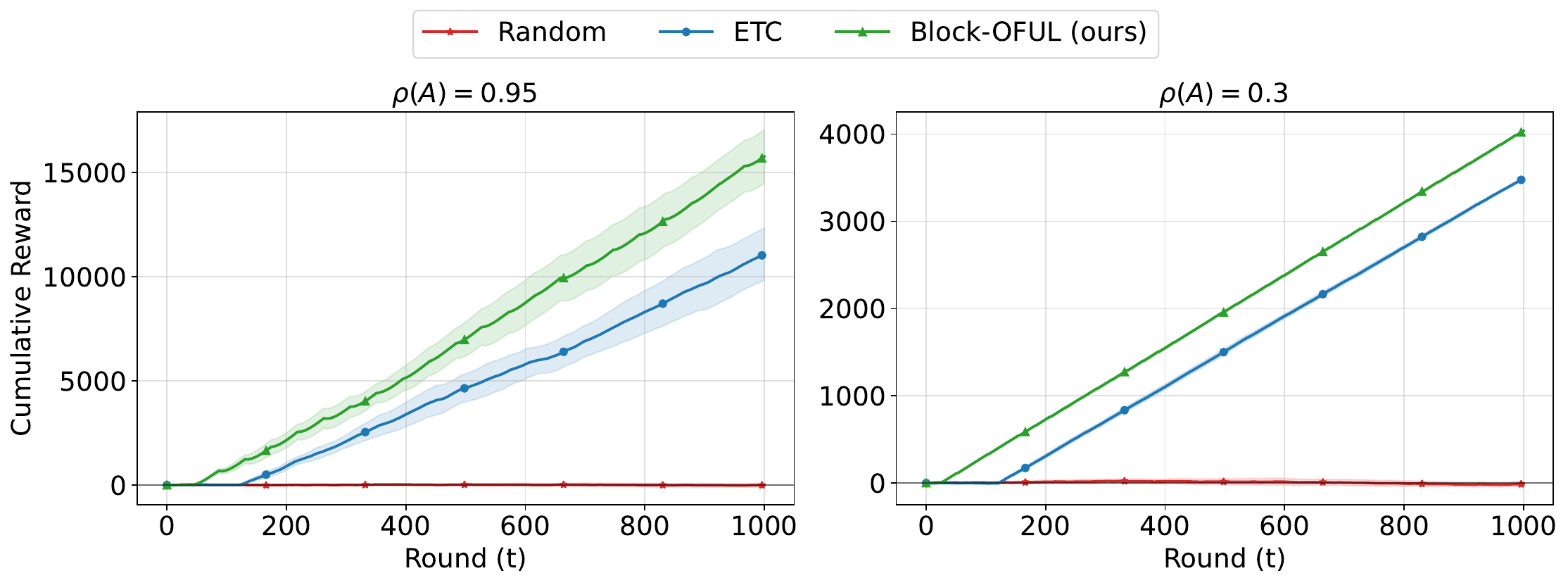}
    \caption{
    Cumulative reward on latent linear-dynamics bandit instances with different stability levels.
    }
    \label{fig:main-exp}
\end{figure}

We evaluate the proposed algorithm, \(\algname\), on two synthetic instances of the latent linear-dynamics model in Eq.~\eqref{eq:latent-dynamics}.
In both instances, the latent-state dimension is \(n=4\), the action dimension is \(p=3\), and the action set is
\(\Ucal=\{-1,+1\}^p\).
We use the same input and reward matrices
\[
\Bb=
\begin{bmatrix}
0.75 & 0.35 & 0.54\\
0.01 & 0.58 & 0.46\\
0.68 & -0.09 & -0.61\\
-0.15 & 0.83 & -0.47
\end{bmatrix},
\qquad
\Cb=
\begin{bmatrix}
0.67 & 0.04 & 0.72 & -0.19\\
0.20 & 0.65 & 0.00 & 0.76\\
0.54 & 0.46 & -0.61 & -0.47
\end{bmatrix},
\]
and consider two transition matrices with different stability levels:
\[
\begin{aligned}
\Ab_{0.95} &=
\begin{bmatrix}
0.52 & 0.14 & 0.27 & -0.18\\
0.14 & 0.50 & -0.08 & -0.06\\
0.27 & -0.08 & 0.75 & 0.03\\
-0.18 & -0.06 & 0.03 & 0.52
\end{bmatrix},
\qquad
\Ab_{0.3} &=
\begin{bmatrix}
0.16 & 0.04 & 0.09 & -0.06\\
0.04 & 0.13 & -0.02 & -0.03\\
0.09 & -0.02 & 0.23 & 0.00\\
-0.06 & -0.03 & 0.00 & 0.14
\end{bmatrix}.
\end{aligned}
\]
Here, \(\rho(\Ab_{0.95})=0.95\) and \(\rho(\Ab_{0.3})=0.3\).
These two instances allow us to examine the effect of the persistence of the latent dynamics.
We compare \(\algname\) with the ETC baseline~\citep{choi2026explorethencommit}.
For ETC, we use an exploration length of \(H=120\) in both instances.
For \(\algname\), we use \((m,L)=(8,36)\) when \(\rho(\Ab)=0.95\) and \((m,L)=(2,24)\) when \(\rho(\Ab)=0.3\).
The reward and process noises are Gaussian with variance proxy \(0.01\).

Figure~\ref{fig:main-exp} shows the cumulative reward over the interaction horizon.
Across both stability regimes, \(\algname\) consistently accumulates reward faster than ETC, while the random baseline remains close to zero.
For the instance with \(\rho(\Ab)=0.95\), the performance gap between \(\algname\) and ETC becomes increasingly pronounced over time, indicating that continued adaptation is particularly beneficial when the effects of past actions decay slowly.
For the faster-decaying instance with \(\rho(\Ab)=0.3\), both methods exhibit more stable reward accumulation, but \(\algname\) maintains a consistently higher cumulative-reward trajectory throughout most of the horizon.
Overall, these results demonstrate the benefit of adaptive block-level learning over the phase-separated ETC strategy across different levels of dynamical persistence.

\section{Conclusion}
We studied an endogenous nonstationary stochastic bandit problem with latent linear dynamics, where actions influence both immediate rewards and the future latent state.
This structure induces history-dependent rewards and makes long-horizon planning substantially more challenging than in classical stochastic bandits.
We showed that, under stable latent dynamics, the original infinite-memory problem can be reduced to a stationary block-level linear bandit problem through cyclic approximation.
Based on this reduction, we proposed an optimistic algorithm that learns the truncated Markov parameters while adaptively selecting blocks.
Our main result establishes a regret bound of order \(\widetilde{\Ocal}(\sqrt T)\), significantly improving over the previous \(\widetilde{\Ocal}(T^{2/3})\) explore-then-commit guarantee for the same model.
To the best of our knowledge, this is the first regret guarantee of order \(\widetilde{\Ocal}(\sqrt T)\) for latent linear-dynamics bandits with bilinear rewards and action-controlled latent states.
Future work includes developing computationally efficient oracles for block-UCB maximization, extending the analysis to adaptive or feedback benchmarks, and designing parameter-free variants that adapt to unknown stability and memory scales.

\clearpage
\bibliographystyle{plainnat}
\bibliography{references}

\clearpage
\appendix
\onecolumn
\renewcommand{\contentsname}{Contents of Appendix}
\addtocontents{toc}{\protect\setcounter{tocdepth}{3}}
{
  \hypersetup{hidelinks}
  \tableofcontents
}

\section{Proof of Proposition~\ref{prop:cyclic-approximation}} \label{appx:cyclic-approximation}
In this section, we provide the technical details for our cyclic approximation.
First, we show that under the stability condition, the truncation error decays geometrically.

\begin{lemma}[Finite-memory truncation error]
\label{lem:finite-memory-truncation-error}
Let \(\bar\mu_t(\ub_{1:t})\) denote the \(m\)-truncated version of the mean reward \(\mu_t(\ub_{1:t})\), defined by
\[
    \mu_t(\ub_{1:t})
    =
    \sum_{s=1}^{\infty}
    \ub_t^\top \Gb_{s-1}\ub_{t-s},
    \qquad
    \bar\mu_t(\ub_{1:t})
    :=
    \sum_{s=1}^{m}
    \ub_t^\top \Gb_{s-1}\ub_{t-s}.
\]
    
    Suppose Assumptions~\ref{assm:bounded-action} and~\ref{assm:stability} hold.
    Then, for every \(t \ge 1\), we have
    \begin{equation*}
        |\mu_t(\ub_{1:t})-\bar\mu_t(\ub_{1:t})|
        \le
        \Delta_m
        :=
        \sum_{h=m}^{\infty} \|\Gb_h\|_2
        \le
        \frac{\|\Cb\|_2 \| \Bb \|_2 M_\rho}{1-\rho}\rho^m,
    \end{equation*}
    and
    \begin{equation*}
        |\bar\mu_t(\ub_{1:t})|
        \le
        R_m
        :=
        \frac{\|\Cb\|_2 \|\Bb\|_2 M_\rho}{1-\rho}.
    \end{equation*}
\end{lemma}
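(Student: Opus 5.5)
The proof is a direct term-by-term bound, using only Assumptions~\ref{assm:bounded-action} and~\ref{assm:stability}. First I will bound each Markov parameter individually. By submultiplicativity of the spectral norm and Assumption~\ref{assm:stability},
\[
    \|\Gb_h\|_2 = \|\Cb\Ab^h\Bb\|_2 \le \|\Cb\|_2\|\Ab^h\|_2\|\Bb\|_2 \le \|\Cb\|_2\|\Bb\|_2 M_\rho \rho^h, \qquad h\ge 0 .
\]
Since \(\rho\in(0,1)\), this already shows \(\sum_{h\ge0}\|\Gb_h\|_2<\infty\). Hence the series defining \(\mu_t\) converges absolutely, and the difference \(\mu_t-\bar\mu_t\) is well defined. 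Recall the convention \(\ub_s=\zero_p\) for \(s\le 0\), so in fact only finitely many terms are nonzero for each \(t\).

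For the truncation error, I write
\[
    \mu_t(\ub_{1:t})-\bar\mu_t(\ub_{1:t}) = \sum_{s=m+1}^{\infty}\ub_t^\top\Gb_{s-1}\ub_{t-s}.
\]
Each summand satisfies \(|\ub_t^\top\Gb_{s-1}\ub_{t-s}|\le \|\ub_t\|_2\|\Gb_{s-1}\|_2\|\ub_{t-s}\|_2\le \|\Gb_{s-1}\|_2\). This uses Cauchy--Schwarz and \(\|\ub\|_2\le1\) for every \(\ub\in\Ucal\), together with \(\|\zero_p\|_2=0\le 1\) for padded indices. The triangle inequality and the reindexing \(h=s-1\) then give \(|\mu_t-\bar\mu_t|\le\sum_{h=m}^\infty\|\Gb_h\|_2=\Delta_m\). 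Inserting the geometric bound on \(\|\Gb_h\|_2\) and summing the series, \(\sum_{h\ge m}\rho^h=\rho^m/(1-\rho)\), yields the stated upper bound \(\|\Cb\|_2\|\Bb\|_2M_\rho\rho^m/(1-\rho)\).

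The bound on \(|\bar\mu_t|\) follows the same way. It is a finite sum over \(s=1,\dots,m\), so
\[
    |\bar\mu_t(\ub_{1:t})|\le\sum_{h=0}^{m-1}\|\Gb_h\|_2\le \|\Cb\|_2\|\Bb\|_2M_\rho\sum_{h=0}^{\infty}\rho^h = R_m .
\]
There is no real obstacle here. The only points that need care are noting absolute convergence, so the infinite sum can be split and bounded termwise, and handling the zero-padding convention for \(t-s\le0\).
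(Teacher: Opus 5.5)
Your proof is correct and follows the same route as the paper: you bound each summand of the tail $\sum_{s>m}$ by the spectral norm of the corresponding Markov parameter using $\|\ub\|_2\le 1$, then sum the geometric bound from Assumption~\ref{assm:stability}. You also explicitly verify the second claim $|\bar\mu_t|\le R_m$ by the same termwise argument, which the paper's written proof leaves implicit.
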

\begin{proof}[Proof of Lemma~\ref{lem:finite-memory-truncation-error}]
    
    Recall that for every \(t \ge 1\), we have
    \begin{equation*}
        \mu_t(\ub_{1:t})
        :=
        \sum_{s=1}^{\infty}
        \ub_t^\top \Gb_{s-1}\ub_{t-s} \, .
    \end{equation*}
    For a fixed memory cutoff \(m \ge 1\), the truncated mean reward is defined as
    \begin{equation*}
        \bar\mu_t(\ub_{1:t})
        :=
        \sum_{s=1}^{m}
        \ub_t^\top \Gb_{s-1}\ub_{t-s} \, .
    \end{equation*}
    
    Then,
    \begin{equation*}
        \mu_t(\ub_{1:t}) - \bar\mu_t(\ub_{1:t})
        =
        \sum_{s=m+1}^{\infty}
        \ub_t^\top \Gb_{s-1} \ub_{t-s} \, .
    \end{equation*}
    Since \(\|\ub\|_2 \le 1\) for every \(\ub \in \Ucal\), we have
    \begin{equation*}
        \left|
            \ub_t^\top \Gb_{s-1} \ub_{t-s}
        \right|
        \le
        \|\ub_t\|_2
        \|\Gb_{s-1}\|_2
        \|\ub_{t-s}\|_2
        \le
        \|\Gb_{s-1}\|_2 \, .
    \end{equation*}
    Therefore,
    \begin{equation*}
        \left|
            \mu_t(\ub_{1:t}) - \bar\mu_t(\ub_{1:t})
        \right|
        \le
        \sum_{s=m+1}^{\infty}
        \|\Gb_{s-1}\|_2
        =
        \sum_{h=m}^{\infty}
        \|\Gb_h\|_2 \, .
    \end{equation*}
    Finally, by Assumption~\ref{assm:stability},
    \begin{equation*}
        \|\Gb_h\|_2
        =
        \|\Cb \Ab^h \Bb\|_2
        \le
        \|\Cb\|_2
        \|\Ab^h\|_2
        \|\Bb\|_2
        \le
        \|\Cb\|_2
        \|\Bb\|_2
        M_\rho \rho^h .
    \end{equation*}
    Hence, we have
    \begin{equation*}
        \left|
            \mu_t(\ub_{1:t}) - \bar\mu_t(\ub_{1:t})
        \right|
        \le
        \|\Cb\|_2\|\Bb\|_2M_\rho
        \sum_{h=m}^{\infty}\rho^h
        =
        \frac{
            \|\Cb\|_2\|\Bb\|_2M_\rho
        }{
            1-\rho
        }
        \rho^m \, .
    \end{equation*}
\end{proof}

Now, we provide the proof of Proposition~\ref{prop:cyclic-approximation}.
\CyclicApproximation*
\begin{proof}[Proof of Proposition~\ref{prop:cyclic-approximation}]
    Fix an arbitrary action sequence \(\ub_{1:T}\in\Ucal^T\), and partition it into consecutive blocks of length \(m+L\).
    By definition,
    \begin{align*}
        \sum_{t=1}^{T}\bar\mu_t(\ub_{1:t})
        &=
        \sum_{\tau=1}^{T/(m+L)}
        \sum_{j=1}^{m+L}
        \bar\mu_{(\tau-1)(m+L)+j}
        \big(
            \ub_{1:(\tau-1)(m+L)+j}
        \big)
        \\
        &=
        \sum_{\tau=1}^{T/(m+L)}
        \sum_{j=1}^{m}
        \bar\mu_{(\tau-1)(m+L)+j}
        \big(
            \ub_{1:(\tau-1)(m+L)+j}
        \big)
        \\
        &\quad
        +
        \sum_{\tau=1}^{T/(m+L)}
        \sum_{j=m+1}^{m+L}
        \bar\mu_{(\tau-1)(m+L)+j}
        \big(
            \ub_{1:(\tau-1)(m+L)+j}
        \big) \, .
    \end{align*}
    By Lemma~\ref{lem:finite-memory-truncation-error}, each prefix term satisfies
    \(|\bar\mu_t|\le R_m\). Hence,
    \begin{equation}
        \sum_{\tau=1}^{T/(m+L)}
        \sum_{j=1}^{m}
        \bar\mu_{(\tau-1)(m+L)+j}
        \big(
            \ub_{1:(\tau-1)(m+L)+j}
        \big)
        \le
        \frac{Tm}{m+L}R_m \, .
        \label{eq:prefix-bound-arbitrary}
    \end{equation}
    
    For the suffix term, since \(j\ge m+1\), the truncated reward depends only on the previous \(m\) actions within the same block.
    Therefore, for each block \(\tau\),
    \begin{align*}
        \sum_{j=m+1}^{m+L}
        \bar\mu_{(\tau-1)(m+L)+j}
        \big(
            \ub_{1:(\tau-1)(m+L)+j}
        \big)
        &=
        \bar r_m
        \big(
            \ub_{(\tau-1)(m+L)+1:\tau(m+L)}
        \big)
        \\
        &\le
        \bar r_m(\Ub^*),
    \end{align*}
    where the last inequality follows from the optimality of \(\Ub^*\in\argmax_{\Ub\in\Ucal^{m+L}}\bar r_m(\Ub)\).
    Combining this with Eq.~\eqref{eq:prefix-bound-arbitrary}, we obtain
    \begin{equation}
        \sum_{t=1}^{T}\bar\mu_t(\ub_{1:t})
        \le
        \frac{T}{m+L}\bar r_m(\Ub^*)
        +
        \frac{Tm}{m+L}R_m .
        \label{eq:mubar-upper-arbitrary}
    \end{equation}
    
    We now apply the same decomposition to the cyclic sequence \(\Ub^*_{\mathrm{cyc}}\).
    For each block, the suffix contribution is exactly \(\bar r_m(\Ub^*)\), while the prefix contribution is bounded from below by \(-mR_m\).
    Thus,
    \begin{align}
        \sum_{t=1}^{T}
        \bar\mu_t
        \big(
            (\Ub^*_{\mathrm{cyc}})_{1:t}
        \big)
        &\ge
        -\frac{Tm}{m+L}R_m
        +
        \frac{T}{m+L}\bar r_m(\Ub^*).
        \label{eq:mubar-lower-cyclic}
    \end{align}
    
    Combining Eqs.~\eqref{eq:mubar-upper-arbitrary} and~\eqref{eq:mubar-lower-cyclic}, we get
    \begin{equation*}
        \sum_{t=1}^{T}\bar\mu_t(\ub_{1:t})
        -
        \sum_{t=1}^{T}
        \bar\mu_t
        \big(
            (\Ub^*_{\mathrm{cyc}})_{1:t}
        \big)
        \le
        \frac{2Tm}{m+L}R_m \, .
    \end{equation*}
    Since this holds for every \(\ub_{1:T}\in\Ucal^T\), we have
    \begin{equation}
        \sup_{\ub_{1:T}\in\Ucal^T}
        \sum_{t=1}^{T}\bar\mu_t(\ub_{1:t})
        -
        \sum_{t=1}^{T}
        \bar\mu_t
        \big(
            (\Ub^*_{\mathrm{cyc}})_{1:t}
        \big)
        \le
        \frac{2Tm}{m+L}R_m .
        \label{eq:proxy-mubar-approx}
    \end{equation}
    
    It remains to transfer this bound from the truncated rewards to the original rewards.
    By Lemma~\ref{lem:finite-memory-truncation-error}, for every action sequence \(\ub_{1:T}\),
    \begin{equation*}
        \left|
            \sum_{t=1}^{T}\mu_t(\ub_{1:t})
            -
            \sum_{t=1}^{T}\bar\mu_t(\ub_{1:t})
        \right|
        \le
        \sum_{t=1}^{T}
        \left|
            \mu_t(\ub_{1:t})
            -
            \bar\mu_t(\ub_{1:t})
        \right|
        \le
        T\Delta_m \, .
    \end{equation*}
    Therefore,
    \begin{align*}
        \mathrm{OPT}_T
        -
        \sum_{t=1}^{T}
        \mu_t
        \big(
            (\Ub^*_{\mathrm{cyc}})_{1:t}
        \big)
        &\le
        \sup_{\ub_{1:T}\in\Ucal^T}
        \sum_{t=1}^{T}\bar\mu_t(\ub_{1:t})
        -
        \sum_{t=1}^{T}
        \bar\mu_t
        \big(
            (\Ub^*_{\mathrm{cyc}})_{1:t}
        \big)
        +
        2T\Delta_m
        \\
        &\le
        \frac{2Tm}{m+L}R_m
        +
        2T\Delta_m ,
    \end{align*}
    where the last inequality follows from Eq.~\eqref{eq:proxy-mubar-approx}.
\end{proof}

\section{Proof of Theorem~\ref{cor:actual-regret}} \label{appx:main-theorem}

\subsection{Proof of Lemma~\ref{lem:reward-decomposition}}
\RewardDecomposition*
\begin{proof}[Proof of Lemma~\ref{lem:reward-decomposition}]
    We first unroll the latent dynamics. Since \(\xb_1=\zero_n\), for any \(t\ge1\),
    \begin{align*}
        \xb_t
        &=
        \sum_{i=1}^{t-1}
        \Ab^{t-1-i}
        \left(
            \Bb\ub_i+\wb_i
        \right)  \\
        &=
        \sum_{i=1}^{t-1}
        \Ab^{t-1-i}\Bb\ub_i
        +
        \sum_{i=1}^{t-1}
        \Ab^{t-1-i}\wb_i .
    \end{align*}
    Therefore,
    \begin{align*}
        r_t
        &=
        \ub_t^\top \Cb\xb_t + z_t \\
        &=
        \sum_{i=1}^{t-1}
        \ub_t^\top
        \Cb\Ab^{t-1-i}\Bb\ub_i
        +
        \sum_{i=1}^{t-1}
        \ub_t^\top
        \Cb\Ab^{t-1-i}\wb_i
        +
        z_t .
    \end{align*}
    Reindexing the first sum by \(s=t-i\), and the second sum by \(h=t-1-i\), gives
    \begin{align*}
        r_t
        &=
        \sum_{s=1}^{\infty}
        \ub_t^\top
        \Gb_{s-1}
        \ub_{t-s}
        +
        \sum_{h=0}^{\infty}
        \ub_t^\top
        \Cb\Ab^h
        \wb_{t-1-h}
        +
        z_t \, ,
    \end{align*}
    where \(\Gb_h:=\Cb\Ab^h\Bb\), and we use the convention that \(\ub_s=\zero_p\) and \(\wb_s=\zero_n\) for \(s\le0\).
    
    Now fix a block \(\tau\in[N]\) and an informative position \(j\in\{m+1,\ldots,m+L\}\), and set \(t=t_{\tau,j}\).
    Since \(\ub_{t_{\tau,j}}=\ub_{\tau,j}\), the above identity yields
    \begin{align*}
        r_{\tau,j}
        &=
        \sum_{s=1}^{\infty}
        \ub_{\tau,j}^\top
        \Gb_{s-1}
        \ub_{t_{\tau,j}-s}
        +
        z_{t_{\tau,j}}
        +
        \sum_{h=0}^{\infty}
        \ub_{\tau,j}^\top
        \Cb\Ab^h
        \wb_{t_{\tau,j}-1-h}  \\
        &=
        \sum_{s=1}^{m}
        \ub_{\tau,j}^\top
        \Gb_{s-1}
        \ub_{t_{\tau,j}-s}
        +
        \sum_{s=m+1}^{\infty}
        \ub_{\tau,j}^\top
        \Gb_{s-1}
        \ub_{t_{\tau,j}-s} \\
        &\quad
        +
        z_{t_{\tau,j}}
        +
        \sum_{h=0}^{j-2}
        \ub_{\tau,j}^\top
        \Cb\Ab^h
        \wb_{t_{\tau,j}-1-h}
        +
        \sum_{h=j-1}^{\infty}
        \ub_{\tau,j}^\top
        \Cb\Ab^h
        \wb_{t_{\tau,j}-1-h} \, .
    \end{align*}
    Since \(j\ge m+1\), for every \(s=1,\ldots,m\), the action \(\ub_{t_{\tau,j}-s}\) lies in the same block and equals \(\ub_{\tau,j-s}\).
    Hence,
    \begin{align*}
        \sum_{s=1}^{m}
        \ub_{\tau,j}^\top
        \Gb_{s-1}
        \ub_{t_{\tau,j}-s}
        &=
        \sum_{s=1}^{m}
        \ub_{\tau,j}^\top
        \Gb_{s-1}
        \ub_{\tau,j-s}  \\
        &=
        \left\langle
            \phib_j(\Ub_\tau),
            \thetab_m^*
        \right\rangle ,
    \end{align*}
    by the definitions of \(\phib_j(\Ub_\tau)\) and \(\thetab_m^*\).
    Collecting the remaining terms according to the definitions of
    \(\epsilon_{\tau,j}\), \(\eta_{\tau,j}\), and \(\zeta_{\tau,j}\) gives
    \begin{equation*}
        r_{\tau,j}
        =
        \left\langle
            \phib_j(\Ub_\tau),
            \thetab_m^*
        \right\rangle
        +
        \epsilon_{\tau,j}
        +
        \eta_{\tau,j}
        +
        \zeta_{\tau,j} \, .
    \end{equation*}
\end{proof}

\subsection{Proof of Lemma~\ref{lem:confidence-set}}
\ConfidenceSet*

\begin{proof}[Proof of Lemma~\ref{lem:confidence-set}]

For each block \(s\in[N]\) and informative position \(j\in\{m+1,\ldots,m+L\}\), let
\[
    \phib_{s,j}:=\phib_j(\Ub_s).
\]
Define the block feature matrix and reward vector by
\[
    \Phib_s
    :=
    \big[
        \phib_{s,m+1},\ldots,\phib_{s,m+L}
    \big]
    \in \RR^{d\times L},
    \qquad
    \rb_s
    :=
    (r_{s,m+1},\ldots,r_{s,m+L})^\top
    \in \RR^L .
\]
Similarly, define
\[
    \epsilonb_s
    :=
    (\epsilon_{s,m+1},\ldots,\epsilon_{s,m+L})^\top,
    \quad
    \etab_s
    :=
    (\eta_{s,m+1},\ldots,\eta_{s,m+L})^\top,
    \quad
    \zetab_s
    :=
    (\zeta_{s,m+1},\ldots,\zeta_{s,m+L})^\top .
\]
Then Lemma~\ref{lem:reward-decomposition} gives the vector form
\[
    \rb_s
    =
    \Phib_s^\top \thetab_m^*
    +
    \epsilonb_s
    +
    \etab_s
    +
    \zetab_s .
\]

Recall that
\[
    \Vb_\tau
    =
    \lambda \Ib_d
    +
    \sum_{s=1}^{\tau}
    \Phib_s\Phib_s^\top,
    \qquad
    \widehat{\thetab}_\tau
    =
    \Vb_\tau^{-1}
    \sum_{s=1}^{\tau}
    \Phib_s \rb_s .
\]
Using the reward decomposition, we obtain
\begin{align*}
    \Vb_\tau
    \big(
        \widehat{\thetab}_\tau
        -
        \thetab_m^*
    \big)
    &=
    -\lambda \thetab_m^*
    +
    \sum_{s=1}^{\tau}
    \Phib_s \epsilonb_s
    +
    \sum_{s=1}^{\tau}
    \Phib_s(\etab_s+\zetab_s).
\end{align*}
Therefore,
\begin{equation}
    \left\|
        \widehat{\thetab}_\tau
        -
        \thetab_m^*
    \right\|_{\Vb_\tau}
    \le
    \sqrt{\lambda}\,
    \|\thetab_m^*\|_2
    +
    \underbrace{
        \left\|
            \sum_{s=1}^{\tau}
            \Phib_s\epsilonb_s
        \right\|_{\Vb_\tau^{-1}}
    }_{\text{block noise}}
    +
    \underbrace{
        \left\|
            \sum_{s=1}^{\tau}
            \Phib_s(\etab_s+\zetab_s)
        \right\|_{\Vb_\tau^{-1}}
    }_{\text{bias}}
    \, .
    \label{eq:confi_tri_decomp}
\end{equation}

The first term is bounded by \(\sqrt{\lambda}S_m\).
It remains to control the current-block noise term and the bias term.
For the current-block noise term, we introduce the following lemmas.
\begin{lemma}[Blockwise sub-Gaussianity of the current-block noise]
\label{lem:block_subgaussian}
    Let \(\Hcal_{\tau-1}\) denote the sigma-field just before the process and reward noises in block \(\tau\) are revealed; in particular, \(\Ub_\tau\) and \(\Phib_\tau\) are \(\Hcal_{\tau-1}\)-measurable.
    Under Assumptions~\ref{assm:bounded-action} and~\ref{assm:noise}, for every \(\tau\in[N]\), every \(\xi\in\RR^L\), and every \(\eta\in\RR\),
    \begin{equation*}
        \mathbb{E}\!\left[
            \exp\!\left(\eta \xi^\top \epsilonb_\tau\right)
            \,\middle|\,
            \Hcal_{\tau-1}
        \right]
        \le
        \exp\!\left(
            \frac{\eta^2\sigma_{\Bcal}^2}{2}
            \|\xi\|_2^2
        \right),
    \end{equation*}
    where
    \begin{equation*}
        \sigma_{\Bcal}^2
        :=
        \sigma_z^2
        +
        \lambda_{\max}(\Sigmab_w)
        \frac{\|\Cb\|_2^2 M_\rho^2}{(1-\rho)^2}.
    \end{equation*}
    Equivalently, \(\epsilonb_\tau\) is conditionally \(\sigma_{\Bcal}\)-sub-Gaussian as a vector in \(\RR^L\).
\end{lemma}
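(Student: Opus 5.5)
The plan is to exploit the fact that Algorithm~\ref{alg:main-algorithm} commits to the whole block \(\Ub_\tau\) before any noise of block \(\tau\) is revealed. After rewriting \(\xi^\top\epsilonb_\tau\) as a linear form in the in-block noises \(z_t\) and \(\wb_k\), all coefficients are \(\Hcal_{\tau-1}\)-measurable. The conditional MGF can then be peeled off round by round with the tower property and Assumption~\ref{assm:noise}.

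\emph{Step 1 (regrouping).} Write \(t_j:=t_{\tau,j}\) and \(t_0:=(\tau-1)(m+L)\). In \(\epsilon_{\tau,j}\) the process-noise indices are \(k=t_j-1-h\) with \(0\le h\le j-2\), i.e.\ \(t_0+1\le k\le t_j-1\), all inside block \(\tau\). Exchanging the sums gives
\[
\xi^\top\epsilonb_\tau=\sum_{j=m+1}^{m+L}\xi_j z_{t_j}+\sum_{k=t_0+1}^{t_0+m+L-1}\ab_k^\top\wb_k,
\qquad
\ab_k:=\sum_{j:\,t_j>k}\xi_j\,(\Ab^{t_j-1-k})^\top\Cb^\top\ub_{\tau,j}.
\]
Every \(\ab_k\) and every coefficient \(\xi_j\) is \(\Hcal_{\tau-1}\)-measurable, because the block actions are.

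\emph{Step 2 (peeling).} Process the rounds \(t=t_0+m+L,\dots,t_0+1\) in reverse order. At round \(t\) the only fresh noises are \(z_t\) (present only if \(t=t_j\) for some informative \(j\)) and \(\wb_t\), and both are revealed after \(\ub_t\). Conditionally on \(\Fcal_{t-1}\supseteq\Hcal_{\tau-1}\) they are independent and sub-Gaussian, and their coefficients are \(\Fcal_{t-1}\)-measurable. Hence
\[
\mathbb E\bigl[\exp(\eta(\xi_j z_t+\ab_t^\top\wb_t))\mid\Fcal_{t-1}\bigr]\le\exp\Bigl(\tfrac{\eta^2}{2}\bigl(\sigma_z^2\xi_j^2+\lambda_{\max}(\Sigmab_w)\|\ab_t\|_2^2\bigr)\Bigr).
\]
The right-hand side is \(\Hcal_{\tau-1}\)-measurable, so it can be pulled out, and iterating the tower property yields
\[
\mathbb E[\exp(\eta\xi^\top\epsilonb_\tau)\mid\Hcal_{\tau-1}]\le\exp\Bigl(\tfrac{\eta^2}{2}\bigl(\sigma_z^2\|\xi\|_2^2+\lambda_{\max}(\Sigmab_w)\textstyle\sum_k\|\ab_k\|_2^2\bigr)\Bigr).
\]

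\emph{Step 3 (the geometric sum, the main obstacle).} It remains to show \(\sum_k\|\ab_k\|_2^2\le\frac{\|\Cb\|_2^2M_\rho^2}{(1-\rho)^2}\|\xi\|_2^2\), without picking up a factor \(L\). Using Assumption~\ref{assm:stability} and \(\|\ub\|_2\le1\),
\[
\|\ab_k\|_2\le\|\Cb\|_2M_\rho\sum_{j:\,t_j>k}|\xi_j|\rho^{t_j-1-k}.
\]
Apply Cauchy--Schwarz with weights \(\rho^{t_j-1-k}\). Since \(\sum_j\rho^{t_j-1-k}\le1/(1-\rho)\), this gives
\[
\|\ab_k\|_2^2\le\frac{\|\Cb\|_2^2M_\rho^2}{1-\rho}\sum_j\xi_j^2\rho^{t_j-1-k}.
\]
Summing over \(k\) and swapping the order of summation, each \(j\) contributes \(\xi_j^2\sum_{k<t_j}\rho^{t_j-1-k}\le\xi_j^2/(1-\rho)\). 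This gives the claimed bound, and hence the variance proxy \(\sigma_{\Bcal}^2\|\xi\|_2^2\).

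The delicate point is this weighted Cauchy--Schwarz step. A naive bound \(\|\ab_k\|_2\le\|\Cb\|_2M_\rho\|\xi\|_1/(1-\rho)\) would cost a factor of \(L\), whereas splitting the geometric weight evenly between the two Cauchy--Schwarz factors keeps the constant dimension-free.
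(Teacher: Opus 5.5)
Your proposal is correct and follows the paper's proof essentially step for step. Both regroup $\xi^\top\epsilonb_\tau$ into in-block noises with $\Hcal_{\tau-1}$-measurable coefficient vectors (the paper's $g_{\tau,q}(\xi)$) and apply the same stability bound; your weighted Cauchy--Schwarz step is exactly the $\ell^1$--$\ell^2$ Young convolution inequality the paper cites, giving the same constant $\|\Cb\|_2^2M_\rho^2/(1-\rho)^2$, while your explicit reverse-order tower-property peeling justifies more carefully the step where the paper simply combines the separate reward-noise and process-noise MGF bounds.
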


\begin{lemma}[Blockwise self-normalized bound] \label{lem:blockwise-self-normalized}
Let \((\Hcal_\tau)_{\tau\ge0}\) be a filtration.
Let \(\Phib_\tau\in\RR^{d\times L}\) be \(\Hcal_{\tau-1}\)-measurable, and let
\(\epsilonb_\tau\in\RR^L\) be \(\Hcal_\tau\)-measurable.
Assume that \(\epsilonb_\tau\) is conditionally \(\sigma\)-sub-Gaussian as a vector: for every
\(\xib\in\RR^L\) and every \(\alpha\in\RR\),
\[
    \mathbb E\!\left[
        \exp\!\left(
            \alpha \xib^\top \epsilonb_\tau
        \right)
        \,\middle|\,
        \Hcal_{\tau-1}
    \right]
    \le
    \exp\!\left(
        \frac{\alpha^2\sigma^2}{2}
        \|\xib\|_2^2
    \right).
\]
Let \(\lambda>0\), \(\Vb_0=\lambda\Ib_d\), and define
\[
    \Vb_\tau
    :=
    \Vb_0
    +
    \sum_{s=1}^{\tau}
    \Phib_s\Phib_s^\top,
    \qquad
    \Sbb_\tau
    :=
    \sum_{s=1}^{\tau}
    \Phib_s\epsilonb_s .
\]
Then, for any \(\delta\in(0,1)\), with probability at least \(1-\delta\), the following holds simultaneously for all \(\tau\ge0\):
\[
    \|\Sbb_\tau\|_{\Vb_\tau^{-1}}
    \le
    \sigma
    \sqrt{
        2\log
        \left(
            \frac{
                \det(\Vb_\tau)^{1/2}
            }{
                \det(\Vb_0)^{1/2}\delta
            }
        \right)
    } .
\]
\end{lemma}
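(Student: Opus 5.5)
This is the standard Abbasi-Yadkori et al. self-normalized argument with blocks in place of single rounds. The only change is that the scalar quadratic form is replaced by a vector-valued one.

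\emph{Supermartingale.} For each $\xb\in\RR^d$ define
\[
M_\tau^{\xb}:=\exp\!\Big(\tfrac{1}{\sigma}\xb^\top\Sbb_\tau-\tfrac12\xb^\top(\Vb_\tau-\Vb_0)\xb\Big).
\]
Apply the conditional sub-Gaussian assumption with $\xib=\Phib_\tau^\top\xb$ (which is $\Hcal_{\tau-1}$-measurable) and $\alpha=1/\sigma$. This gives
\[
\EE\big[\exp(\tfrac1\sigma\xb^\top\Phib_\tau\epsilonb_\tau)\mid\Hcal_{\tau-1}\big]\le\exp\big(\tfrac12\|\Phib_\tau^\top\xb\|_2^2\big)=\exp\big(\tfrac12\xb^\top\Phib_\tau\Phib_\tau^\top\xb\big).
\]
Hence $(M_\tau^{\xb})_{\tau\ge0}$ is a nonnegative supermartingale with $M_0^{\xb}=1$ and $\EE[M_\tau^{\xb}]\le1$.

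\emph{Method of mixtures.} Let $\xb\sim\mathcal N(\zero_d,\Vb_0^{-1})$, drawn independently of everything else, and set $\bar M_\tau:=\EE_{\xb}[M_\tau^{\xb}]$. By Tonelli, $\bar M_\tau$ is again a nonnegative supermartingale with $\EE\bar M_\tau\le 1$. Completing the square in the Gaussian integral gives the closed form
\[
\bar M_\tau=\Big(\tfrac{\det\Vb_0}{\det\Vb_\tau}\Big)^{1/2}\exp\Big(\tfrac{1}{2\sigma^2}\|\Sbb_\tau\|_{\Vb_\tau^{-1}}^2\Big).
\]
This uses $\Vb_0+(\Vb_\tau-\Vb_0)=\Vb_\tau$ and the fact that the linear term is $\Sbb_\tau/\sigma$.

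\emph{Uniform in time.} Introduce the stopping time $\kappa:=\min\{\tau\ge0:\bar M_\tau>1/\delta\}$. By Fatou's lemma and optional stopping applied to $\bar M_{\kappa\wedge\tau}$, we get $\EE[\bar M_\kappa\mathbf 1\{\kappa<\infty\}]\le1$. Markov's inequality then gives $\Pr(\exists\tau:\bar M_\tau>1/\delta)\le\delta$. On the complement, take logarithms in the closed form and rearrange to obtain
\[
\|\Sbb_\tau\|^2_{\Vb_\tau^{-1}}\le 2\sigma^2\log\big(\det(\Vb_\tau)^{1/2}\det(\Vb_0)^{-1/2}/\delta\big)
\]
simultaneously for all $\tau$.

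\emph{Where care is needed.} The Gaussian integral is routine. The step that needs attention is the supermartingale step: the adapted direction $\xib=\Phib_\tau^\top\xb$ must be plugged in for a fixed, non-random $\xb$ before mixing. The conditional bound is used only for $\Hcal_{\tau-1}$-measurable $\xib$, and $\Phib_\tau$ is $\Hcal_{\tau-1}$-measurable by hypothesis, so this is legitimate. The other subtle point is the uniformity over $\tau$, which is handled by the stopping-time argument rather than by a union bound.
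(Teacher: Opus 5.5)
Your proposal is correct and follows the paper's proof essentially step for step. Both arguments use the same three ingredients: the per-direction exponential supermartingale obtained by plugging in the $\Hcal_{\tau-1}$-measurable direction $\xib=\Phib_\tau^\top\xb$, the Gaussian mixture with its completed-square closed form, and a time-uniform maximal inequality for the mixed supermartingale. The differences are cosmetic. You put $1/\sigma$ in the exponent and mix over $\mathcal N(\zero_d,\Vb_0^{-1})$, whereas the paper mixes over $\mathcal N(\zero_d,\sigma^{-2}\Vb_0^{-1})$; the substitution $\vb=\xb/\sigma$ converts one into the other. You also prove the uniform-in-$\tau$ step inline via the stopping time, Fatou, and Markov, which is exactly the content of the Ville inequality the paper cites.
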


Lemma~\ref{lem:block_subgaussian} shows that the vector of current-block noises
\(\epsilonb_\tau\) is conditionally sub-Gaussian at the block level.
This is needed because the \(L\) informative rewards within a block are not treated as independent scalar observations; rather, they enter the regression update jointly through the matrix \(\Phib_\tau\).
Lemma~\ref{lem:blockwise-self-normalized} then converts this blockwise sub-Gaussianity into a self-normalized concentration bound for the martingale sum
$
    \sum_{s=1}^{\tau}\Phib_s\epsilonb_s \, .
$
It is the block/vector-valued analogue of the standard self-normalized inequality for scalar martingale noise used in linear bandits~\citep{abbasi2011improved}.
Combining these two lemmas, the event
\begin{equation} \label{eq:confi_second_term}
    \Omega_{\mathrm{sn}}(\delta)
    :=
    \left\{
    \forall \tau\in[N]:
    \left\|
        \sum_{s=1}^{\tau}
        \Phib_s\epsilonb_s
    \right\|_{\Vb_\tau^{-1}}
    \le
    \sigma_{\Bcal}
    \sqrt{
        2\log
        \left(
            \frac{
                \det(\Vb_\tau)^{1/2}
            }{
                \det(\lambda\Ib_d)^{1/2}\delta
            }
        \right)
    }
    \right\}
\end{equation}
holds with probability at least \(1-\delta\).
Now it remains to control the bias term. We use the following high-probability bound for the action-tail bias and the process-noise tail.
\begin{lemma}[High-probability bias bound]
\label{lem:hp_noise_tail}
    Fix \(\delta\in(0,1)\). Then, with probability at least \(1-\delta\), the event
    \begin{equation*}
        \Omega_{\mathrm{bias}}(\delta)
        :=
        \left\{
            \forall \tau\in[N],\ 
            \forall j\in\{m+1,\ldots,m+L\},\ 
            |\eta_{\tau,j}+\zeta_{\tau,j}|
            \le
            \bar\Delta_m(\delta)
        \right\}
    \end{equation*}
    holds, where
    \begin{equation*}
        \bar{\Delta}_m(\delta)
        :=
        \frac{
            \|\Cb\|_2\|\Bb\|_2M_\rho
        }{
            1-\rho
        }
        \rho^m
        +
        2\rho^m
        \sqrt{
            2\lambda_{\max}(\Sigmab_w)\,
            \frac{\|\Cb\|_2^2M_\rho^2}{1-\rho^2}
            \log\frac{2\cdot 5^p T}{\delta}
        }
         \, .
    \end{equation*}
\end{lemma}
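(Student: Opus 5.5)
We bound the two pieces separately and use the triangle inequality $|\eta_{\tau,j}+\zeta_{\tau,j}|\le|\eta_{\tau,j}|+|\zeta_{\tau,j}|$. The action-tail bias is deterministic given the actions. Since $\|\ub\|_2\le 1$ for all $\ub\in\Ucal$,
\[
|\eta_{\tau,j}|\le\sum_{s=m+1}^\infty\|\Gb_{s-1}\|_2=\Delta_m\le \frac{\|\Cb\|_2\|\Bb\|_2M_\rho}{1-\rho}\rho^m,
\]
by Lemma~\ref{lem:finite-memory-truncation-error}. This gives the first term of $\bar\Delta_m(\delta)$ surely, for every $\tau$ and $j$.

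For the process-noise tail, the difficulty is that $\ub_{\tau,j}$ is chosen adaptively and may depend on the pre-block noises $\wb_{t_{\tau,j}-1-h}$ with $h\ge j-1$. We therefore separate the action from the noise by writing $\zeta_{\tau,j}=\ub_{\tau,j}^\top\Cb\,\yb_{\tau,j}$, where
\[
\yb_{\tau,j}:=\sum_{h\ge j-1}\Ab^h\wb_{t_{\tau,j}-1-h}=\Ab^{j-1}\xb^{w}_{t_{\tau,1}}
\]
and $\xb^w_{t}:=\sum_{i<t}\Ab^{t-1-i}\wb_i$ is the noise-driven part of the state at the start of the block. Then
\[
|\zeta_{\tau,j}|\le\sup_{\|\ub\|\le1}|\ub^\top\Cb\Ab^{j-1}\xb^w_{t_{\tau,1}}| = \|\Cb\Ab^{j-1}\xb^w_{t_{\tau,1}}\|_2 .
\]
This quantity no longer involves the adaptive action, so we only need a uniform bound on the $p$-dimensional vector $\Cb\Ab^{j-1}\xb^w_{t_{\tau,1}}$.

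We control this vector with a $1/2$-net $\Ncal$ of the unit sphere in $\RR^p$, of size at most $5^p$. The standard net argument gives $\|\vb\|_2\le 2\max_{\ub\in\Ncal}\ub^\top\vb$, which accounts for the factor $2$ and the $5^p$ in $\bar\Delta_m(\delta)$. For a fixed $\ub\in\Ncal$ and fixed $(\tau,j)$, the scalar
\[
\ub^\top\Cb\Ab^{j-1}\xb^w_{t_{\tau,1}}=\sum_{i<t_{\tau,1}}\ub^\top\Cb\Ab^{j-1+t_{\tau,1}-1-i}\wb_i
\]
is a sum with deterministic coefficients, and it is sub-Gaussian. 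We see this by iterating the conditional MGF bound of Assumption~\ref{assm:noise} backward in $i$; no independence is needed, because the coefficients are deterministic. The variance proxy is at most
\[
\lambda_{\max}(\Sigmab_w)\|\Cb\|_2^2M_\rho^2\sum_{h\ge j-1}\rho^{2h}\le \lambda_{\max}(\Sigmab_w)\frac{\|\Cb\|_2^2M_\rho^2}{1-\rho^2}\rho^{2m},
\]
using $j-1\ge m$. A two-sided sub-Gaussian tail then gives the deviation
\[
\rho^m\sqrt{2\lambda_{\max}(\Sigmab_w)\|\Cb\|_2^2M_\rho^2(1-\rho^2)^{-1}\log(2\cdot 5^pT/\delta)}
\]
with failure probability $\delta/(5^pT)$. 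A union bound over $\ub\in\Ncal$ and over the at most $NL\le T$ pairs $(\tau,j)$ gives total probability $\delta$. Combining the net step with the sure bound on $\eta_{\tau,j}$ yields $\bar\Delta_m(\delta)$.

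The main obstacle is the adaptivity just described: $\ub_{\tau,j}$ is correlated with the pre-block noises. This is resolved because the supremum over $\ub$ removes the action entirely and leaves the fixed-coefficient vectors $\xb^w_{t}$. We also need to check that the sub-Gaussianity of $\sum_i \cb_i^\top\wb_i$ with deterministic $\cb_i$ follows from the conditional assumption by the tower property, which it does.
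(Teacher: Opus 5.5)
Your proposal is correct and follows essentially the same route as the paper. Both arguments bound $|\eta_{\tau,j}|\le\Delta_m$ deterministically, remove the adaptive action via $|\zeta_{\tau,j}|\le\|\Cb\sum_{h\ge j-1}\Ab^h\wb_{t_{\tau,j}-1-h}\|_2$, and control this action-free vector with a $1/2$-net of size $5^p$, per-direction variance proxy $\lambda_{\max}(\Sigmab_w)\|\Cb\|_2^2M_\rho^2\rho^{2m}/(1-\rho^2)$, and a union bound over the net and the at most $T$ informative indices. Your rewriting of the tail as $\Cb\Ab^{j-1}\xb^w_{t_{\tau,1}}$ and your tower-property justification of sub-Gaussianity for fixed-coefficient noise sums only add detail to steps the paper asserts directly.
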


By Lemma~\ref{lem:hp_noise_tail}, the event
\(\Omega_{\mathrm{bias}}(\delta)\) holds with probability at least \(1-\delta\).
On \(\Omega_{\mathrm{bias}}(\delta)\), for every \(\tau\in[N]\),
\begin{align*}
    \left\|
        \sum_{s=1}^{\tau}
        \Phib_s(\etab_s+\zetab_s)
    \right\|_{\Vb_\tau^{-1}}
    &=
    \left\|
        \sum_{s=1}^{\tau}
        \sum_{j=m+1}^{m+L}
        \phib_{s,j}
        (\eta_{s,j}+\zeta_{s,j})
    \right\|_{\Vb_\tau^{-1}} \\
    &\le
    \sum_{s=1}^{\tau}
    \sum_{j=m+1}^{m+L}
    |\eta_{s,j}+\zeta_{s,j}|
    \|\phib_{s,j}\|_{\Vb_\tau^{-1}} \\
    &\le
    \bar\Delta_m(\delta)
    \sum_{s=1}^{\tau}
    \sum_{j=m+1}^{m+L}
    \|\phib_{s,j}\|_{\Vb_\tau^{-1}} \\
    &\le
    \bar\Delta_m(\delta)
    \sqrt{
        \tau L
        \sum_{s=1}^{\tau}
        \sum_{j=m+1}^{m+L}
        \|\phib_{s,j}\|_{\Vb_\tau^{-1}}^2
    },
\end{align*}
where the last inequality follows from Cauchy--Schwarz.

It remains to bound the sum of squared norms. 
Let \(\Vb_{s,j-1}\) denote the sequential Gram matrix immediately before adding the feature \(\phib_{s,j}\); that is,
\[
    \Vb_{s,j-1}
    :=
    \lambda \Ib_d
    +
    \sum_{q=1}^{s-1}
    \sum_{i=m+1}^{m+L}
    \phib_{q,i}\phib_{q,i}^\top
    +
    \sum_{i=m+1}^{j-1}
    \phib_{s,i}\phib_{s,i}^\top .
\]
Since \(\Vb_\tau\succeq \Vb_{s,j-1}\) for all \(s\le\tau\), we have
\[
    \|\phib_{s,j}\|_{\Vb_\tau^{-1}}
    \le
    \|\phib_{s,j}\|_{\Vb_{s,j-1}^{-1}} .
\]
Moreover, \(\lambda\ge L_\phi^2\) and \(\|\phib_{s,j}\|_2\le L_\phi\) imply
\[
    \|\phib_{s,j}\|_{\Vb_{s,j-1}^{-1}}^2
    \le
    \frac{\|\phib_{s,j}\|_2^2}{\lambda}
    \le 1 .
\]
Therefore, the standard elliptical-potential (Lemma~\ref{aux_lem:elliptical-potential}) bound gives
\[
    \sum_{s=1}^{\tau}
    \sum_{j=m+1}^{m+L}
    \|\phib_{s,j}\|_{\Vb_\tau^{-1}}^2
    \le
    \sum_{s=1}^{\tau}
    \sum_{j=m+1}^{m+L}
    \|\phib_{s,j}\|_{\Vb_{s,j-1}^{-1}}^2
    \le
    2\log\!\left(
        \frac{\det(\Vb_\tau)}
        {\det(\lambda\Ib_d)}
    \right).
\]
Consequently,
\begin{equation}
\label{eq:confi_third_term}
    \left\|
        \sum_{s=1}^{\tau}
        \Phib_s(\etab_s+\zetab_s)
    \right\|_{\Vb_\tau^{-1}}
    \le
    \bar\Delta_m(\delta)
    \sqrt{
        2\tau L
        \log\!\left(
            \frac{\det(\Vb_\tau)}
            {\det(\lambda\Ib_d)}
        \right)
    } .
\end{equation}

Now define the good event
\[
    \Omega_{\mathrm{good}}(\delta)
    :=
    \Omega_{\mathrm{sn}}(\delta/2)
    \cap
    \Omega_{\mathrm{bias}}(\delta/2).
\]
By the bounds above and a union bound,
\[
    \mathbb P\!\left(\Omega_{\mathrm{good}}(\delta)\right)
    \ge 1-\delta .
\]
On \(\Omega_{\mathrm{good}}(\delta)\), combining
Eqs.~\eqref{eq:confi_tri_decomp}, \eqref{eq:confi_second_term}, and~\eqref{eq:confi_third_term}, and using \(\|\thetab_m^*\|_2\le S_m\), yields for all \(\tau\in[N]\),
\begin{align*}
    \left\|
        \widehat{\thetab}_{\tau}
        -
        \thetab_m^*
    \right\|_{\Vb_\tau}
    \le
    &\;
    \sqrt{\lambda}S_m
    +
    \sigma_{\Bcal}
    \sqrt{
        2\log\!\left(
            \frac{
                2\det(\Vb_\tau)^{1/2}
            }{
                \det(\lambda \Ib_d)^{1/2}\delta
            }
        \right)
    } \\
    &\quad
    +
    \bar\Delta_m(\delta/2)
    \sqrt{
        2\tau L
        \log\!\left(
            \frac{\det(\Vb_\tau)}
            {\det(\lambda\Ib_d)}
        \right)
    } .
\end{align*}
The right-hand side is exactly the confidence radius \(\gamma_\tau\) chosen in Lemma~\ref{thm:main-regret}. Hence,
\[
    \left\|
        \widehat{\thetab}_{\tau}
        -
        \thetab_m^*
    \right\|_{\Vb_\tau}
    \le
    \gamma_\tau
    \qquad
    \forall \tau\in[N],
\]
with probability at least \(1-\delta\). 
\end{proof}

\subsection{Proof of Lemma~\ref{thm:main-regret}}
\MainRegret*

\begin{proof}[Proof of Lemma~\ref{thm:main-regret}]
    Let \(N:=T/(m+L)\), and assume \(N\in\mathbb N\).
    Define the proxy regret
    \[
        \Rcal_{\mathrm{proxy}}
        :=
        \sum_{\tau=1}^{N}
        \left(
            \bar r_m(\Ub^*)
            -
            \bar r_m(\Ub_\tau)
        \right).
    \]

    On the confidence event, for any block \(\Ub\in\Ucal^{m+L}\),
    \[
        \bar r_m(\Ub)
        \le
        \mathrm{UCB}_{\tau-1}(\Ub),
    \]
    where
    \[
        \mathrm{UCB}_{\tau-1}(\Ub)
        :=
        \sum_{j=m+1}^{m+L}
        \left(
            \left\langle
                \phib_j(\Ub),
                \widehat{\thetab}_{\tau-1}
            \right\rangle
            +
            \gamma_{\tau-1}
            \left\|
                \phib_j(\Ub)
            \right\|_{\Vb_{\tau-1}^{-1}}
        \right).
    \]
    Since \(\Ub_\tau\) maximizes this UCB score,
    \[
        \bar r_m(\widetilde{\bb})
        \le
        \mathrm{UCB}_{\tau-1}(\widetilde{\bb})
        \le
        \mathrm{UCB}_{\tau-1}(\Ub_\tau).
    \]
    Moreover, the confidence event also implies
    \[
        \bar r_m(\Ub_\tau)
        \ge
        \mathrm{UCB}_{\tau-1}(\Ub_\tau)
        -
        2\gamma_{\tau-1}
        \sum_{j=m+1}^{m+L}
        \left\|
            \phib_j(\Ub_\tau)
        \right\|_{\Vb_{\tau-1}^{-1}} .
    \]
    Combining the two inequalities gives
    \[
        \bar r_m(\widetilde{\bb})
        -
        \bar r_m(\Ub_\tau)
        \le
        2\gamma_{\tau-1}
        \sum_{j=m+1}^{m+L}
        \left\|
            \phib_j(\Ub_\tau)
        \right\|_{\Vb_{\tau-1}^{-1}} .
    \]
    Summing over \(\tau=1,\ldots,N\) and using the monotonicity of \(\gamma_\tau\), we obtain
    \begin{equation}
        \Rcal_{\mathrm{proxy}}
        \le
        2\gamma_N
        \sum_{\tau=1}^{N}
        \sum_{j=m+1}^{m+L}
        \left\|
            \phib_j(\Ub_\tau)
        \right\|_{\Vb_{\tau-1}^{-1}} .
        \label{eq:proxy-regret-ucb-1}
    \end{equation}
    
    It remains to control the cumulative uncertainty term.
    For each block \(\tau\), define the within-block sequential Gram matrices
    \[
        \Vb_{\tau,j}
        :=
        \Vb_{\tau-1}
        +
        \sum_{i=m+1}^{j}
        \phib_i(\Ub_\tau)\phib_i(\Ub_\tau)^\top,
        \qquad
        j=m+1,\ldots,m+L,
    \]
    with the convention \(\Vb_{\tau,m}:=\Vb_{\tau-1}\).
    Then
    \[
        \left\|
            \phib_j(\Ub_\tau)
        \right\|_{\Vb_{\tau-1}^{-1}}
        =
        \left\|
            \phib_j(\Ub_\tau)
        \right\|_{\Vb_{\tau,j-1}^{-1}}
        +
        \left(
            \left\|
                \phib_j(\Ub_\tau)
            \right\|_{\Vb_{\tau-1}^{-1}}
            -
            \left\|
                \phib_j(\Ub_\tau)
            \right\|_{\Vb_{\tau,j-1}^{-1}}
        \right).
    \]
    We bound the two terms separately.
    
    First, by Cauchy--Schwarz and the standard elliptical-potential lemma (Lemma~\ref{aux_lem:elliptical-potential}),
    \begin{align*}
        \sum_{\tau=1}^{N}
        \sum_{j=m+1}^{m+L}
        \left\|
            \phib_j(\Ub_\tau)
        \right\|_{\Vb_{\tau,j-1}^{-1}}
        &\le
        \sqrt{
            NL
            \sum_{\tau=1}^{N}
            \sum_{j=m+1}^{m+L}
            \left\|
                \phib_j(\Ub_\tau)
            \right\|_{\Vb_{\tau,j-1}^{-1}}^2
        } \\
        &\le
        \sqrt{
            2NL
            \log\!\left(
                \frac{\det(\Vb_N)}
                {\det(\lambda\Ib_d)}
            \right)
        } \\
        &\le
        \sqrt{
            2Td
            \log\!\left(
                1+
                \frac{T L_\phi^2}{\lambda d}
            \right)
        },
    \end{align*}
    where the last inequality uses \(NL\le T\), \(\|\phib_j(\Ub_\tau)\|_2\le L_\phi\), and the determinant-trace inequality (Lemma~\ref{aux_lem:det-tr inequality}).
    
    Next, since \(\Vb_{\tau,j-1}\preceq \Vb_\tau\), we have
    \[
        \left\|
            \phib_j(\Ub_\tau)
        \right\|_{\Vb_{\tau,j-1}^{-1}}
        \ge
        \left\|
            \phib_j(\Ub_\tau)
        \right\|_{\Vb_{\tau}^{-1}} .
    \]
    Hence,
    \[
        \left\|
            \phib_j(\Ub_\tau)
        \right\|_{\Vb_{\tau-1}^{-1}}
        -
        \left\|
            \phib_j(\Ub_\tau)
        \right\|_{\Vb_{\tau,j-1}^{-1}}
        \le
        \left\|
            \phib_j(\Ub_\tau)
        \right\|_{\Vb_{\tau-1}^{-1}}
        -
        \left\|
            \phib_j(\Ub_\tau)
        \right\|_{\Vb_{\tau}^{-1}} .
    \]
    Let \(\lambda_{\tau,1},\ldots,\lambda_{\tau,d}\) be the eigenvalues of \(\Vb_\tau\).
    To control the discrepancy between the pre-block and within-block Gram matrices, we use the Gram-matrix telescoping inequality for block updates (Lemma~\ref{aux_lem:spectral-inequality-Grammatrix}):
    \begin{align*}
    &\sum_{\tau=1}^{N}
    \sum_{j=m+1}^{m+L}
    \left(
        \left\|
            \phib_j(\Ub_\tau)
        \right\|_{\Vb_{\tau-1}^{-1}}
        -
        \left\|
            \phib_j(\Ub_\tau)
        \right\|_{\Vb_{\tau,j-1}^{-1}}
    \right) \\
    &\quad\le
    \sum_{\tau=1}^{N}
    \sum_{j=m+1}^{m+L}
    \left(
        \left\|
            \phib_j(\Ub_\tau)
        \right\|_{\Vb_{\tau-1}^{-1}}
        -
        \left\|
            \phib_j(\Ub_\tau)
        \right\|_{\Vb_{\tau}^{-1}}
    \right) \\
    &\quad\le
    \sum_{\tau=1}^{N}
    \sum_{j=m+1}^{m+L}
    2
    \left\|
        \phib_j(\Ub_\tau)
    \right\|_2
    \sum_{k=1}^{d}
    \left(
        \lambda_{\tau-1,k}^{-1/2}
        -
        \lambda_{\tau,k}^{-1/2}
    \right) \\
    &\quad\le
    2LL_\phi
    \sum_{\tau=1}^{N}
    \sum_{k=1}^{d}
    \left(
        \lambda_{\tau-1,k}^{-1/2}
        -
        \lambda_{\tau,k}^{-1/2}
    \right) \\
    &\quad\le
    2LL_\phi
    \sum_{k=1}^{d}
    \left(
        \lambda_{0,k}^{-1/2}
        -
        \lambda_{N,k}^{-1/2}
    \right) \\
    &\quad\le
    \frac{2LL_\phi d}{\sqrt{\lambda}},
    \end{align*}
    where the last step uses \(\Vb_0=\lambda\Ib_d\).
    
    Combining the two bounds gives
    \begin{equation}
        \sum_{\tau=1}^{N}
        \sum_{j=m+1}^{m+L}
        \left\|
            \phib_j(\Ub_\tau)
        \right\|_{\Vb_{\tau-1}^{-1}}
        \le
        \frac{2LL_\phi d}{\sqrt{\lambda}}
        +
        \sqrt{
            2Td
            \log\!\left(
                1+
                \frac{T L_\phi^2}{\lambda d}
            \right)
        } .
        \label{eq:blockwise-uncertainty-bound}
    \end{equation}
    Substituting Eq.~\eqref{eq:blockwise-uncertainty-bound} into Eq.~\eqref{eq:proxy-regret-ucb-1} yields
    \[
        \Rcal_{\mathrm{proxy}}
        \le
        2\gamma_N
        \left[
            \frac{2LL_\phi d}{\sqrt{\lambda}}
            +
            \sqrt{
                2Td
                \log\!\left(
                    1+
                    \frac{T L_\phi^2}{\lambda d}
                \right)
            }
        \right].
    \]
    
    Finally, we relate the proxy regret to the cumulative regret.
    Using the finite-memory truncation and block decomposition argument from Proposition~\ref{prop:cyclic-approximation}, for the action sequence generated by Algorithm~\ref{alg:main-algorithm},
    \[
        \Rcal^{\mathrm{pseudo}}_T(\pi)
        \le
        2T\Delta_m
        +
        \frac{2Tm}{m+L}R_m
        +
        \Rcal_{\mathrm{proxy}} \, .
    \]
    This concludes the first part of theorem.
    
    For the second part, we simplify the regret bound under a specific choice of \(m\) and \(L\).
    Under Assumption~\ref{assm:bounded-action}, the lifted feature satisfies
    \[
        \|\phib_j(\Ub)\|_2^2
        =
        \sum_{s=1}^{m}
        \|\ub_{j-s}\otimes \ub_j\|_2^2
        \le
        \sum_{s=1}^{m}
        \|\ub_{j-s}\|_2^2
        \|\ub_j\|_2^2
        \le m .
    \]
    Hence, we may take \(L_\phi\le \sqrt m\).
    Moreover, by Assumption~\ref{assm:stability},
    \[
        S_m^2
        =
        \|\thetab_m^*\|_2^2
        =
        \sum_{h=0}^{m-1}
        \|\Gb_h\|_F^2
        =
        \sum_{h=0}^{m-1}
        \|\Cb\Ab^h\Bb\|_F^2
        \le
        \|\Cb\|_2^2
        \|\Bb\|_F^2
        M_\rho^2
        \sum_{h=0}^{m-1}\rho^{2h}.
    \]
    Thus,
    \[
        S_m
        \le
        \frac{
            \|\Cb\|_2\|\Bb\|_F M_\rho
        }{
            \sqrt{1-\rho^2}
        },
    \]
    so \(S_m\) is uniformly bounded in \(m\) by a problem-dependent constant.
    
    Choose
    \[
        m=-\frac12\log_\rho T,
        \qquad
        L=\frac{\sqrt{mT}}{d^{3/4}}-m,
        \qquad
        d=p^2m,
    \]
    up to integer rounding, and set \(\lambda=L_\phi^2\).
    Then \(\rho^m=T^{-1/2}\). Since
    \[
        \Delta_m
        \le
        \frac{\|\Cb\|_2\|\Bb\|_2M_\rho}{1-\rho}\rho^m,
    \]
    we have
    \[
        T\Delta_m
        =
        \widetilde{\Ocal}(\sqrt T).
    \]
    Similarly, from the definition of \(\bar\Delta_m(\delta)\), we have
    \[
        \bar\Delta_m(\delta)
        =
        \widetilde{\Ocal}(T^{-1/2}).
    \]
    
    We next bound the confidence radius. Using \(NL\le T\), \(L_\phi\le\sqrt m\), \(\lambda=L_\phi^2\), and the uniform bound on \(S_m\), we obtain
    \[
        \gamma_N
        =
        \widetilde{\Ocal}
        \left(
            \sqrt d
            +
            \sqrt{\lambda}S_m
            +
            \bar\Delta_m(\delta)\sqrt{Td}
        \right)
        =
        \widetilde{\Ocal}(\sqrt d).
    \]
    Indeed, \(\sqrt{\lambda}S_m\le \sqrt m\,S_m\), which is dominated by \(\sqrt d=p\sqrt m\) up to problem-dependent constants, and
    \[
        \bar\Delta_m(\delta)\sqrt{Td}
        =
        \widetilde{\Ocal}(\sqrt d).
    \]
    
    The block-boundary term satisfies
    \[
        \frac{Tm}{m+L}R_m
        =
        \widetilde{\Ocal}
        \left(
            \frac{Tm}{L}
        \right)
        =
        \widetilde{\Ocal}
        \left(
            d^{3/4}\sqrt{mT}
        \right),
    \]
    where we used \(R_m=\Ocal(1)\) and \(L\asymp \sqrt{mT}/d^{3/4}\).
    
    Finally, the statistical term in Lemma~\ref{thm:main-regret} is bounded as
    \[
    \begin{aligned}
        \gamma_N
        \left[
            \frac{LL_\phi d}{\sqrt\lambda}
            +
            \sqrt{
                Td\log\left(
                    1+\frac{TL_\phi^2}{\lambda d}
                \right)
            }
        \right]
        &=
        \widetilde{\Ocal}
        \left(
            \sqrt d
            \left[
                Ld+\sqrt{Td}
            \right]
        \right)  \\
        &=
        \widetilde{\Ocal}
        \left(
            d^{3/4}\sqrt{mT}
            +
            d\sqrt T
        \right).
    \end{aligned}
    \]
    Combining the truncation error, the block-boundary error, and the statistical term yields
    \[
        \Rcal^{\mathrm{pseudo}}_T(\pi)
        =
        \widetilde{\Ocal}
        \left(
            d^{3/4}\sqrt{mT}
            +
            d\sqrt T
        \right).
    \]
    
    Since \(d=p^2m\) and
    $
        m
        =
        \frac{\log T}{2\log(1/\rho)},
    $
    the above bound can also be written, up to logarithmic factors in \(T\), as
    \[
        \Rcal^{\mathrm{pseudo}}_T(\pi)
        =
        \widetilde{\Ocal}
        \left(
            \left[
                \frac{p^{3/2}}{\{\log(1/\rho)\}^{5/4}}
                +
                \frac{p^2}{\log(1/\rho)}
            \right]
            \sqrt T
        \right),
    \]
    where \(\widetilde{\Ocal}\) hides logarithmic factors in \(T\) and \(\delta^{-1}\), as well as problem-dependent constants.
    In particular, for fixed \(p\) and fixed stability parameter \(\rho\), this gives
    \[
        \Rcal^{\mathrm{pseudo}}_T(\pi)=\widetilde{\Ocal}(\sqrt T).
    \]
\end{proof}

\subsection{Proof of Theorem~\ref{cor:actual-regret}}
\ActualRegret*
\begin{proof}[Proof of Theorem~\ref{cor:actual-regret}]

First, we introduce a lemma that shows, for block-committed policies, pseudo-regret and actual expected regret differ only by a boundary-noise term of order \(T/(m+L)\).

\begin{lemma}[Blockwise adaptation gap]
\label{lem:blockwise-adaptation-gap}
    Suppose Assumptions~1--3 hold, and assume for simplicity that
    \(N:=T/(m+L)\in\NN\).
    Consider any \((m+L)\)-block-committed policy \(\pi\), i.e., a policy that chooses the whole block
    \[
        \Ub_\tau=(\ub_{\tau,1},\ldots,\ub_{\tau,m+L})
    \]
    measurably with respect to
    $
        \Hcal_{\tau-1}:=\Fcal_{(\tau-1)(m+L)}
    $
    before any noise in block \(\tau\) is revealed, and then plays the block without within-block adaptation.
    Then
    \[
        \left|
        \regret_T(\pi)
        -
        \EE_\pi\!\left[\Rcal^{\mathrm{pseudo}}_T(\pi)\right]
        \right|
        \le
        \frac{\|\Cb\|_2 M_\rho}{1-\rho}
        \sqrt{
            \frac{p\,\lambda_{\max}(\Sigmab_w)}{1-\rho^2}
        } \frac{T}{m+L} \, .
    \]
\end{lemma}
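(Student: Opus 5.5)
The plan is to write the realized reward as $r_t = \mu_t(\ub_{1:t}) + \ub_t^\top \Cb \sum_{h\ge0}\Ab^h \wb_{t-1-h} + z_t$, which holds for any policy by the unrolling in the proof of Lemma~\ref{lem:reward-decomposition}. This gives
\[
\regret_T(\pi) - \EE_\pi\!\left[\Rcal^{\mathrm{pseudo}}_T(\pi)\right]
= -\EE\Big[\sum_{t=1}^T z_t\Big] - \EE\Big[\sum_{t=1}^T \ub_t^\top \Cb \xb^w_t\Big],
\]
where $\xb^w_t := \sum_{i<t}\Ab^{t-1-i}\wb_i$ is the noise-driven part of the latent state. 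The first expectation vanishes because $z_t$ is conditionally zero mean given $\Fcal_{t-1}$. For the second term, take $t=t_{\tau,j}$ and split $\xb^w_t$ into the in-block part $\sum_{i=t_{\tau,1}}^{t-1}\Ab^{t-1-i}\wb_i$ and the pre-block part $\Ab^{j-1}\xb^w_{t_{\tau,1}}$.

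For the in-block part, block-commitment is the key property. $\ub_{\tau,j}$ is $\Hcal_{\tau-1}$-measurable, and every $\wb_i$ with $i\ge t_{\tau,1}$ is revealed after $\Hcal_{\tau-1}$. Conditioning first on $\Fcal_{i-1}\supseteq\Hcal_{\tau-1}$ gives
\[
\EE\big[\ub_{\tau,j}^\top\Cb\Ab^{t-1-i}\wb_i \mid \Fcal_{i-1}\big]=0 .
\]
So the in-block contributions have zero expectation, and only the pre-block state remains:
\[
\Big|\EE\sum_{\tau,j}\ub_{\tau,j}^\top\Cb\Ab^{j-1}\xb^w_{t_{\tau,1}}\Big|
\le \sum_{\tau}\EE\Big\|\sum_{j=1}^{m+L}(\Ab^{j-1})^\top\Cb^\top\ub_{\tau,j}\Big\|_2\,\big\|\xb^w_{t_{\tau,1}}\big\|_2 .
\]
By Assumptions~\ref{assm:bounded-action} and~\ref{assm:stability}, the first factor is at most $\|\Cb\|_2M_\rho/(1-\rho)$ deterministically.

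Since $\ub_{\tau,j}$ may be correlated with $\xb^w_{t_{\tau,1}}$, I bound the product by this deterministic norm times $\EE\|\xb^w_{t_{\tau,1}}\|_2$. It remains to bound $\EE\|\xb^w_{t}\|_2\le\sqrt{\EE\|\xb^w_t\|_2^2}$ so as to produce exactly $\sqrt{p\lambda_{\max}(\Sigmab_w)/(1-\rho^2)}$. One route is to write $\EE\|\xb^w_t\|_2^2=\sum_i\EE\,\mathrm{tr}\big(\Ab^{t-1-i}\wb_i\wb_i^\top(\Ab^{t-1-i})^\top\big)$; the cross terms vanish by the martingale property. Conditional sub-Gaussianity gives $\EE[\wb_i\wb_i^\top\mid\Fcal_{i-1}]\preceq\Sigmab_w$, and the geometric sum yields a factor $M_\rho^2/(1-\rho^2)$.

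The main obstacle is the dimensional factor: the statement has $p$ rather than $n$, and no $M_\rho$ inside the root. I would instead bound the scalar pre-block term more cleverly. Write it as $\mathrm{tr}(\Cb\,\xb^w_{t_{\tau,1}}\ab_\tau^\top)$-type quantities and project onto $\Cb$: the relevant object is $\Cb\Ab^{j-1}\xb^w$, which lives in $\RR^p$. A covering or Cauchy–Schwarz over the $p$ coordinates of $\Cb\xb^w_{t_{\tau,1}}$ then gives $\sqrt{p\,\lambda_{\max}(\Sigmab_w)/(1-\rho^2)}$, with the $\|\Cb\|_2M_\rho$ constants absorbed in the prefactor. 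I expect the exact constants to need some rearrangement between where $M_\rho$ appears and the factor $M_\rho/(1-\rho)$ from the action sum.

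Summing over the $N=T/(m+L)$ blocks gives the claimed bound $\frac{\|\Cb\|_2M_\rho}{1-\rho}\sqrt{\frac{p\lambda_{\max}(\Sigmab_w)}{1-\rho^2}}\frac{T}{m+L}$. The theorem then follows. Lemma~\ref{thm:main-regret} gives $\Rcal^{\mathrm{pseudo}}_T\le B_T(\delta)$ on an event of probability $1-\delta$. Off that event, $\Rcal^{\mathrm{pseudo}}_T\le 2T\Delta_0$, since $|\mu_t|\le\Delta_0$. Hence $\EE[\Rcal^{\mathrm{pseudo}}_T]\le B_T(\delta)+2\delta T\Delta_0$. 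With $\delta=T^{-2}$ this extra term is $o(1)$, and the gap term is $\widetilde{\Ocal}(T/L)$, which is dominated by $d^{3/4}\sqrt{mT}$ under the stated choice of $L$. The final expression in $p$ and $\rho$ follows by substituting $d=p^2m$ and $m=\log T/(2\log(1/\rho))$ as in Lemma~\ref{thm:main-regret}.
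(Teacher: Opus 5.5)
Your reduction to the pre-block term is correct and coincides with the paper's: the unrolled reward, $\EE[z_t]=0$, and the vanishing of the in-block process-noise contribution because $\ub_{\tau,j}$ is $\Hcal_{\tau-1}$-measurable. The gap is the step you yourself flag as the ``main obstacle.'' That step produces the constant in the statement, and you leave it unresolved. The route you actually carry out is
$\bigl|\EE\sum_j \ub_{\tau,j}^\top\Cb\Ab^{j-1}\xb^w_{t_{\tau,1}}\bigr|\le\EE\bigl[\|\sum_j(\Ab^{j-1})^\top\Cb^\top\ub_{\tau,j}\|_2\,\|\xb^w_{t_{\tau,1}}\|_2\bigr]$.
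This spends $\Cb$ on the action factor and leaves the $n$-dimensional vector $\xb^w_{t_{\tau,1}}$, whose second moment is controlled only through $\sum_h\|\Ab^h\|_F^2$. It yields $\frac{\|\Cb\|_2M_\rho^2}{1-\rho}\sqrt{n\lambda_{\max}(\Sigmab_w)/(1-\rho^2)}$ per block, which does not imply the claimed bound in general (for instance whenever $n>p$). This is a dimension mismatch, not a matter of rearranging constants. Your proposed repair does not work as written either. A covering argument plays no role in a bound in expectation. ``Cauchy--Schwarz over the $p$ coordinates of $\Cb\xb^w_{t_{\tau,1}}$'' is not available, because $\Ab^{j-1}$ sits between $\Cb$ and $\xb^w_{t_{\tau,1}}$ and does not commute with $\Cb$.

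The missing idea is to avoid aggregating over $j$. For each position, set $\yb_{\tau,j}:=\Cb\Ab^{j-1}\xb^w_{t_{\tau,1}}=\sum_{h\ge j-1}\Cb\Ab^h\wb_{t_{\tau,j}-1-h}\in\RR^p$. You did name this object, but then switched to $\Cb\xb^w_{t_{\tau,1}}$. Bound $|\EE[\ub_{\tau,j}^\top\yb_{\tau,j}]|\le\EE\|\yb_{\tau,j}\|_2\le(\EE\|\yb_{\tau,j}\|_2^2)^{1/2}$ by Jensen. Orthogonality of the martingale differences and $\EE[\wb_i\wb_i^\top\mid\Fcal_{i-1}]\preceq\Sigmab_w$ give $\EE\|\yb_{\tau,j}\|_2^2\le\lambda_{\max}(\Sigmab_w)\sum_{h\ge j-1}\|\Cb\Ab^h\|_F^2$. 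Because $\Cb\Ab^h$ has only $p$ rows, $\|\Cb\Ab^h\|_F^2\le p\|\Cb\|_2^2M_\rho^2\rho^{2h}$. Hence $\EE\|\yb_{\tau,j}\|_2\le\|\Cb\|_2M_\rho\sqrt{p\lambda_{\max}(\Sigmab_w)/(1-\rho^2)}\,\rho^{j-1}$. The triangle inequality over $j$ then supplies $\sum_j\rho^{j-1}\le 1/(1-\rho)$, and summing over the $N$ blocks gives the statement. So $p$ comes from the row dimension of $\Cb\Ab^h$, and $M_\rho$ enters only once, through the noise tail. The factor $1/(1-\rho)$ comes from the decay of the pre-block tail across positions, not from the action sum. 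Your closing paragraph concerns the theorem rather than this lemma, and is fine.
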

Let \(\mathcal E^\mu_\delta\) be the high-probability event from
Lemma~\ref{thm:main-regret}. On \(\mathcal E^\mu_\delta\), we have
\[
    \Rcal^{\mathrm{pseudo}}_T(\pi) \le B_T(\delta).
\]
Moreover, since \(|\mu_t(\ub_{1:t})|\le \Delta_0\) for every action sequence,
\[
    0
    \le
    \Rcal^{\mathrm{pseudo}}_T(\pi)
    \le
    2T\Delta_0
\]
deterministically. Therefore,
\[
    \EE_\pi[\Rcal^{\mathrm{pseudo}}_T(\pi)]
    \le
    B_T(\delta)
    +
    2\delta T \Delta_0 .
\]
Combining this with Lemma~\ref{lem:blockwise-adaptation-gap} yields
\[
    \regret_T(\pi)
    \le
    B_T(\delta)
    +
    2\delta T \Delta_0
    +
    \frac{\|\Cb\|_2 M_\rho}{1-\rho}
    \sqrt{
        \frac{p\,\lambda_{\max}(\Sigmab_w)}{1-\rho^2}
    }
    \frac{T}{m+L}.
\]

For the second claim, observe that
\[
    \Delta_0
    :=
    \sum_{h=0}^\infty \|\Gb_h\|_2
    \le
    \frac{\|\Cb\|_2\|\Bb\|_2 M_\rho}{1-\rho}.
\]
The stated rate follows by taking \(\delta=T^{-2}\) and substituting the choices
of \(m\), \(L\), and \(\lambda\). In particular,
\[
    \frac{T}{m+L}
    \asymp
    \frac{d^{3/4}\sqrt T}{\sqrt m},
\]
so the additional adaptation-gap term is dominated by the leading
block-boundary and statistical terms up to logarithmic and problem-dependent factors.
\end{proof}

\subsection{Proof of Technical Lemmas}
\subsubsection{Proof of Lemma~\ref{lem:block_subgaussian}}
    \begin{proof}[Proof of Lemma~\ref{lem:block_subgaussian}]
    Fix \(\tau\in[N]\), \(\xib=(\xi_1,\ldots,\xi_L)^\top\in\RR^L\), and \(\alpha\in\RR\).
    For notational simplicity, write the informative position as \(j=m+\ell\), where \(\ell\in[L]\).
    By definition,
    \[
        \epsilon_{\tau,m+\ell}
        =
        z_{t_{\tau,m+\ell}}
        +
        \sum_{h=0}^{m+\ell-2}
        \ub_{\tau,m+\ell}^\top
        \Cb\Ab^h
        \wb_{t_{\tau,m+\ell}-1-h}.
    \]
    Hence,
    \[
        \xib^\top\epsilonb_\tau
        =
        \sum_{\ell=1}^L
        \xi_\ell z_{t_{\tau,m+\ell}}
        +
        \sum_{\ell=1}^L
        \sum_{h=0}^{m+\ell-2}
        \xi_\ell
        \ub_{\tau,m+\ell}^\top
        \Cb\Ab^h
        \wb_{t_{\tau,m+\ell}-1-h}.
    \]
    
    We reindex the process-noise terms by the within-block noise index \(q\in\{1,\ldots,m+L-1\}\), where the corresponding global index is \((\tau-1)(m+L)+q\).
    Then
    \[
        \xib^\top\epsilonb_\tau
        =
        \sum_{\ell=1}^L
        \xi_\ell z_{t_{\tau,m+\ell}}
        +
        \sum_{q=1}^{m+L-1}
        \left\langle
            \gb_{\tau,q}(\xib),
            \wb_{(\tau-1)(m+L)+q}
        \right\rangle,
    \]
    where
    \begin{equation*}
        \gb_{\tau,q}(\xib)
        :=
        \sum_{\ell=1}^L
        \mathbf 1\{q\le m+\ell-1\}
        \xi_\ell
        (\Ab^{m+\ell-1-q})^\top
        \Cb^\top
        \ub_{\tau,m+\ell}
        \in\RR^n \, .
    \end{equation*}
    
    Conditioned on \(\Hcal_{\tau-1}\), the actions in \(\Ub_\tau\) are fixed, and the noise variables in the current block are conditionally sub-Gaussian.
    The reward-noise part satisfies
    \begin{equation*}
        \mathbb E\!\left[
            \exp\!\left(
                \alpha
                \sum_{\ell=1}^L
                \xi_\ell z_{t_{\tau,m+\ell}}
            \right)
            \middle|
            \Hcal_{\tau-1}
        \right]
        \le
        \exp\!\left(
            \frac{\alpha^2\sigma_z^2}{2}
            \|\xib\|_2^2
        \right) \, .
    \end{equation*}
    For the process-noise part, Assumption~\ref{assm:noise} gives
    \[
        \mathbb E\!\left[
            \exp\!\left(
                \alpha
                \left\langle
                    \gb_{\tau,q}(\xib),
                    \wb_{(\tau-1)(m+L)+q}
                \right\rangle
            \right)
            \middle|
            \Hcal_{\tau-1}
        \right]
        \le
        \exp\!\left(
            \frac{\alpha^2}{2}
            \gb_{\tau,q}(\xib)^\top
            \Sigmab_w
            \gb_{\tau,q}(\xib)
        \right).
    \]
    Therefore, the process-noise contribution has variance proxy at most
    \[
        \lambda_{\max}(\Sigmab_w)
        \sum_{q=1}^{m+L-1}
        \|\gb_{\tau,q}(\xib)\|_2^2 .
    \]
    
    It remains to bound \(\sum_{q=1}^{m+L-1}\|\gb_{\tau,q}(\xib)\|_2^2\).
    Using \(\|\ub_{\tau,m+\ell}\|_2\le1\) and \(\|\Ab^h\|_2\le M_\rho\rho^h\), we obtain
    \begin{equation*}
        \|\gb_{\tau,q}(\xib)\|_2
        \le
        \sum_{\ell=1}^L
        \mathbf 1\{q\le m+\ell-1\}
        |\xi_\ell|
        \|\Cb\|_2
        M_\rho
        \rho^{m+\ell-1-q} \, .
    \end{equation*}
    This is bounded by the convolution of \((|\xi_\ell|)_{\ell=1}^L\) with the nonnegative sequence
    \[
        c_a := \|\Cb\|_2M_\rho\rho^a,
        \qquad a\ge0.
    \]
    By Young's convolution inequality,
    \[
        \sum_{q=1}^{m+L-1}
        \|\gb_{\tau,q}(\xib)\|_2^2
        \le
        \left(
            \sum_{a=0}^{\infty}
            c_a
        \right)^2
        \|\xib\|_2^2
        =
        \frac{\|\Cb\|_2^2M_\rho^2}{(1-\rho)^2}
        \|\xib\|_2^2 .
    \]
    Combining the reward-noise and process-noise bounds yields
    \[
        \mathbb{E}\!\left[
            \exp\!\left(\alpha \xib^\top \epsilonb_\tau\right)
            \,\middle|\,
            \Hcal_{\tau-1}
        \right]
        \le
        \exp\!\left(
            \frac{\alpha^2}{2}
            \left[
                \sigma_z^2
                +
                \lambda_{\max}(\Sigmab_w)
                \frac{\|\Cb\|_2^2 M_\rho^2}{(1-\rho)^2}
            \right]
            \|\xib\|_2^2
        \right) \, .
    \]
\end{proof}

\subsubsection{Proof of Lemma~\ref{lem:blockwise-self-normalized}}
\begin{proof}[Proof of Lemma~\ref{lem:blockwise-self-normalized}]
    Fix an arbitrary vector \(\vb\in\RR^d\).
    Since \(\Phib_\tau\) is \(\Hcal_{\tau-1}\)-measurable, the vector
    \(\Phib_\tau^\top\vb\in\RR^L\) is also \(\Hcal_{\tau-1}\)-measurable.
    By the conditional vector sub-Gaussianity assumption, applied with
    \(\xib=\Phib_\tau^\top\vb\), we have
    \[
        \mathbb E\!\left[
            \exp\!\left(
                \vb^\top\Phib_\tau\epsilonb_\tau
            \right)
            \,\middle|\,
            \Hcal_{\tau-1}
        \right]
        \le
        \exp\!\left(
            \frac{\sigma^2}{2}
            \|\Phib_\tau^\top\vb\|_2^2
        \right).
    \]
    Equivalently,
    \[
        \mathbb E\!\left[
            \exp\!\left(
                \vb^\top\Phib_\tau\epsilonb_\tau
                -
                \frac{\sigma^2}{2}
                \vb^\top\Phib_\tau\Phib_\tau^\top\vb
            \right)
            \,\middle|\,
            \Hcal_{\tau-1}
        \right]
        \le 1 .
    \]
    
    Define
    \[
        \Sbb_\tau
        :=
        \sum_{s=1}^{\tau}
        \Phib_s\epsilonb_s,
        \qquad
        \Vb_\tau
        :=
        \Vb_0+
        \sum_{s=1}^{\tau}
        \Phib_s\Phib_s^\top .
    \]
    For each fixed \(\vb\in\RR^d\), define
    \[
        M_\tau(\vb)
        :=
        \exp\!\left(
            \vb^\top\Sbb_\tau
            -
            \frac{\sigma^2}{2}
            \vb^\top
            (\Vb_\tau-\Vb_0)
            \vb
        \right).
    \]
    Then \((M_\tau(\vb))_{\tau\ge0}\) is a nonnegative supermartingale.
    Indeed,
    \[
    \begin{aligned}
        \mathbb E\!\left[
            M_\tau(\vb)
            \,\middle|\,
            \Hcal_{\tau-1}
        \right]
        &=
        M_{\tau-1}(\vb)
        \,
        \mathbb E\!\left[
            \exp\!\left(
                \vb^\top\Phib_\tau\epsilonb_\tau
                -
                \frac{\sigma^2}{2}
                \vb^\top\Phib_\tau\Phib_\tau^\top\vb
            \right)
            \,\middle|\,
            \Hcal_{\tau-1}
        \right] \\
        &\le
        M_{\tau-1}(\vb).
    \end{aligned}
    \]
    Moreover, \(M_0(\vb)=1\).
    
    We now use the standard mixture argument of \citet{abbasi2011improved}.
    Let \(\Lambdab\) be a Gaussian random vector, independent of all other randomness, with distribution
    \[
        \Lambdab
        \sim
        \mathcal N\!\left(
            \zero_d,\,
            \sigma^{-2}\Vb_0^{-1}
        \right),
    \]
    and let \(\mu\) denote its distribution.
    Define the mixture process
    \[
        M_\tau
        :=
        \mathbb E_{\Lambdab}
        \left[
            M_\tau(\Lambdab)
        \right]
        =
        \int_{\RR^d}
        M_\tau(\vb)\,\mu(d\vb).
    \]
    Since \(M_\tau(\vb)\ge0\), Fubini's theorem allows us to interchange the mixture integral and conditional expectation. Hence,
    \[
    \begin{aligned}
        \mathbb E[M_\tau\mid \Hcal_{\tau-1}]
        &=
        \mathbb E\!\left[
            \int_{\RR^d} M_\tau(\vb)\,\mu(d\vb)
            \,\middle|\,
            \Hcal_{\tau-1}
        \right] \\
        &=
        \int_{\RR^d}
        \mathbb E[
            M_\tau(\vb)\mid \Hcal_{\tau-1}
        ]
        \,\mu(d\vb) \\
        &\le
        \int_{\RR^d}
        M_{\tau-1}(\vb)\,\mu(d\vb)
        =
        M_{\tau-1}.
    \end{aligned}
    \]
    Therefore, \((M_\tau)_{\tau\ge0}\) is a nonnegative supermartingale.
    Also, since \(M_0(\vb)=1\) for every \(\vb\), we have \(M_0=1\).
    
    We compute \(M_\tau\) explicitly. The density of \(\Lambdab\) is
    \[
        f(\vb)
        =
        \frac{
            \det(\sigma^2\Vb_0)^{1/2}
        }{
            (2\pi)^{d/2}
        }
        \exp\!\left(
            -\frac{\sigma^2}{2}
            \vb^\top\Vb_0\vb
        \right).
    \]
    Hence,
    \[
    \begin{aligned}
        M_\tau
        &=
        \int_{\RR^d}
        \exp\!\left(
            \vb^\top\Sbb_\tau
            -
            \frac{\sigma^2}{2}
            \vb^\top(\Vb_\tau-\Vb_0)\vb
        \right)
        f(\vb)\,d\vb \\
        &=
        \frac{
            \det(\sigma^2\Vb_0)^{1/2}
        }{
            (2\pi)^{d/2}
        }
        \int_{\RR^d}
        \exp\!\left(
            \vb^\top\Sbb_\tau
            -
            \frac{\sigma^2}{2}
            \vb^\top\Vb_\tau\vb
        \right)
        d\vb .
    \end{aligned}
    \]
    Completing the square gives
    \[
        \vb^\top\Sbb_\tau
        -
        \frac{\sigma^2}{2}
        \vb^\top\Vb_\tau\vb
        =
        -\frac{\sigma^2}{2}
        \left\|
            \vb-\sigma^{-2}\Vb_\tau^{-1}\Sbb_\tau
        \right\|_{\Vb_\tau}^2
        +
        \frac{1}{2\sigma^2}
        \|\Sbb_\tau\|_{\Vb_\tau^{-1}}^2 .
    \]
    Therefore,
    \[
    \begin{aligned}
        M_\tau
        &=
        \frac{
            \det(\sigma^2\Vb_0)^{1/2}
        }{
            \det(\sigma^2\Vb_\tau)^{1/2}
        }
        \exp\!\left(
            \frac{1}{2\sigma^2}
            \|\Sbb_\tau\|_{\Vb_\tau^{-1}}^2
        \right) \\
        &=
        \frac{
            \det(\Vb_0)^{1/2}
        }{
            \det(\Vb_\tau)^{1/2}
        }
        \exp\!\left(
            \frac{1}{2\sigma^2}
            \|\Sbb_\tau\|_{\Vb_\tau^{-1}}^2
        \right).
    \end{aligned}
    \]
    
    By Ville's inequality for nonnegative supermartingales,
    \[
        \mathbb P\!\left(
            \exists \tau\ge0:
            M_\tau \ge \frac1\delta
        \right)
        \le \delta .
    \]
    Thus, with probability at least \(1-\delta\), for all \(\tau\ge0\),
    \[
        M_\tau < \frac1\delta .
    \]
    Using the explicit expression for \(M_\tau\), this implies
    \[
        \frac{
            \det(\Vb_0)^{1/2}
        }{
            \det(\Vb_\tau)^{1/2}
        }
        \exp\!\left(
            \frac{1}{2\sigma^2}
            \|\Sbb_\tau\|_{\Vb_\tau^{-1}}^2
        \right)
        <
        \frac1\delta .
    \]
    Equivalently,
    \[
        \|\Sbb_\tau\|_{\Vb_\tau^{-1}}^2
        \le
        2\sigma^2
        \log
        \left(
            \frac{
                \det(\Vb_\tau)^{1/2}
            }{
                \det(\Vb_0)^{1/2}\delta
            }
        \right).
    \]
    Taking square roots gives
    \[e
        \|\Sbb_\tau\|_{\Vb_\tau^{-1}}
        \le
        \sigma
        \sqrt{
            2
            \log
            \left(
                \frac{
                    \det(\Vb_\tau)^{1/2}
                }{
                    \det(\Vb_0)^{1/2}\delta
                }
            \right)
        } .
    \]
    This holds simultaneously for all \(\tau\ge0\), which completes the proof.
\end{proof}

\subsubsection{Proof of Lemma~\ref{lem:hp_noise_tail}}
    \begin{proof}[Proof of Lemma~\ref{lem:hp_noise_tail}]
    We first bound the action-tail bias \(\eta_{\tau,j}\).
    Since \(\|\ub_{\tau,j}\|_2\le1\) and \(\|\ub_{t_{\tau,j}-s}\|_2\le1\), we have
    \begin{align*}
        |\eta_{\tau,j}|
        &=
        \left|
            \sum_{s=m+1}^{\infty}
            \ub_{\tau,j}^\top
            \Gb_{s-1}
            \ub_{t_{\tau,j}-s}
        \right| \\
        &\le
        \sum_{s=m+1}^{\infty}
        \|\ub_{\tau,j}\|_2
        \|\Gb_{s-1}\|_2
        \|\ub_{t_{\tau,j}-s}\|_2 \\
        &\le
        \sum_{s=m+1}^{\infty}
        \|\Gb_{s-1}\|_2
        \\
        & =
        \sum_{h=m}^{\infty}
        \|\Gb_h\|_2
        \\
        & \le 
        \sum_{h=m}^{\infty}
        \|\Cb\|_2\|\Ab^h\|_2\|\Bb\|_2
        \\
        & \le 
        \frac{
            \|\Cb\|_2\|\Bb\|_2M_\rho
        }{
            1-\rho
        }
        \rho^m =: \Delta_m  \, .
    \end{align*}

    It remains to control the process-noise tail \(\zeta_{\tau,j}\).
    Because \(\ub_{\tau,j}\) may be chosen adaptively based on past rewards, we do not directly claim that \(\zeta_{\tau,j}\) is centered sub-Gaussian after conditioning on the selected action.
    Instead, define the vector-valued tail
    \[
        \yb_{\tau,j}
        :=
        \sum_{h=j-1}^{\infty}
        \Cb \Ab^h
        \wb_{t_{\tau,j}-1-h}
        \in \RR^p .
    \]
    Then, by \(\|\ub_{\tau,j}\|_2\le1\),
    \[
        |\zeta_{\tau,j}|
        =
        |\ub_{\tau,j}^\top \yb_{\tau,j}|
        \le
        \|\yb_{\tau,j}\|_2 .
    \]
    Hence, it suffices to control \(\|\yb_{\tau,j}\|_2\).
    
    Fix \((\tau,j)\). For any unit vector \(\vb\in\SSS^{p-1}\), we have
    \[
        \vb^\top \yb_{\tau,j}
        =
        \sum_{h=j-1}^{\infty}
        \vb^\top \Cb \Ab^h
        \wb_{t_{\tau,j}-1-h}.
    \]
    This is a centered sub-Gaussian random variable with variance proxy bounded by
    \begin{align*}
        \sum_{h=j-1}^{\infty}
        \lambda_{\max}(\Sigmab_w)
        \left\|
            (\Ab^h)^\top
            \Cb^\top
            \vb
        \right\|_2^2
        &\le
        \lambda_{\max}(\Sigmab_w)
        \sum_{h=j-1}^{\infty}
        \|\vb\|_2^2
        \|\Cb\|_2^2
        \|\Ab^h\|_2^2 \\
        &\le
        \lambda_{\max}(\Sigmab_w)
        \|\Cb\|_2^2 M_\rho^2
        \sum_{h=j-1}^{\infty}
        \rho^{2h}.
    \end{align*}
    Since \(j\ge m+1\), we have \(j-1\ge m\), and therefore
    \[
        \sum_{h=j-1}^{\infty}
        \rho^{2h}
        \le
        \sum_{h=m}^{\infty}
        \rho^{2h}
        =
        \frac{\rho^{2m}}{1-\rho^2}.
    \]
    Thus, every one-dimensional projection of \(\yb_{\tau,j}\) is sub-Gaussian with variance proxy at most
    \[
        \sigma_{\mathrm{tail}}^2
        :=
        \lambda_{\max}(\Sigmab_w)\,
        \frac{
            \|\Cb\|_2^2M_\rho^2
        }{
            1-\rho^2
        }
        \rho^{2m}.
    \]
    In particular, for each fixed \(\vb\in\SSS^{p-1}\) and every \(x>0\),
    \[
        \mathbb P
        \left(
            |\vb^\top \yb_{\tau,j}|>x
        \right)
        \le
        2\exp\!\left(
            -\frac{x^2}{2\sigma_{\mathrm{tail}}^2}
        \right).
    \]
    
    Let \(\Ncal_{1/2}\) be a \(1/2\)-net of \(\SSS^{p-1}\) with \(|\Ncal_{1/2}|\le 5^p\).
    Then,
    \[
        \|\yb_{\tau,j}\|_2
        \le
        2\max_{\vb\in\Ncal_{1/2}}
        |\vb^\top \yb_{\tau,j}| .
    \]
    Indeed, if \(\yb_{\tau,j}=0\), the claim is trivial.
    Otherwise, set
    \[
        \ub^* := \frac{\yb_{\tau,j}}{\|\yb_{\tau,j}\|_2}.
    \]
    By the definition of a \(1/2\)-net, there exists \(\vb\in\Ncal_{1/2}\) such that
    \(\|\ub^*-\vb\|_2\le 1/2\). Hence,
    \[
    \begin{aligned}
        \|\yb_{\tau,j}\|_2
        &=
        \langle \ub^*,\yb_{\tau,j}\rangle \\
        &=
        \langle \vb,\yb_{\tau,j}\rangle
        +
        \langle \ub^*-\vb,\yb_{\tau,j}\rangle \\
        &\le
        |\langle \vb,\yb_{\tau,j}\rangle|
        +
        \|\ub^*-\vb\|_2\|\yb_{\tau,j}\|_2 \\
        &\le
        \max_{\vb'\in\Ncal_{1/2}}
        |\langle \vb',\yb_{\tau,j}\rangle|
        +
        \frac12\|\yb_{\tau,j}\|_2 .
    \end{aligned}
    \]
    Rearranging gives the claim.
    
    Now, using the projection tail bound and a union bound over \(\Ncal_{1/2}\),
    \begin{align*}
        \mathbb P
        \left(
            \|\yb_{\tau,j}\|_2
            >
            \Delta'_m(\delta)
        \right)
        &\le
        \mathbb P
        \left(
            \max_{\vb\in\Ncal_{1/2}}
            |\vb^\top \yb_{\tau,j}|
            >
            \frac{\Delta'_m(\delta)}{2}
        \right) \\
        &\le
        \sum_{\vb\in\Ncal_{1/2}}
        \mathbb P
        \left(
            |\vb^\top \yb_{\tau,j}|
            >
            \frac{\Delta'_m(\delta)}{2}
        \right) \\
        &\le
        2|\Ncal_{1/2}|
        \exp\!\left(
            -
            \frac{(\Delta'_m(\delta))^2}
            {8\sigma_{\mathrm{tail}}^2}
        \right).
    \end{align*}
    Therefore, choosing
    \[
        \Delta'_m(\delta)
        :=
        2\sigma_{\mathrm{tail}}
        \sqrt{
            2\log\frac{2\cdot 5^p T}{\delta}
        }
    \]
    gives
    \[
        \mathbb P
        \left(
            \|\yb_{\tau,j}\|_2
            >
            \Delta'_m(\delta)
        \right)
        \le
        \frac{\delta}{T}.
    \]
    Since \(|\zeta_{\tau,j}|\le \|\yb_{\tau,j}\|_2\), we also have
    \[
        \mathbb P
        \left(
            |\zeta_{\tau,j}|
            >
            \Delta'_m(\delta)
        \right)
        \le
        \frac{\delta}{T}.
    \]
    
    Since the number of informative indices is
    \[
        NL
        =
        \frac{T}{m+L}L
        \le
        T,
    \]
    a union bound gives
    \[
        \mathbb P
        \left(
            \exists \tau\in[N],\ 
            \exists j\in\{m+1,\ldots,m+L\}:
            |\zeta_{\tau,j}|
            >
            \Delta'_m(\delta)
        \right)
        \le
        \delta.
    \]
    On the complement of this event, for all \(\tau\in[N]\) and \(j\in\{m+1,\ldots,m+L\}\),
    \[
        |\eta_{\tau,j}+\zeta_{\tau,j}|
        \le
        |\eta_{\tau,j}|+|\zeta_{\tau,j}|
        \le
        \Delta_m+\Delta'_m(\delta)
        =
        \bar\Delta_m(\delta).
    \]
\end{proof}

\subsubsection{Proof of Lemma~\ref{lem:blockwise-adaptation-gap}}
\begin{proof}[Proof of Lemma~\ref{lem:blockwise-adaptation-gap}]
    For any time \(t\), unrolling the latent dynamics gives
    \[
        r_t
        =
        \mu_t(\ub_{1:t})
        +
        z_t
        +
        \ub_t^\top \Cb
        \sum_{h=0}^{\infty}\Ab^h \wb_{t-1-h},
    \]
    where we use the convention \(\wb_s=\zero_n\) for \(s\le 0\).
    Since \(z_t\) is zero-mean and is revealed after \(\ub_t\) is chosen, \(\EE[z_t]=0\).
    Therefore,
    \[
        \regret_T(\pi)
        -
        \EE_\pi[\Rcal^{\mathrm{pseudo}}_T(\pi)]
        =
        -
        \EE_\pi
        \left[
            \sum_{t=1}^T
            \ub_t^\top \Cb
            \sum_{h=0}^{\infty}\Ab^h \wb_{t-1-h}
        \right].
    \]
    It remains to bound the absolute value of the right-hand side.
    
    Let \(t_{\tau,j}:=(\tau-1)(m+L)+j\), where
    \(\tau\in[N]\) and \(j\in[m+L]\). For the process-noise contribution at time
    \(t_{\tau,j}\), decompose
    \[
        \ub_{\tau,j}^\top \Cb
        \sum_{h=0}^{\infty}\Ab^h \wb_{t_{\tau,j}-1-h}
        =
        \underbrace{
        \ub_{\tau,j}^\top \Cb
        \sum_{h=0}^{j-2}\Ab^h \wb_{t_{\tau,j}-1-h}
        }_{=:a_{\tau,j}}
        +
        \underbrace{
        \ub_{\tau,j}^\top \Cb
        \sum_{h=j-1}^{\infty}\Ab^h \wb_{t_{\tau,j}-1-h}
        }_{=:b_{\tau,j}} .
    \]
    Here, \(a_{\tau,j}\) is the contribution of process noises generated within the current block,
    whereas \(b_{\tau,j}\) is the contribution of process noises generated before the current block.
    Because \(\Ub_\tau\) is \(\Hcal_{\tau-1}\)-measurable, all actions
    \(\ub_{\tau,j}\) are fixed before the noises in block \(\tau\) are revealed.
    By the martingale-difference property of the process noise and the tower property,
    \[
        \EE[a_{\tau,j}\mid \Hcal_{\tau-1}]=0,
        \qquad
        \text{and hence}
        \qquad
        \EE[a_{\tau,j}]=0.
    \]
    
    It remains to control \(b_{\tau,j}\).
    Let
    \[
        \yb_{\tau,j}
        :=
        \Cb\sum_{h=j-1}^{\infty}\Ab^h \wb_{t_{\tau,j}-1-h}.
    \]
    Then \(b_{\tau,j}=\ub_{\tau,j}^\top\yb_{\tau,j}\). Using \(\|\ub_{\tau,j}\|_2\le 1\),
    \[
        |\EE[b_{\tau,j}]|
        \le
        \EE\!\left[\|\yb_{\tau,j}\|_2\right].
    \]
    Since \(\wb_t\) is conditionally sub-Gaussian with variance proxy \(\Sigmab_w\),
    we have \(\EE[\wb_t\wb_t^\top\mid \Fcal_{t-1}]\preceq \Sigmab_w\).
    Moreover, cross terms between different time indices vanish by the martingale-difference property.
    Hence,
    \[
        \EE\!\left[\|\yb_{\tau,j}\|_2^2\right]
        \le
        \sum_{h=j-1}^{\infty}
        \EE\!\left[
            \|\Cb\Ab^h \wb_{t_{\tau,j}-1-h}\|_2^2
        \right].
    \]
    For each \(h\), by the covariance bound and Assumption~\ref{assm:stability},
    \[
        \EE\!\left[
            \|\Cb\Ab^h \wb_{t_{\tau,j}-1-h}\|_2^2
        \right]
        \le
        \lambda_{\max}(\Sigmab_w)\|\Cb\Ab^h\|_F^2
        \le
        p\,\lambda_{\max}(\Sigmab_w)\|\Cb\|_2^2M_\rho^2\rho^{2h}.
    \]
    Therefore,
    \[
        \EE\!\left[\|\yb_{\tau,j}\|_2^2\right]
        \le
        p\,\lambda_{\max}(\Sigmab_w)\|\Cb\|_2^2M_\rho^2
        \frac{\rho^{2(j-1)}}{1-\rho^2}.
    \]
    By Jensen's inequality,
    \[
        \EE\!\left[\|\yb_{\tau,j}\|_2\right]
        \le
        \|\Cb\|_2M_\rho
        \sqrt{
            \frac{p\,\lambda_{\max}(\Sigmab_w)}{1-\rho^2}
        }
        \rho^{j-1}.
    \]
    Summing over all positions in a block gives
    \[
        \sum_{j=1}^{m+L}
        |\EE[b_{\tau,j}]|
        \le
        \|\Cb\|_2M_\rho
        \sqrt{
            \frac{p\,\lambda_{\max}(\Sigmab_w)}{1-\rho^2}
        }
        \sum_{j=1}^{m+L}\rho^{j-1}
        \le
        \frac{\|\Cb\|_2M_\rho}{1-\rho}
        \sqrt{
            \frac{p\,\lambda_{\max}(\Sigmab_w)}{1-\rho^2}
        } .
    \]
    Finally, summing over the \(N=T/(m+L)\) blocks yields
    \[
        \left|
        \regret_T(\pi)
        -
        \EE_\pi[\Rcal^{\mathrm{pseudo}}_T(\pi)]
        \right|
        \le
        \frac{\|\Cb\|_2M_\rho}{1-\rho}
        \sqrt{
            \frac{p\,\lambda_{\max}(\Sigmab_w)}{1-\rho^2}
        }
        \frac{T}{m+L}.
    \]
\end{proof}

\section{Auxiliary Lemmas}
\begin{lemma}[Theorem 1 in~\citet{abbasi2011improved}] \label{aux_lem:self-normalized inequality}
    Let \(\{F_t\}_{t=0}^\infty\) be a filtration. Let \(\{\eta_t\}_{t=1}^\infty\) be a real-valued stochastic process such that \(\eta_t\) is \(F_t\)-measurable and \(\eta_t\) is conditionally \(R\)-sub-Gaussian for some \(R \ge 0\).
    Let \(\{X_t\}_{t=1}^\infty\) be an \(\mathbb{R}^d\)-valued stochastic process such that \(X_t\) is \(F_{t-1}\)-measurable. Assume that \(V\) is a \(d\times d\) positive definite matrix. For any \(t\ge 0\), define
    \[
    \overline V_t
    =
    V+\sum_{s=1}^t X_sX_s^\top,
    \qquad
    S_t
    =
    \sum_{s=1}^t \eta_s X_s.
    \]
    
    Then, for any \(\delta>0\), with probability at least \(1-\delta\), for all \(t\ge 0\),
    \[
    \|S_t\|_{\overline V_t^{-1}}^2
    \le
    2R^2\log\!\left(
    \frac{\det(\overline V_t)^{1/2}\det(V)^{-1/2}}{\delta}
    \right).
    \]
\end{lemma}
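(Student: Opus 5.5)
This is Theorem 1 of Abbasi-Yadkori et al., the scalar self-normalized bound. I will prove it by the method of mixtures, in the same way Lemma~\ref{lem:blockwise-self-normalized} is proved above. In fact the statement is the special case $L=1$ of Lemma~\ref{lem:blockwise-self-normalized}, after three identifications: take $\Phib_t = X_t\in\RR^{d\times 1}$, take $\epsilonb_t=\eta_t$, and replace $\Vb_0=\lambda\Ib_d$ by a general positive definite $V$. The only things to check are that the earlier argument never used $\Vb_0=\lambda \Ib_d$ beyond positive definiteness, and that the scalar sub-Gaussian hypothesis is the $L=1$ vector hypothesis.

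First I would fix $x\in\RR^d$. Since $X_t$ is $F_{t-1}$-measurable, conditional $R$-sub-Gaussianity applied with coefficient $x^\top X_t$ gives
\[
\EE\!\left[\exp\!\left(x^\top X_t\eta_t - \tfrac{R^2}{2}(x^\top X_t)^2\right)\,\middle|\,F_{t-1}\right]\le 1.
\]
Hence
\[
M_t(x)=\exp\!\left(x^\top S_t-\tfrac{R^2}{2}\|x\|_{\overline V_t - V}^2\right)
\]
is a nonnegative supermartingale with $M_0(x)=1$.

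Next I would mix over $x\sim\mathcal N(0,R^{-2}V^{-1})$, drawn independently of everything else. By Tonelli, the mixture $M_t=\int M_t(x)\,\mu(dx)$ is again a nonnegative supermartingale with $M_0=1$. Completing the square, exactly as in the proof of Lemma~\ref{lem:blockwise-self-normalized} with $\sigma=R$, gives
\[
M_t=\left(\frac{\det V}{\det \overline V_t}\right)^{1/2}\exp\!\left(\frac{1}{2R^2}\|S_t\|^2_{\overline V_t^{-1}}\right).
\]

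Finally, Ville's maximal inequality gives $\PP(\exists t\ge0: M_t\ge 1/\delta)\le\delta$. Taking logarithms on the complement yields the claimed bound simultaneously for all $t\ge0$.

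The main point needing care is the uniformity over all $t$, which could otherwise need a stopping-time argument. Ville's inequality handles it directly, since it applies to nonnegative supermartingales over an infinite horizon. The degenerate case $R=0$ is handled by the convention that $\eta_t=0$ almost surely, so $S_t=0$ and the bound is trivial; alternatively, one can let $R\downarrow0$.
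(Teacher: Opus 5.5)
Your proposal is correct and follows the same method-of-mixtures argument the paper gives for Lemma~\ref{lem:blockwise-self-normalized}: Gaussian mixing, completing the square, and Ville's inequality for uniformity in $t$. The paper itself only cites \citet{abbasi2011improved} for this statement. Your observation that it is the $L=1$ case of that lemma, with $\lambda\Ib_d$ replaced by an arbitrary positive definite $V$, is accurate, since that proof uses only positive definiteness of $\Vb_0$. Your separate handling of $R=0$ is also right and needed, because the mixing covariance $R^{-2}V^{-1}$ is undefined there. One wording fix: $\eta_t=0$ almost surely is a consequence of $0$-sub-Gaussianity, not a convention.
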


\begin{lemma}[Lemma 15 in~\citet{faury2020improved}] \label{aux_lem:elliptical-potential}
Let $\{x_s\}_{s=1}^{\infty}$ be a sequence in $\mathbb{R}^d$ such that
$\|x_s\|_2 \le X$ for all $s \in \mathbb{N}$, and let $\lambda$ be a non-negative scalar.
For $t \ge 1$, define
$
    V_t := \sum_{s=1}^{t-1} x_s x_s^\top + \lambda I_d \, .
$
The following inequality holds:
\[
    \sum_{t=1}^{T} \|x_t\|_{V_t^{-1}}^2
    \le
    2 \max\left(1, \frac{X^2}{\lambda}\right)
    \log \frac{\det(V_{T+1})}{\lambda^d} \, .
\]
\end{lemma}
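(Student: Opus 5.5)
The plan is to use the standard matrix determinant lemma argument and telescope. Throughout I assume \(\lambda>0\), since otherwise \(V_1=0\) is not invertible and the left-hand side is undefined; I also take \(X>0\), the case \(X=0\) being trivial. Set \(a_t:=\|x_t\|_{V_t^{-1}}^2\).

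\textbf{Step 1 (determinant recursion).} Since \(V_{t+1}=V_t+x_tx_t^\top\) with \(V_t\succ 0\), the matrix determinant lemma gives
\[
\det(V_{t+1})=\det(V_t)\,(1+a_t).
\]
Telescoping from \(t=1\) to \(T\), with \(\det(V_1)=\lambda^d\), yields
\[
\sum_{t=1}^T\log(1+a_t)=\log\frac{\det(V_{T+1})}{\lambda^d}.
\]
So it suffices to compare each \(a_t\) with \(\log(1+a_t)\).

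\textbf{Step 2 (uniform range of \(a_t\)).} Since \(V_t\succeq\lambda I_d\), we have \(a_t\le \|x_t\|_2^2/\lambda\le c:=X^2/\lambda\). Each \(a_t\) therefore lies in \([0,c]\).

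\textbf{Step 3 (scalar inequality).} I will show that for all \(y\in[0,c]\),
\[
y\le 2\max(1,c)\log(1+y).
\]
The function \(y\mapsto\log(1+y)\) is concave and vanishes at \(0\), so \(\log(1+y)\ge y\,\log(1+c)/c\) on \([0,c]\). It then remains to check that
\[
\frac{c}{\log(1+c)}\le 2\max(1,c).
\]
\begin{itemize}
\item If \(c\le1\): the map \(c\mapsto c/\log(1+c)\) is increasing, so it is at most \(1/\log 2<2\).
\item If \(c>1\): we have \(\log(1+c)>\log 2>1/2\), so \(c/\log(1+c)<2c\).
\end{itemize}

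\textbf{Step 4 (combine).} Applying Step 3 to each \(a_t\), summing over \(t\), and invoking Step 1 gives
\[
\sum_{t=1}^T a_t\le 2\max\!\left(1,\frac{X^2}{\lambda}\right)\log\frac{\det(V_{T+1})}{\lambda^d},
\]
which is the claim. The only step needing any care is the elementary constant check in Step 3, specifically obtaining the factor \(2\) uniformly in \(c\). The argument through the concavity chord ratio \(\log(1+c)/c\) handles both regimes at once.
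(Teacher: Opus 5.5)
Your proof is correct and follows the same route as the standard argument for this lemma; the paper gives no proof of its own and simply cites Faury et al.\ (2020). In both, the matrix determinant lemma telescopes $\sum_{t}\log(1+a_t)$ into $\log(\det V_{T+1}/\lambda^d)$, and an elementary scalar comparison on $[0,X^2/\lambda]$ finishes.

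The only difference is in that scalar step. The cited proof uses $a_t\le \max(1,c)\min(1,a_t)$ together with $\min(1,u)\le 2\log(1+u)$, whereas your concavity-chord bound gives the slightly sharper intermediate factor $c/\log(1+c)\le 2\max(1,c)$. Your remark that $\lambda>0$ is actually required is also correct, and harmless here, since the paper always takes $\lambda>0$ (with $\lambda\ge L_\phi^2$).
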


\begin{lemma}[Lemma 10 in~\citet{abbasi2011improved}] \label{aux_lem:det-tr inequality}
    Suppose $X_1, \ldots, X_t \in \RR^d$ and for any $1 \le s \le t$, $\| X_s \|_2 \le L$,
    Let $\Vb_t = \lambda \Ib_d + \sum_{s=1}^t X_s X_s^\top$ for some $\lambda > 0$.
    Then,
    \begin{equation*}
        \det(\Vb_t) \le (\lambda + t L^2/d)^d \, .
    \end{equation*}
\end{lemma}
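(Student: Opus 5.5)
The plan is the standard AM--GM argument on the spectrum of \(\Vb_t\). Since \(\lambda>0\) and each \(X_sX_s^\top\) is positive semidefinite, \(\Vb_t=\lambda\Ib_d+\sum_{s=1}^t X_sX_s^\top\) is symmetric positive definite. Its eigenvalues \(\lambda_1,\ldots,\lambda_d\) are therefore all positive, and \(\det(\Vb_t)=\prod_{k=1}^d\lambda_k\) while \(\mathrm{tr}(\Vb_t)=\sum_{k=1}^d\lambda_k\). The arithmetic--geometric mean inequality for nonnegative reals then gives
\[
    \det(\Vb_t)
    =
    \prod_{k=1}^d \lambda_k
    \le
    \left(\frac{1}{d}\sum_{k=1}^d\lambda_k\right)^d
    =
    \left(\frac{\mathrm{tr}(\Vb_t)}{d}\right)^d .
\]

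The second step is to bound the trace using linearity and \(\mathrm{tr}(X_sX_s^\top)=\|X_s\|_2^2\le L^2\):
\[
    \mathrm{tr}(\Vb_t)
    =
    d\lambda+\sum_{s=1}^t \|X_s\|_2^2
    \le
    d\lambda + tL^2 .
\]
Substituting into the previous display yields \(\det(\Vb_t)\le(\lambda+tL^2/d)^d\), since \(x\mapsto x^d\) is monotone on \([0,\infty)\).

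There is no real obstacle here. The only point needing care is that AM--GM requires nonnegative eigenvalues, which is exactly why we first note that \(\Vb_t\) is positive definite. In the applications above, the bound is used with \(X_s\) ranging over the \(NL\le T\) informative features \(\phib_j(\Ub_\tau)\) and \(L=L_\phi\). This gives \(\log\bigl(\det(\Vb_N)/\det(\lambda\Ib_d)\bigr)\le d\log\bigl(1+TL_\phi^2/(\lambda d)\bigr)\), as invoked in the proof of Lemma~\ref{thm:main-regret}.
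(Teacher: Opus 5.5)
Your proof is correct and is exactly the standard argument behind the cited Lemma 10 of \citet{abbasi2011improved}: AM--GM on the positive eigenvalues gives $\det(\Vb_t)\le(\mathrm{tr}(\Vb_t)/d)^d$, and linearity of the trace together with $\|X_s\|_2\le L$ gives $\mathrm{tr}(\Vb_t)\le d\lambda+tL^2$. The paper does not reprove this lemma but simply imports it, and your closing remark (taking $X_s$ to be the $NL\le T$ informative features with $L=L_\phi$ to obtain the $d\log(1+TL_\phi^2/(\lambda d))$ bound) matches how it is used in the proof of Lemma~\ref{thm:main-regret}.
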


\begin{lemma}[Lemma 3 in~\citet{jin2021shrinking}] \label{aux_lem:spectral-inequality-Grammatrix}
Let $A \in \mathbb{R}^{d \times d}$ be any symmetric positive definite matrix,
and let $u_1, u_2, \ldots, u_L \in \mathbb{R}^d$ be any vectors.
Let $\lambda_1, \lambda_2, \ldots, \lambda_d$ be the eigenvalues of $A$,
and let $\nu_1, \nu_2, \ldots, \nu_d$ be the eigenvalues of
$A + \sum_{k=1}^{L} u_k u_k^\top$.
We have, for any $x \in \mathbb{R}^d$ such that $\|x\|_2 = 1$,
\[
    \sqrt{x^\top A^{-1}x}
    -
    \sqrt{
        x^\top
        \left(
            A + \sum_{k=1}^{L} u_k u_k^\top
        \right)^{-1}
        x
    }
    \le
    \sum_{i=1}^{d} \frac{2}{\sqrt{\lambda_i}}
    -
    \sum_{i=1}^{d} \frac{2}{\sqrt{\nu_i}} .
\]
\end{lemma}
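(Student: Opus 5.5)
The plan is to bound the left-hand side directly by an operator norm and then by a trace. This gives the inequality with constant $1$ in place of $2$, and no reduction to rank-one updates is needed. Write $\Bb := A + \sum_{k=1}^L u_k u_k^\top$, so that $A \preceq \Bb$ and both matrices are symmetric positive definite. For a unit vector $x$,
\[
\sqrt{x^\top A^{-1}x} = \|A^{-1/2}x\|_2,
\qquad
\sqrt{x^\top \Bb^{-1}x} = \|\Bb^{-1/2}x\|_2 .
\]
By the reverse triangle inequality,
\[
\|A^{-1/2}x\|_2 - \|\Bb^{-1/2}x\|_2
\le
\big\|(A^{-1/2}-\Bb^{-1/2})x\big\|_2
\le
\big\|A^{-1/2}-\Bb^{-1/2}\big\|_2 ,
\]
where the last step uses $\|x\|_2 = 1$.

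The key step is to show that $D := A^{-1/2} - \Bb^{-1/2}$ is positive semidefinite. This follows from two operator-monotonicity facts.
\begin{itemize}
\item[(i)] $A \preceq \Bb$ implies $A^{1/2} \preceq \Bb^{1/2}$. This is the Löwner--Heinz inequality for the square root.
\item[(ii)] For positive definite $P \preceq Q$ we have $Q^{-1} \preceq P^{-1}$. To see this, note $Q^{-1/2} P Q^{-1/2} \preceq I$, so its inverse $Q^{1/2} P^{-1} Q^{1/2} \succeq I$, and conjugating by $Q^{-1/2}$ gives $P^{-1} \succeq Q^{-1}$.
\end{itemize}
Applying (ii) with $P = A^{1/2}$ and $Q = \Bb^{1/2}$ yields $\Bb^{-1/2} \preceq A^{-1/2}$, that is, $D \succeq 0$.

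Step (i) is the one nontrivial ingredient, and I would either cite it or include its short proof. Set $S = A^{1/2}$ and $R = \Bb^{1/2}$, and suppose $S \not\preceq R$. Then $S - R$ has a positive eigenvalue $\mu > 0$ with unit eigenvector $v$, so $Sv = Rv + \mu v$. Using this identity,
\[
v^\top S^2 v = \|Sv\|_2^2 = \|Rv\|_2^2 + 2\mu\, v^\top R v + \mu^2 > \|Rv\|_2^2 = v^\top R^2 v ,
\]
because $v^\top R v > 0$ and $\mu > 0$. This contradicts $S^2 = A \preceq \Bb = R^2$, so $S \preceq R$.

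Since $D \succeq 0$, its spectral norm is bounded by its trace:
\[
\|D\|_2 \le \operatorname{tr}(D) = \operatorname{tr}(A^{-1/2}) - \operatorname{tr}(\Bb^{-1/2}) = \sum_{i=1}^d \frac{1}{\sqrt{\lambda_i}} - \sum_{i=1}^d \frac{1}{\sqrt{\nu_i}} .
\]
The right-hand side is nonnegative, again because $D \succeq 0$. Multiplying it by $2$ only enlarges it, which gives the stated bound. The main obstacle is therefore step (i); everything else is elementary linear algebra.
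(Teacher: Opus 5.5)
The paper does not prove this lemma. It imports it from \citet{jin2021shrinking} and uses it only once, to control the mismatch between the pre-block and within-block Gram matrices in the uncertainty sum. Your argument is therefore a self-contained replacement, and it is correct. Write $B=A+\sum_k u_ku_k^\top$. The reverse-triangle step is sound. The conclusion $B^{-1/2}\preceq A^{-1/2}$ is sound: it follows from L\"owner--Heinz for the square root (your eigenvector proof is valid, and only $v^\top Rv\ge 0$ is actually needed) together with antitonicity of the inverse. Finally, $\|D\|_2\le\operatorname{tr}(D)$ for $D\succeq 0$ is immediate.

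Compared with the stated version, your route gives three things.
\begin{itemize}
\item It yields constant $1$ instead of $2$. In the proof of Lemma~\ref{thm:main-regret}, this would halve the block-mismatch term from $2LL_\phi d/\sqrt{\lambda}$ to $LL_\phi d/\sqrt{\lambda}$.
\item It never uses the rank-$L$ form of the update, so the inequality holds for any $B\succeq A$.
\item It identifies the right-hand side as $\operatorname{tr}(A^{-1/2})-\operatorname{tr}(B^{-1/2})$. This makes it transparent that the eigenvalue pairing is irrelevant, and that the telescoping over blocks in the main proof is just a telescoping of traces.
\end{itemize}
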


\section{Experimental Details}
\label{app:experiment-details}
In this appendix, we provide details of the experiments reported in Section~\ref{sec:experiments}.
\subsection{Implementation of $\algname$}

For the continuous hypercube \(\Ucal=[-1,1]^p\), there exists an optimal open-loop action sequence whose actions all lie at vertices of the hypercube~\citep{choi2026explorethencommit}.
Motivated by this property and to enable a direct comparison with the ETC baseline, we restrict the experimental action set to
\(\Ucal=\{-1,+1\}^p\).

Even with this restriction, exhaustive maximization of the block-UCB objective over all
\(2^{p(m+L)}\) candidate blocks is computationally prohibitive.
We therefore implement the oracle using a multi-start local-search procedure.
For each restart, we initialize a block by sampling its actions uniformly from
\(\Ucal\), and repeatedly apply single-coordinate sign flips whenever they improve the block-UCB objective.
The local search terminates when no further improving flip is found.
We repeat this procedure from multiple random initializations and select the locally optimized block with the largest UCB objective among all restarts.

\subsection{Additional Experiments}

\paragraph{Environment.}
We further evaluate \(\algname\) on a structured delayed latent-dynamics environment designed to isolate the effects of delayed action influence and persistent latent memory.
The latent state is partitioned into \(D\) blocks of dimension \(p\), with \(n=pD\), and the action set is \(\Ucal=\{-1,+1\}^p\).
The system matrices are constructed so that each action is injected into the first latent block, propagated sequentially through the latent state with decay factor \(\rho\), and affects the reward only after reaching the final block:
\[
    \Ab
    =
    \rho
    \begin{bmatrix}
        \zero & \zero & \cdots & \zero \\
        \Ib_p & \zero & \cdots & \zero \\
        \zero & \Ib_p & \ddots & \vdots \\
        \vdots & \ddots & \ddots & \zero \\
        \zero & \cdots & \Ib_p & \zero
    \end{bmatrix},
    \qquad
    \Bb
    =
    \begin{bmatrix}
        \Ib_p\\
        \zero\\
        \vdots\\
        \zero
    \end{bmatrix},
    \qquad
    \Cb
    =
    \begin{bmatrix}
        \zero & \zero & \cdots & \Cb_D
    \end{bmatrix},
\]
where
\[
    \Cb_D
    =
    c_1\Ib_p
    +
    \frac{c_2}{p-1}
    \left(
        \mathbf 1\mathbf 1^\top-\Ib_p
    \right),
\]
with \(c_1=1\) and \(c_2=0.35\).
This construction creates an explicit delayed input--output dependence: increasing \(D\) delays the reward-relevant effect of an action, while increasing \(\rho\) makes this effect decay more slowly.
We use \(T=1000\), \(p=2\), \(D\in\{3,6\}\), and \(\rho\in\{0.3,0.95\}\), with Gaussian reward and process noise.

\begin{figure}[h]
    \centering
    \includegraphics[width=1\linewidth]{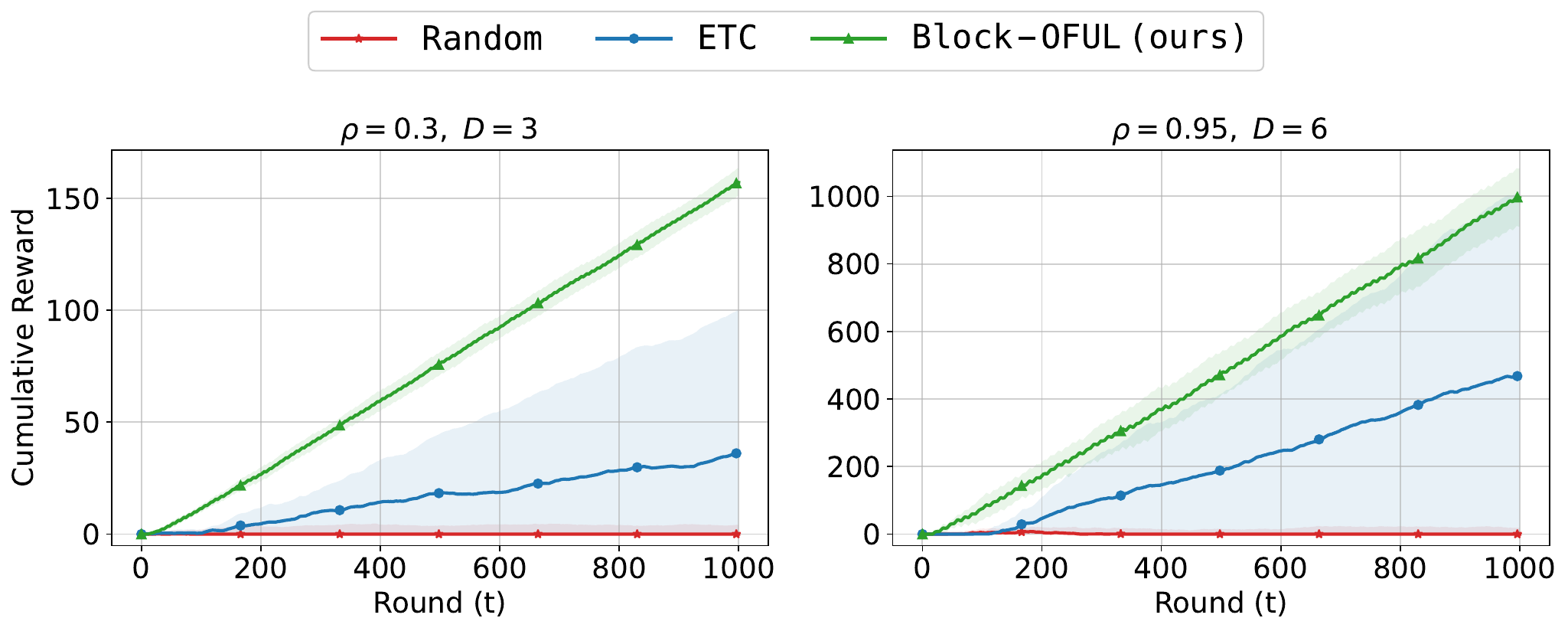}
    \caption{
    Cumulative reward on delayed latent linear-dynamics bandit instances with different levels of stability and delay.
    }
    \label{fig:additional-exp}
\end{figure}

Figure~\ref{fig:additional-exp} shows that \(\algname\) consistently achieves higher cumulative reward than ETC and the random baseline in the delayed latent-dynamics environments.
The advantage becomes more pronounced in the setting with larger \(D\) and \(\rho\) close to one, where useful reward information arrives only after a longer delay and the effects of past actions persist for longer.
This regime is particularly challenging for ETC because the quality of the committed action sequence depends entirely on the dynamics estimate obtained during its initial exploration phase.
In contrast, \(\algname\) continues to incorporate newly observed rewards and revise its block-level decisions throughout the horizon.
These results complement the main experiments by showing that the benefit of adaptive block-level optimism persists in environments with explicitly delayed and long-lasting action effects.

\end{document}